\newcommand{\eqrefn}[1]{Eq.~(\ref{#1})}
\newcommand{\attn}{\mathbf{attn}}
\newcommand{\Rb}{\mathbb{R}}
\newcommand{\Pb}{\mathbb{P}}
\newcommand{\Sc}{{\mathcal{S}}}
\newcommand{\Eb}{\mathbb{E}}
\newcommand{\Gc}{\mathcal{G}}
\newcommand{\RM}[1]{\left(\romannumeral#1\right)}
\newcommand{\norm}[1]{\left\|#1\right\|}
\newcommand{\name}{kernel-guided }
\newcommand{\Name}{Kernel-Guided }
\newcommand{\abname}{KG-MI }
\DeclareSymbolFont{ssfletters}{OT1}{cmss}{m}{n}
\DeclareMathSymbol{\ssfPi}{0}{ssfletters}{'005}
\theoremstyle{plain}
\newtheorem{theorem}{Theorem}[section]
\newtheorem{lemma}[theorem]{Lemma}
\newtheorem{corollary}[theorem]{Corollary}
\newtheorem{definition}{Definition}[section]
\newtheorem{assumption}{Assumption}[section]
\newtheorem{property}{Property}
\theoremstyle{remark}
\newtheorem{remark}{Remark}
\newcommand{\yl}[1]{{\color{orange}[Yingbin: #1]}}
\title{Transformers Provably Learn Directed Acyclic Graphs via Kernel-Guided Mutual Information}
\author{\textbf{Yuan Cheng}\IEEEauthorrefmark{1}\IEEEauthorrefmark{2}, 
\textbf{Yu Huang}\IEEEauthorrefmark{1}\IEEEauthorrefmark{3},
\textbf{Zhe Xiong}\IEEEauthorrefmark{4}, \textbf{Yingbin Liang}\IEEEauthorrefmark{5},
and \textbf{Vincent Y. F. Tan}\IEEEauthorrefmark{2}\\
\IEEEauthorblockA{
\IEEEauthorrefmark{2}  ISEP and Department of Mathematics, National University of Singapore, Singapore \\[0.5mm]
\IEEEauthorrefmark{3} Department of Statistics and Data Science, University of Pennsylvania, USA 
\IEEEauthorrefmark{4} Independent Researcher \\[0.5mm]
\IEEEauthorrefmark{5} Department of Electrical and Computer Engineering, The Ohio State University, USA\\[0.5mm]
}
{Emails:\, yuan.cheng@u.nus.edu, yuh42@wharton.upenn.edu, 
bearx6666@gmail.com, liang.889@osu.edu, vtan@nus.edu.sg}
\thanks{\IEEEauthorrefmark{1} Equal contribution.}
}
\begin{document}

\maketitle

\begin{abstract}
Uncovering hidden graph structures underlying real-world data is a critical challenge with broad applications across scientific domains. Recently, transformer-based models leveraging the attention mechanism have demonstrated strong empirical success in capturing complex dependencies within graphs. However, the theoretical understanding of their training dynamics has been limited to tree-like graphs, where each node depends on a single parent. Extending provable guarantees to more general directed acyclic graphs (DAGs)---which involve multiple parents per node---remains challenging, primarily due to the difficulty in designing training objectives that enable different attention heads to separately learn multiple different parent relationships.

In this work, we address this problem by introducing a novel information-theoretic metric: the {\em  kernel-guided mutual information} (KG-MI), based on the $f$-divergence. Our objective combines KG-MI with a multi-head attention framework, where each head is associated with a distinct marginal transition kernel to model diverse parent-child dependencies effectively. 
We prove that, given sequences generated by a $K$-parent DAG, training a single-layer, multi-head transformer via gradient ascent converges to the global optimum in polynomial time. Furthermore, we characterize the attention score patterns at convergence. In addition, when particularizing the $f$-divergence to the KL divergence, the learned attention scores accurately reflect the ground-truth adjacency matrix, thereby provably recovering the underlying graph structure. Experimental results validate our theoretical findings.
\end{abstract}
\begin{IEEEkeywords}
Transformers, Attention scores, Training dynamics, Directed acyclic graphs, Kernels, Mutual information  
\end{IEEEkeywords}
\section{Introduction}

Latent structures and interdependencies within data are increasingly common in real-world applications across diverse fields such as neuroscience, machine learning, and beyond~\cite{levin1998ecosystems, friston2010free, sutton2018reinforcement}. Uncovering these hidden structures is a crucial research challenge, as understanding the underlying graph relationships not only offers deeper insights into the data but also significantly boosts the performance of downstream tasks~\cite{wu2020comprehensive, gori2005new, scarselli2008graph, velivckovic2018graph, brody2022how}. 
Previous studies have modeled unknown latent dependencies using graphs, framing \emph{structure learning} as the task of identifying the true underlying graph from data generated from the graph. Classical approaches---such as constraint-based algorithms and continuous optimization methods---have achieved remarkable progress in this area~\cite{spirtes2000causation, chickering2002optimal, peters2017elements, koller2009probabilistic, zheng2018dags}.

% Recently, {\bf transformers}~\citep{vaswani2017attention}, which have revolutionized artificial intelligence across many domains~\cite{achiam2023gpt,dosovitskiy2020image,chen2021decision}, %following their remarkable success in natural language processing, 
% have emerged as a promising alternative for structure discovery tasks. Their attention-based mechanism naturally evaluates node relationships, offering a built-in approach to represent causal or dependency structures. Moreover, their flexible attention modules are highly effective at capturing long-range interactions, making them well-suited for recognizing complex data dependencies. Furthermore, self-attention computes pairwise interactions among all nodes in parallel, unlike traditional graph learning approaches that depend on sequential, iterative algorithm design. The self-attention thus results  in lower computational overhead and better scalability on modern hardware.

Recently, \textbf{transformers}~\cite{vaswani2017attention} have revolutionized artificial intelligence across numerous domains~\cite{achiam2023gpt,dosovitskiy2020image,chen2021decision}, and their success has inspired growing interest in applying them to structure discovery tasks. The attention-based mechanism of transformers inherently evaluates relationships between nodes, providing an elegant approach to representing causal or dependency structures. Additionally, their flexible attention modules excel at capturing long-range interactions, making them highly effective in recognizing complex data dependencies. Unlike traditional graph learning methods, which rely on sequential and iterative algorithms for optimization, self-attention computes pairwise interactions among all nodes in parallel. This parallelism leads to lower computational overhead and improved scalability on modern hardware, further enhancing their suitability for demanding applications.

Despite empirical successes demonstrating that transformers perform well across a variety of graph-related tasks~\citep{shehzad2024graph, kim2022pure, dwivedi2020generalization}, the theoretical understanding of why transformers are effective for such tasks remains limited. Recent work has begun to establish the expressive power of transformers in representing simple graph structures in Markovian settings~\citep{rajaraman2024transformers, hu2024limitation, zhou2024transformers}. However, expressive power only shows that there exist transformers capable of capturing graph structures; it does not guarantee that transformers trained via standard procedures will acquire such capabilities. A few more recent studies have investigated the training dynamics of transformers to explain how models trained on graph-structured data can learn to recover the underlying graph structure. 
However, these insights are largely restricted to specific settings: (i) tree-structured graphs with single-parent nodes~\citep{nichani2024transformers}, where analysis focused on single-head attention suffices, and (ii) specialized $n$-gram Markov models~\citep{chenunveiling}, where node dependencies are limited to fixed-distance $n$-parent nodes and the learning process relies solely on feed-forward networks, thereby bypassing the core attention mechanism. 

%As a result, the existing theoretical studies on tree-structured graphs and Markov models do not address Directed Acyclic Graph (DAG) structures, which are more general as follows. (i) In DAGs, nodes can have multiple parents (whereas nodes in tree structures can have only single parents), essential for representing shared dependencies or common submodules. As a salient example, in computational graphs (e.g., TensorFlow), DAGs are used because many operations reuse intermediate results—a tree cannot capture this. (ii) In DAGs, nodes can encode complex long-range causal relationships, whereas Markov models capture only immediate history dependencies.

Existing theoretical studies on the training dynamics of transformers for learning graph structures have predominantly focused on {\em tree-structured} graphs and {\em Markov} models. However, these frameworks fall short when applied to the more general and expressive class of {\bf Directed Acyclic Graphs (DAGs)}. DAGs are a crucial extension, offering a richer and more versatile representation of directed relationships. Specifically, unlike trees where each node has at most one parent, nodes in DAGs can have multiple parents, enabling the modeling of shared dependencies and common substructures---an essential feature for many real-world applications. An example is computational graphs in frameworks like TensorFlow, where DAGs naturally encode complex computation pipelines with reusable intermediate results. Such a structure cannot be adequately captured by trees. Furthermore, DAGs support long-range, intricate causal dependencies, whereas Markov models are limited to immediate or fixed-length historical dependencies. Recognizing and leveraging these unique features of DAGs is vital for advancing the understanding of transformer training dynamics on complex, real-world graph structures.

In this paper, we explore more general DAG  structures and examine the role of the attention mechanism in learning such graph representations. Specifically, our goal is to understand how transformers acquire this capability during training by analyzing their training dynamics through gradient ascent on a suitably chosen objective function. To model DAGs, a natural approach involves leveraging the multi-head attention mechanism, where multiple heads enable each node to attend to several potential parent nodes. However, this framework introduces two significant challenges that have not been fully addressed in prior studies focused on trees and Markov models:
\begin{enumerate}[wide, labelindent=0pt]
    \item[(i)] {\bf Head collapse:} Multiple attention heads may converge to attend to redundant or overlapping parents, leading to negligible attention on some parent nodes. The key question is: {\em How can we ensure that different attention heads attend to distinct parents rather than redundantly focusing on the same one?}
    \item[(ii)]  {\bf Performance metric:} While the mutual information has been a useful metric in previous work for identifying parent nodes, directly applying it to learning general DAGs can lead to head collapse. This raises the important question: {\em Which other information-theoretic metrics could serve as effective training objectives to guide transformers in accurately identifying and attending to all parent nodes within a graph?}
    %How can we design a training objective that effectively guides transformers to accurately identify and attend to all parent nodes within the graph?}
\end{enumerate}

\subsection{Main contributions}
To address the critical challenges outlined above, we introduce a novel framework for learning DAGs with proven performance guarantees. Our approach synergistically combines a multi-head attention mechanism with a newly proposed {\em kernel-guided mutual information} (KG-MI) metric, designed to ensure precise identification of all parent nodes while effectively preventing head collapse. Our key contributions are summarized as follows:
\begin{enumerate}[wide, labelindent=0pt]
    \setlength{\parskip}{0pt} 
    \setlength{\topsep}{0pt}
    \item {\bf New Performance Metric to Prevent Head Collapse:} We introduce $f$-KG-MI (\Cref{Def: Modified MI}), a novel mutual information measure that integrates marginal transition kernel-dependent information into the traditional $f$-mutual information derived from $f$-divergences. This metric explicitly enforces diversity among attention heads by aligning each head with a distinct marginal transition kernel, thereby ensuring each attention head captures unique parent-child relationships within the DAG. This innovation significantly enhances the model's capability to uncover comprehensive DAG structures without the collapse of attention heads into redundant or overlapping information.
    %To ensure different attention heads learn distinct parents in DAGs, without head collapse, we propose $f$-KG-MI (\Cref{Def: Modified MI}), which incorporates the marginal transition kernel-dependent information into the conventional $f$-mutual information (which is derived from the $f$-divergence). Using $f$-KG-MI, we design an objective that explicitly associates different attention heads to distinct marginal transition kernels, ensuring that a multi-head attention layer effectively captures diverse parent-child dependencies.
    %we design an objective that explicitly promotes diversity among attention heads, ensuring that they capture unique parent-child dependencies.
    \item {\bf Training Dynamics and Convergence Guarantee:} %Using estimated versions of the KG-MIs,   we train a one-layer \emph{softmax} attention model with the proposed objective function using gradient ascent. We analyze the training dynamics, prove that the objective function converges to the global optimum with polynomial-time efficiency, and characterize the patterns of attention scores upon convergence (\Cref{Thm: loss convergence,Thm: Attention Concentration}). In particular, when the $f$-divergence is particularized to the Kullback--Leibler (KL) divergence, leading to the use of the KL-KG-MI, the attention scores of the trained transformer converge to the adjacency matrix of the graph, indicating that the model successfully learns the structure of the underlying DAG. Our experiments corroborate our convergence results on the attention scores as in \Cref{fig:atten-score heatmap}, and show that our methods outperform—or at least match—classical baselines on certain DAGs as in \Cref{tab:Performance comparison}.
    By utilizing estimators of the $f$-KG-MI, we train a single-layer \emph{softmax} attention model via gradient ascent. We rigorously analyze the training process, proving that the objective function converges to the global optimum within polynomial time. Moreover, we characterize the resulting attention score patterns at convergence (\Cref{Thm: loss convergence,Thm: Attention Concentration}). Notably, when specializing the $f$-divergence to the Kullback--Leibler (KL) divergence (yielding KL-KG-MI), the trained transformer’s attention scores converge to the adjacency matrix of the underlying DAG. Our empirical results validate these theoretical insights---demonstrating convergence properties (\Cref{fig:atten-score heatmap}) and outperforming or at least matching various baselines (\Cref{tab:Performance comparison}).

    % \item {\bf Impact of Choice of $f$ on Convergence Rate:} \Cref{Thm: meta-population,Thm: meta-empirical} characterize how the convergence rate depends on the sequence length $T$ and \emph{information gap} $\Delta$, where $\Delta$ quantifies the  range at which the $f$-KG-MI varies across different node pairs. Notably, under the same data model, the choice of the function $f$ in KG-MI influences $\Delta$, thereby affecting the convergence rate. These results provide valuable insights into selecting an appropriate $f$-KG-MI to accelerate convergence during training. Our experiments further validate these theoretical findings, as demonstrated in \Cref{fig: compreh f mi}. 

\item 
% {\bf New Technical Developments:} {\bf (i) New concentration analysis:} DAGs are significantly more complex than tree and Markov models studied previously. Consequently, the concentration properties of DAGs—for example, the empirical estimates converging to a stationary distribution—depend heavily on these connection patterns. Our concentration analysis (see Appendix \ref{App: concentration}) is specifically developed to handle these intricate dependency structures.
{\bf Broader $f$-divergence framework:} %Our study leverages the general $f$-divergence, while previous studies (e.g., \citep{nichani2024transformers}) are only applicable to the Pearson's $\chi^2$-mutual information. Since the choice of $f$ also influences the convergence rate, this generalization also provides practical insights on choosing the best possible $f$ among a candidate set of  $f$'s to speed up training. We find that  the $\chi^2$-mutual information is often not the best choice in practice. 
 Unlike prior studies limited to specific divergences such as Pearson's $\chi^2$-divergence~\citep{nichani2024transformers}, our framework adopts the broader class of $f$-divergences. This generalization not only broadens theoretical applicability but also provides practical benefits: by analyzing various functions $f$, practitioners can identify the optimal one to speed up training. Our findings suggest that the $\chi^2$-divergences may not be the most effective choice, emphasizing the importance of tailoring $f$-divergence selection in practice.

    % \yh{I suggest removing this paragraph as a single point of contribution, which may be incorporated in previous points}
\end{enumerate}

\subsection{Related Works}
\textbf{Graph Learning.} Learning DAGs from data is an NP-hard problem due to the combinatorial nature of the acyclicity constraint~\citep{chickering2004large,chickering1996learning}. Through iterative conditional independence testing with expanding conditioning sets and incremental edge orientation, the Peter--Clark (or PC) algorithm progressively constructs the DAG \cite{spirtes1991algorithm}. Traditional score-based methods optimize a score over the set of DAGs, framing it as a \emph{combinatorial optimization} problem. Previous works have proposed various scoring functions, including BDe(u)~\citep{heckerman1995learning}, BGe~\citep{kuipers2014addendum}, and BIC~\citep{maxwell1997efficient}, and various different search algorithms~\citep{tsamardinos2006max,cooper1992bayesian,chickering2002optimal}. Zheng et al.~\citep{zheng2018dags} addressed this challenge by reformulating it as a \emph{continuous optimization problem} using a linear structural equation model. Subsequent work extended this approach to sparse optimization~\citep{yu2019dag}.  

% \textbf{Graph Learning:} Learning directed acyclic graphs (DAGs) from data without any assumptions is generally an NP-hard problem, due to combinatorial acyclicity constraint that is difficult to enforce efficiently~\citep{chickering2004large,chickering1996learning}. Tranditional score-based learning optimize a discrete score over the set of DAGs, which is a combinatorial optimization problem, previous work selected different kinds of score functions including BDe(u)~\citep{heckerman1995learning}, BGe~\citep{kuipers2014addendum}, and BIC~\citep{maxwell1997efficient}. \cite{zheng2018dags} convert it to a \textbf{continuous optimization problem} via the linear structure equation model. Follow-ups extend it to nonlinear model~\citep{zheng2020learning},  sparse optimization~\citep{yu2019dag}. Empirically, graph autoencoders have been widely used to discover the graph structure.   

\textbf{Learning Graph Structure with Transformers.} Graph autoencoders have been widely utilized for empirically uncovering graph structures~\cite{kipf2016variational}, yet their theoretical foundations remain limited. Recent research has examined the expressive capabilities of transformers in modeling simple graph structures with Markovian data~\cite{rajaraman2024transformers, hu2024limitation, zhou2024transformers}. However, these studies do not analyze training dynamics or provide convergence guarantees. A few works have explored how transformers learn graph structures, identifying the mutual information as a crucial factor influencing attention patterns~\cite{nichani2024transformers, chenunveiling, edelman2024evolution}. Notably, Nichani et al.~\citep{nichani2024transformers} analyzed gradient descent dynamics in single-parent tree structures. While Chen et al.~\cite{chenunveiling} extended this analysis to multi-parent structures, their focus was limited to $n$-gram Markov models, and the graph structures considered were modeled by feed-forward networks rather than attention layers.

\textbf{Training Dynamics of Transformers.} A growing body of work, initiated by  Jelassi et al.~\citep{jelassi2022vision}, aims to explore the training dynamics of transformers from diverse perspectives. Jelassi et al.~\citep{jelassi2022vision} investigated the inductive biases of Vision Transformers (ViTs) with a focus on specialized positional attention. Building on this, Li et al.~\citep{li2023theoretical} analyzed the training process of shallow ViTs in supervised classification settings and extended their study to in-context learning~\cite{linonlinear}, under strict assumptions on network initialization. Huang et al.~\citep{DBLP:conf/icml/HuangCL24} established the in-context convergence of one-layer softmax transformers trained with gradient descent, providing insights into attention dynamics throughout the training process. Huang et al.~\citep{huang2024transformers} further examined attention dynamics in self-supervised learning settings. Additionally, Yang et al.~\citep{yang2024context} extended the study of in-context learning to multi-head transformers with non-linear task functions. More recently, \citep{wen2024sparse,kim2024cot,Liang_neurips2025_cot,Liang_icml2025_parity} investigated transformer dynamics in the chain-of-thought reasoning setting.

\section{Problem Formulation}\label{sec:form}
In this section, we formally introduce our data model and the transformer architecture used to learn the underlying structure.

\subsection{Data Model}\label{subs: data model}

\textbf{Graph Structure.} 
We consider a {\em directed acyclic graph} (DAG) with $T$ nodes, denoted as $\mathcal{G} = ([T], E)$, where $[T] = \{1, 2, \ldots, T\}$ represents the set of nodes, and $E$ is the set of directed edges that captures the dependence structure of among the nodes. For any two nodes $i$ and $j$, the presence of a directed edge $(j, i) \in E$ indicates that $j$ points to $i$. Here, we assume all directed edges are ordered such that $j$ points to $i$ when $j <i$. In this case,  $j$ is referred to as a \emph{parent} of $i$, and the set of all parents of $i$ is denoted by $p(i) \subset [T]\setminus \{i\}$. %gather all parents of $i$ into the set $p(i)$. 
The \emph{in-degree} of node $i$ is $|p(i)|$, representing the number of directed edges pointing towards $i$. For any node $i$ with in-degree $K>0$, we denote all its parents as $p(i)^1, \ldots, p(i)^K$ and we assume the order  $p(i)^1< \ldots< p(i)^K$. If the in-degree of node $i$ is 0, then $i$ is a \emph{root} node. We denote the set of root nodes as $\mathcal{R} \subset [T]$. Otherwise, $i$ is a non-root node with $|p(i)|$ parents. A \emph{tree} is  a DAG in which $|p(i)| \leq 1$ for all $i \in [T]$. 

Any DAG is in one-to-one correspondence with an adjacency matrix $A \in \{0,1\}^{T \times T}$, in which $A_{j,i}=1$ if $(j,i) \in E$ and $A_{j,i}=0$, otherwise. An illustrative example of such a DAG is shown on the top left of \Cref{fig:graph-structure}.

\textbf{Transition Kernels over DAGs.} For any two finite sets $\Sc,\Sc'$, the set of \emph{probability mass functions}  (PMFs) supported on $\Sc$ is denoted as $\mathcal{P}(\Sc)=\{q \in \Rb_{+}^{|\Sc|}:q(s) \geq 0, \forall s \in \Sc\}$, and the set of conditioned PMFs on $\Sc'$ given $\Sc$ is denoted as $\mathcal{P}(\Sc|\Sc')=\{\{q(\cdot|s')\in \mathcal{P}(\Sc)\}_{s'\in \Sc'}\}$. 

All nodes in a DAG are in one-to-one correspondence with random variables, each taking values in a finite set $\mathcal{S} = \{1,2,\ldots, S\}$. Each directed edge represents a conditional dependence: a child node depends on its parent nodes. We assume that each node's in-degree is either $0$ or $K$.\footnote{Here we assume the in-degree of all non-root nodes is exactly $K$ for ease of analysis. Our results can be easily extended to the general case in which the in-degrees of the root nodes are not necessarily the same. We discuss this extension in detail in Appendix \ref{app: ext to non uni K}.} Given $K$   parents $(S_{p(i)^1},\ldots, S_{p(i)^K})$ of any node $S_i$, a transition kernel $\pi\in \mathcal{P}(\Sc|\Sc^K)$
assigns a probability of transitioning to the state $S_{i}$ from  $(S_{p(i)^1},\ldots, S_{p(i)^K})$. Formally, for any $s' \in \Sc, (s_1,\ldots,s_K) \in \Sc^K$, $\pi(s'|s_1,\ldots,s_K):=\Pb(S_i=s'|S_{p(i)^1}=s_1,\ldots, S_{p(i)^K}=s_K)$. 

The transition kernel $\Pi$ is randomly drawn according to a distribution $P_\Pi$. For any realization $\pi$ of $\Pi\sim P_\Pi$, we assume that the transition kernel $\pi$ is {\em stationary} in the sense that it remains identical across parent-child tuples, and all elements of the transition kernel $\pi$ are {\em positive}, i.e., $\pi(s' | s_1,\ldots, s_K)>0$ for all $s', s_1, \ldots, s_K$.

The stationary distribution associated with any $\pi \sim P_\Pi$ is defined using the following procedure. First, we extend $\pi$ to another transition kernel $\widetilde{\pi} \in \mathcal{P}(\Sc^K|\Sc^K)$ by setting $\widetilde{\pi}\bigl(s'_1, s'_2, \ldots, s'_K   |  s_1, s_2, \ldots, s_K \bigr)= \pi\bigl(s'_K | s_1, s_2, \ldots, s_K\bigr)$ for any tuple of states $(s'_1, s'_2, \ldots, s'_K)$ and $(s_1, s_2, \ldots, s_K)\in \Sc^K$ if $s'_i = s_{i+1}$ for $i = 1,\ldots,K-1$, and $0$, otherwise. From the positivity of $\pi$, any tuple of states $(s'_1, s'_2, \ldots, s'_K)\in\Sc^K$ is accessible from any other tuple  $(s_1, s_2, \ldots, s_K)\in\Sc^K$ under the newly constructed transition kernel $\widetilde{\pi}$. This guarantees that  $\widetilde{\pi}$ admits a unique {\em stationary distribution} $M_{\widetilde{\pi}} \in \mathcal{P}(\Sc^K)$. We denote $M_{\pi}:=M_{\widetilde{\pi}} \in \mathcal{P}(\Sc^K)$ as the \emph{stationary distribution   of  $\pi$}. Furthermore, for each $i \in [K]$, the  {\em marginal distribution} of $M_{\pi}$ on the $i$-th coordinate is defined as $$\mu_{\pi,i}(s)=\sum_{j\neq i}\sum_{s_j\in\Sc}M_{\pi}(s_1,\ldots,s_{i-1},s,s_{i+1},\ldots,s_K), \qquad\forall\, s\in \Sc.$$ Since all marginal distributions $\mu_{\pi,i}$ of $M_{\pi}$ are identical across $i \in [K]$, a fact we formally show  in Appendix~\ref{sec: MC concentration}, we omit the subscript~$i$, and denote the \emph{marginal stationary distribution} of $\pi$ simply as  $\mu_{\pi}$. We assume that the random variables corresponding to the $K$ parent root nodes are sampled from the joint distribution $M_{\pi}$, and as a result, each random variable of the root node has a marginal distribution of $\mu_\pi$. Given the stationary distribution $\mu_\pi$, we define the {\em marginal transition kernel} conditioned on the $\ell$-th parent as \begin{align}\pi^\ell(s'|s_\ell):=\sum_{j\neq\ell}\sum_{s_j\in\Sc}\frac{\pi(s'|s_1,\ldots,s_K)M_\pi(s_1,\ldots,s_K)}{\mu_\pi(s_\ell)}.\label{eqn:pi_ell}
\end{align} Accordingly, $\pi^\ell(s_{i}|s_{p(i)^\ell})$ characterizes the probability of transition from the $\ell$-th parent $s_{p(i)^\ell}$ to the child $s_{i}$ (marginalized over all the other $K-1$ parents).\footnote{The proof of our claims regarding the properties of $\tilde{\pi}$, the existence of the stationary distribution $M_{\tilde{\pi}}$  and the identity of the marginals $\mu_{\pi,i}$ can be found in Appendix~\ref{sec: MC concentration}.}   

%We note that an order-$K$ Markov chain is a special case of our DAG model, in which each non-root node $i$ has parents with indices $i -1, i-2, \ldots i - K$. % has exactly $K$ {\em immediately} preceding nodes as parents. % In the general DAG setting, however, the parent set need not consist of consecutive predecessors.

\textbf{Generation of Random Sequences.}
Our goal is to estimate the structure of the DAG using random sequences generated by the aforementioned DAGs. Without loss of generality, we assume that the first $K$ nodes $1, \ldots, K$ are roots. Then a length-$T$ random sequence $S_{1:T}$ is generated by 
\begin{enumerate}[wide, labelindent=0pt]
\setlength{\itemsep}{0pt}
    \setlength{\parskip}{0pt} 
    \setlength{\topsep}{0pt}
    \item First, sample $\Pi=\pi$ from a probability distribution over transition kernels $ P_\Pi$.
    \item For $i = 1, \dots, T $, if $i_1<\ldots<i_K$ are $K$ parent root nodes,    sample $(S_{i_1},\ldots,S_{i_K}) \sim M_\pi$, the stationary distribution. Else, $i$ is not a parent node, sample  $S_i \sim \pi(\cdot | S_{p(i)^{1}},\ldots,S_{p(i)^{K}})$, the transition kernel given the parents.
    % \item \textit{Draw } $S_{T+1},\ldots,S_{T+K}, S_{T+k} \sim \text{Unif}([S])$ for any $k$. \textit{and } $S_{T+K+1} \sim \pi(\cdot | S_{T+1:T+K})$. 
    \item {Output:} Sequence $S_{1:T} = (S_1,\ldots,S_T)$.
\end{enumerate}

To learn the DAG structure, we make the following mild assumption that the marginal transition kernels are distinct. This ensures that the dependencies between a child and its various parents can be effectively captured and distinguished.
 .
%\yl{explain why this assumption is made}\yc{done}
%make some assumptions on the non-symmetry of marginal transition kernel.
\begin{assumption}[Distinct Marginal Transition Kernels]\label{Assp: MC-non sym} 
For any transition kernel $\pi$ in the support $ \mathrm{supp}(P_\Pi)$  of $P_\Pi$, its marginal transition kernels are distinct, i.e., \(\pi^i \neq \pi^j\) for all $i\ne j$.
\end{assumption}

\subsection{Transformer Architecture}\label{subs: transformer architecture}

In this work, we consider the causal self-attention head mechanism as follows. 
\begin{definition}[Causal Self-Attention Head] Consider a $K$-head causal self-attention layer. For the $\ell$-th head with $\ell=1,\ldots,K$, let $W^\ell_V \in \Rb^{d_{\mathrm{o}} \times d_{\mathrm{e}}}$, $W^\ell_K \in \Rb^{d_{\mathrm{e}} \times d_{\mathrm{e}}}$, and $W^\ell_Q \in \Rb^{d_{\mathrm{e}} \times d_{\mathrm{e}}}$ be the value, key and query matrices, respectively. We   collect all these parameters into $\theta^\ell=(W^\ell_V,W^\ell_K,W^\ell_Q)$. Given any input embedding $E \in \Rb^{d_{\mathrm{e}} \times T}$, the output of the $\ell$-th head is
    \begin{align*}
        & \attn(E;\theta^\ell)=W^\ell_V E\cdot \mathrm{softmax}\left((\mathrm{Mask}((W^\ell_K E)^\top W^\ell_Q E))\right) \in \Rb^{d_{\mathrm{o}} \times T}, 
    \end{align*}
    where the $\mathrm{softmax}(\cdot)$ function is applied in a column-wise manner to a matrix, and for a vector $v$, $\mathrm{softmax}(v)_i=e^{v_i}/\sum_{j}e^{v_j}$, and the $\mathrm{Mask}(\cdot)$ function masks the lower triangular part of the matrix, i.e., for a matrix $A$, $\mathrm{Mask}(A)_{j,i}=A_{j,i}$ if $i > j$; $\mathrm{Mask}(A)_{j,i}=-\infty$, otherwise.   
\end{definition}

\textbf{Embeddings.} Given a random sequence $S_{1:T}=(S_1,\ldots,S_T)$, we consider natural one-hot embeddings for the state space, i.e., for any state $S_i$, the embedding is $e_{S_i}\in \Rb^S$, which takes 1 at the $S_i$-th entry and 0 elsewhere. We further add a natural one-hot position embedding $e_i \in \Rb^T$ to each token and obtain the embedded sequence: 
\begin{align*}
    E=E(S_{1:T})=\begin{bmatrix}
        X_1&\ldots&X_{T}\\
        e_1&\ldots&e_{T}
    \end{bmatrix} \in \Rb^{(S+T) \times T},
\end{align*}
where $X_i$ represents the embedding of  $S_i$. % We use $X_i$  and  $S_i$  interchangeably   in the following sections.

\textbf{Reparameterization.} For each head $\ell \in [K]$, we consolidate the query and key matrices into a single matrix denoted as $W^\ell_{KQ} \in \Rb^{(S+T)\times (S+T)}$, as in previous works for analyzing transformers \citep{DBLP:conf/icml/HuangCL24,zhang2024trained,jelassi2022vision}. Since our focus is on positional information, we restrict $W^\ell_{KQ}$ to a block sparse version whose only nonzero block is $Q^\ell \in \Rb^{T \times T}$ which takes the form
\begin{equation}
W^\ell_{KQ}=\begin{bmatrix}
0_{S \times S} & 0_{S \times T} \\
0_{T \times S} & Q^\ell
\end{bmatrix}. \label{eqn:WKQ}    
\end{equation}
We further set the output dimension $d_{\mathrm{o}}=T$ and the value matrix $W^\ell_V=(0_{T\times S}| I_{T})\in \Rb^{T \times (T+S)}$ such that $W^\ell_V E= I_T$ for any embedding $E$. With the reparameterization above, the overall transformer model can be recast using the parameterization  $\theta=\{\theta^\ell\}_{\ell=1}^K=\{Q^\ell\}_{\ell=1}^K$:
\begin{align*}
\textstyle   F_{\mathrm{CSA}}(E,\theta)
=\big(\attn(E;\theta^1),\ldots,\attn(E;\theta^K)\big) \in \Rb^{T \times KT},
\end{align*}
where $\attn(E;\theta^\ell)=\mathrm{softmax}\left(\mathrm{Mask}(Q^\ell)\right) \in \Rb^{T \times T}$. We abbreviate $\attn(E;\theta^\ell)$ as $\attn^{\ell}$ for simplicity.

%We aim to find the underlying causal structure of the input sequence, i.e., the correlation dependence of different positions, which can be captured by the attention score of the self-attention layer. 
%We aim to use the attention scores of the self-attention layer to estimate the adjacency matrix of the DAG as shown in \Cref{fig:graph-structure}. 

We aim to uncover the underlying DAG structure that generated the data sequence $S_{1:T}$. The DAG can be represented by the attention scores in the self-attention layer, as detailed in \Cref{sec:obj_alg}. In \Cref{fig:graph-structure}, we illustrate the use of these attention scores to estimate the adjacency matrix of the DAG.

\section{Novel Attention-Enabled Information-Theoretic Objective}

In this section, we design a parameterized objective function that guides the training of attention models to accurately identify the underlying graph structure.
%we design a parameterized objective function that can be optimized to identify the graph structure. 

\subsection{Preliminaries}\label{sec:preliminary}
We first define the (conditional) $f$-mutual information.
\begin{definition}
Let $f: \mathbb{R}^+ \to \mathbb{R}$ be a convex function with $f(1) = 0$. % We have the following definitions:
\begin{enumerate}[topsep=0pt, left=1pt]
\itemsep 0pt
    \item  The {\em   \(f\)-divergence} between two probability mass functions $P$ and $Q$ on a finite sample space  $\mathcal{S}$ is given by
\begin{align*}
     D_f(P \| Q) = \sum_{x \in \mathcal{S}}Q(x) f\Big(\frac{P(x)}{Q(x)}\Big) .
\end{align*}
\itemsep 0pt
    \item  The {\em  \(f\)-mutual information} between   two random variables $X$ and $Y$ with joint distribution $P_{X,Y}$ is
\begin{align*}
    &I_f(X; Y) = D_f(P_{X,Y} \| P_X \otimes P_Y).
\end{align*}
\end{enumerate}
For random sequences generated as described in \Cref{subs: data model},   we define the {\em $f$-mutual information} between two random variables \(S_i\) and \(S_j\) in $S_{1:T}$  conditioned on $\Pi$ as\footnote{Note that Eq.~\eqref{eqn:f_MI} (and similar $f$-mutual information-like quantities) should be written in their conditional forms as $I_f(S_i; S_j |\Pi)$ but for brevity, we suppress the dependence of various $f$-mutual information quantities on $\Pi$ in the sequel.} 
\begin{equation}
I_f(S_i; S_j ):= \Eb_{\Pi \sim P_\Pi} \left[ \sum_{s,s'\in\mathcal{S}} P_{S_i| \Pi}(s)P_{S_j| \Pi}(s) f\bigg(\frac{P_{S_i, S_j| \Pi}(s,s')}{P_{S_i| \Pi}(s)P_{S_j| \Pi}(s')} \bigg) \right],     \label{eqn:f_MI}
\end{equation}
where $P_{S_i, S_j|\Pi}$ is the joint probability distribution of nodes $S_i$ and $S_j$ given the transition kernel $\Pi$. %For simplicity, we omit conditioning on $\Pi$ in $P_{S_i|\Pi}$ and the subscript $\Pi \sim P_\Pi$ in the expectation whenever there is no ambiguity. 
\end{definition}

Conventionally, maximizing the sum of mutual information quantities has served as a fundamental principle for identifying parent nodes in tree structures. The celebrated Chow--Liu algorithm \cite{chow1968approximating} exemplifies this approach by leveraging the maximum likelihood principle to learn an optimal tree. It is demonstrated that this process is equivalent to constructing a maximum-weight spanning tree, where the edge weights correspond to pairwise mutual information terms. Furthermore, the data processing inequality \cite{cover1999elements} ensures that mutual information is non-increasing along directed paths. Consequently, for each node $i$, its direct parent is the node with which it shares the highest mutual information. This property similarly extends to the \(f\)-mutual information \cite{csiszar1972class}, indicating that maximizing \(f\)-mutual information provides a robust and principled criterion for identifying the set of parents of each node $i$.

%Conventionally, maximizing the (sum of) mutual information quantities has been used as a general principle to identify parent nodes in trees. Indeed, the celebrated Chow--Liu algorithm \citep{chow1968approximating} utilizes the maximum likelihood principle to learn  an optimal tree structure and it is shown that this is tantamount to constructing a maximum-weight spanning tree where the edge weights are the  pairwise \textit{mutual information} terms. In addition, by the data processing inequality~\citep{cover1999elements}, mutual information decreases along directed paths. Therefore, for each node $i$, its direct parent  has the largest mutual information with $S_i$. The same property extends to the $f$-mutual information~\citep{csiszar1972class}, suggesting that maximizing $f$-mutual information provides a principled criterion for identifying parent nodes.  

% the node that achieves the highest mutual information with $S_i$ must belong to the parent set $p(i)$. The data processing property also holds for the $f$-mutual information~\citep{csiszar1972class}, which suggests that using maximal $f$-mutual information can also be a principle to identify the parents for a child. 

For a tree structure where each node has at most a single parent, the above principle naturally suggests the following objective that leverages a single-head attention model for learning the parent of each node:
\begin{align}
    L_f(\theta)=\sum_{i,j \in [T]} I_f(S_i; S_j) \attn_{j,i}(\theta), \label{eq:losS_singleparent}
\end{align}
where we have omitted the head index $\ell$ on $\attn$ for the single-head attention model. 
% \yc{Their obj is not exactly the mutual information but is similar type}
Clearly, for each node $i$, maximizing the above objective function will drive the attention score for target $i$ to concentrate on $j$ that maximizes $I_f(S_i; S_j)$, and hence $j$ will be learned as the parent of $i$. Indeed, an objective function akin to that in Eq.~\eqref{eq:losS_singleparent} has been implicitly used in \citep{nichani2024transformers} for learning a {\em tree} structure (where each node has only one parent).\footnote{Nichani et al.~\citep{nichani2024transformers} studies a problem that combines tree-structure learning and regression prediction. By singling out their tree-learning task, the objective function can be seen to take a similar form to \eqrefn{eq:losS_singleparent}.}
% A more detailed discussion can be found in Appendix \ref{appd: Discussion with Previous Work}.

\subsection{Novel \Name Mutual Information for Multi-Parent Learning}\label{sec: Definition of KG-MI}

\begin{figure*}
    \centering
    \includegraphics[width=0.95\textwidth]{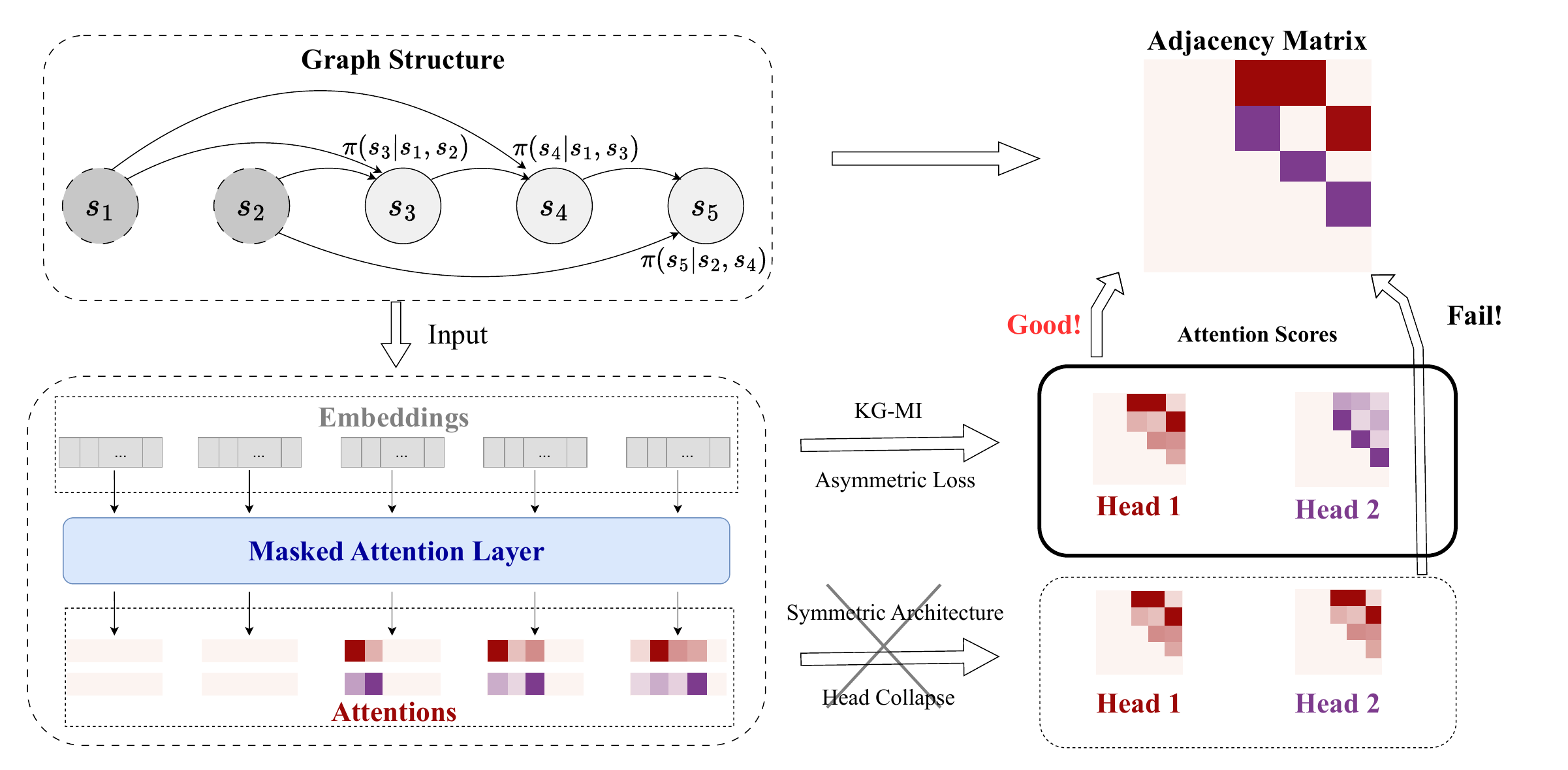}
    \caption{The process of using a transformer to learn the graph structure proceeds as follows. Random sequences generated by a DAG (top left) are fed into the transformer (bottom left), which produces multi-head attention scores (bottom right). These scores are then used to estimate the adjacency matrix of the DAG (top right). The DAG illustrated in the top left consists of five nodes with the edge set $E = \{(1,3), (1,4), (2,3), (2,5), (3,4), (4,5)\}$. It features two root nodes, labeled 1 and 2, and three non-root nodes labeled 3, 4, and 5, each with an in-degree of 2. The adjacency matrix at the top right highlights parent-child relationships, with positions marked in dark red and purple. The attention scores from the transformer's multi-head layers are visualized in the bottom right, where darker colors indicate higher attention values. Note that using a standard symmetric multi-head architecture results in head collapse, leading to erroneous learning. Conversely, our KG-MI objective effectively mitigates this issue, enabling successful graph structure learning.}
\label{fig:graph-structure}
\end{figure*}

% A straightforward extension of the above idea to multi-parent learning is as follows. For any node $i$, given the number of parents $K$, select the set of nodes exhibiting the largest mutual information with $i$. To accommodate multi-parent learning, multi-head transformer can be used, where each head identifies one parent for node $i$. However, such an approach is often not efficient\textcolor{blue}{Explain a bit more, why inefficient}. 

%{\bf Challenge of head collapse:} 

A natural extension of the idea in \Cref{sec:preliminary} to learn multi-parent DAGs  encounters a significant issue known as {\em  head collapse}, as discussed below. Consider a node $i$ with $K>1$ parents. If we employ a $K$-head attention model combined with the standard $f$-mutual information $ I_f(S_i; S_j)$  in a loss function similar to that in Eq.~\eqref{eq:losS_singleparent}, we have
\begin{align}
    L_f(\theta)=\frac{1}{KT}\sum_{\ell=1}^K\sum_{i,j \in [T]} I_f(S_i; S_j) \attn^\ell_{j,i}(\theta). \label{Eq: stra obj}
\end{align} 
In this setup, all attention heads indexed by $\ell$ tend to home in on and thus select the same parent node because they rely on the {\em identical} $f$-mutual information quantity $I_f(S_i; S_j)$ (which does not vary with the head $\ell$). Consequently, the heads fail to distinguish multiple distinct parents, as confirmed by our experiments. As illustrated in \Cref{fig:head collapse}, both KL and $\chi^2$-mutual information lead to head collapse: the multiple heads learn identical attention patterns, focusing exclusively on the same parent and thus not capturing other true parents.

\begin{figure}[ht]
    \centering
    \begin{tabular}{c@{\hspace{5pt}}c@{\hspace{5pt}}c@{\hspace{5pt}}c@{\hspace{5pt}}}
    DAG $\mathcal{G}$ & \quad ~~ Heatmap of KL (Head 1) & \quad ~~ Heatmap of KL (Head 2) &~~ \\ \includegraphics[width=0.26\textwidth]{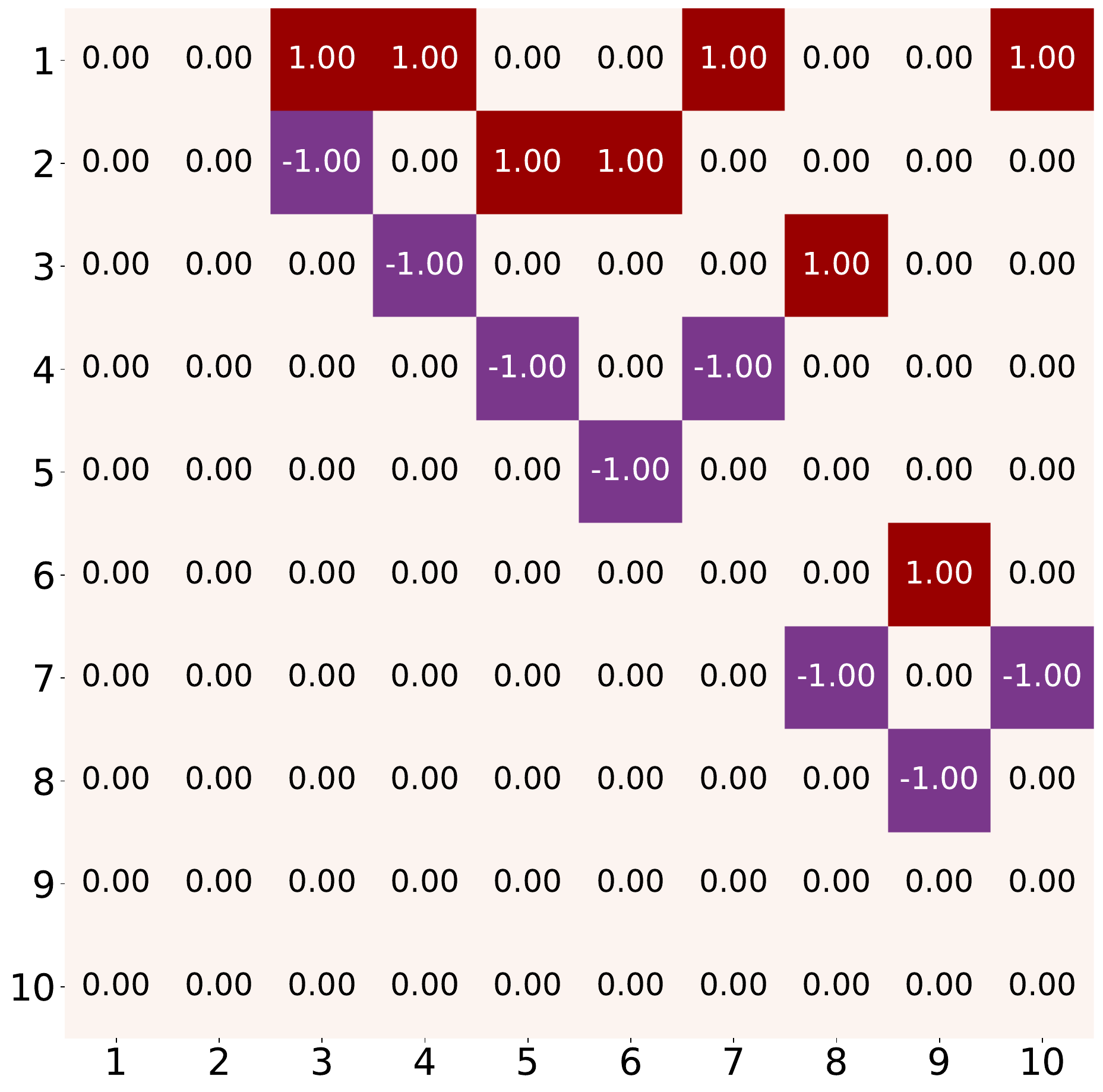} &\quad ~
    \includegraphics[width=0.26\textwidth]{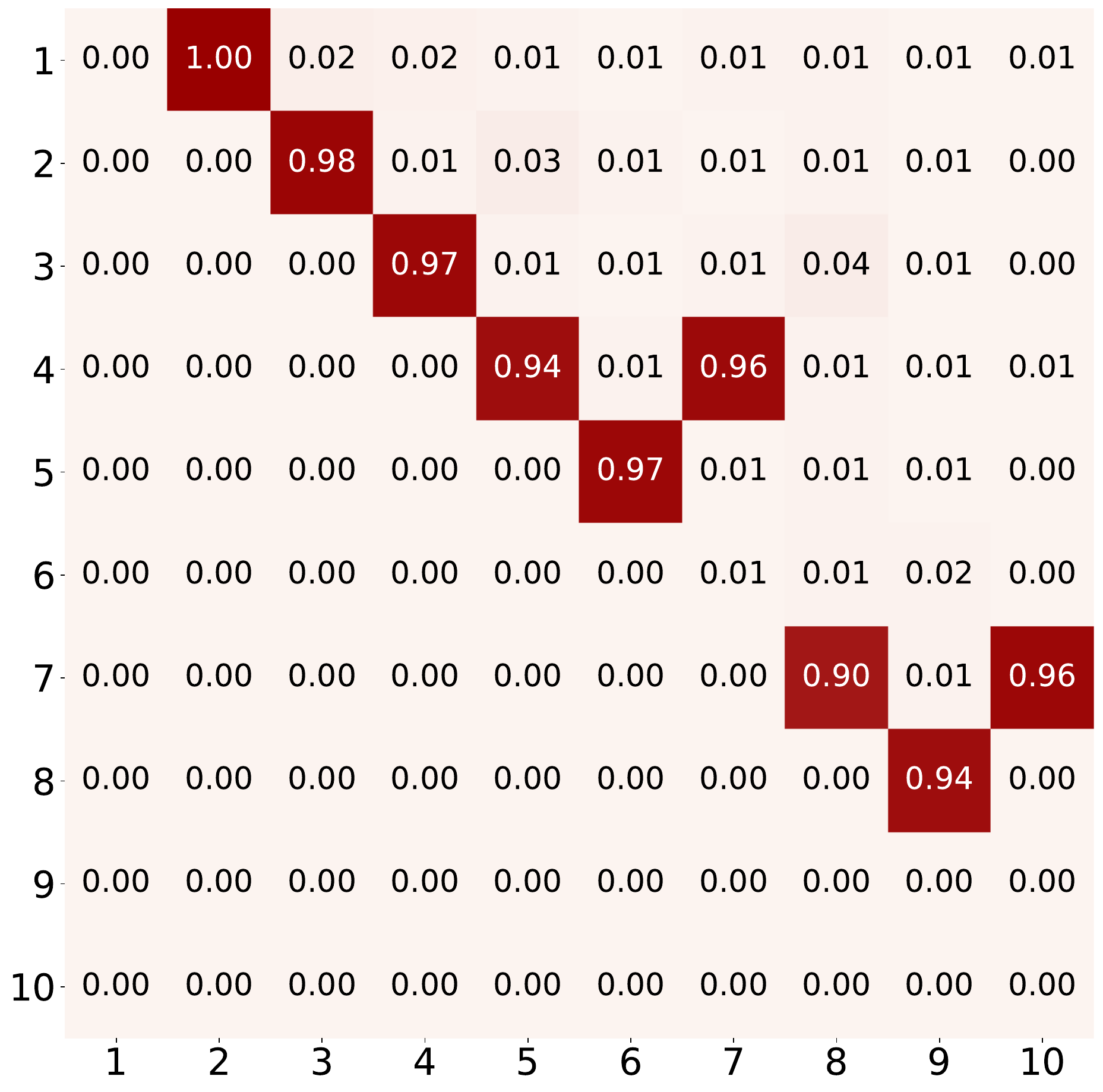} &\quad~
    \includegraphics[width=0.26\textwidth]{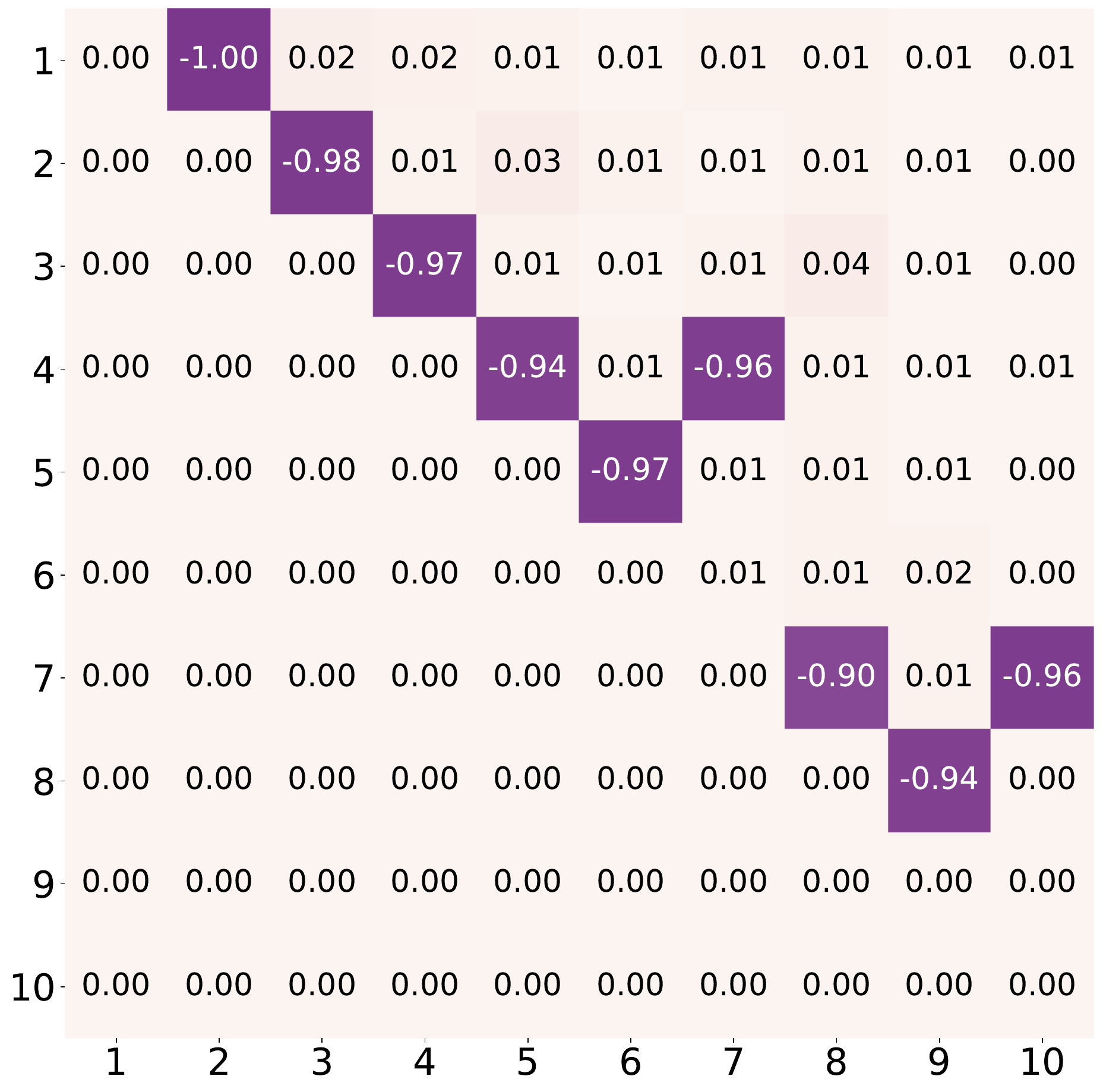} &~~
    \put(0,12){\includegraphics[width=0.05\textwidth]{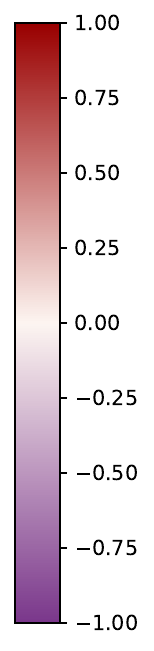}}
    \\
    DAG $\mathcal{G}$ & \quad ~~ Heatmap of $\chi^2$ (Head 1) & \quad ~~ Heatmap of $\chi^2$ (Head 2) &~~ \\ \includegraphics[width=0.26\textwidth]{IT/fig/Heatmap/heatmap_of_graph_w_num.pdf} &\quad ~
    \includegraphics[width=0.26\textwidth]{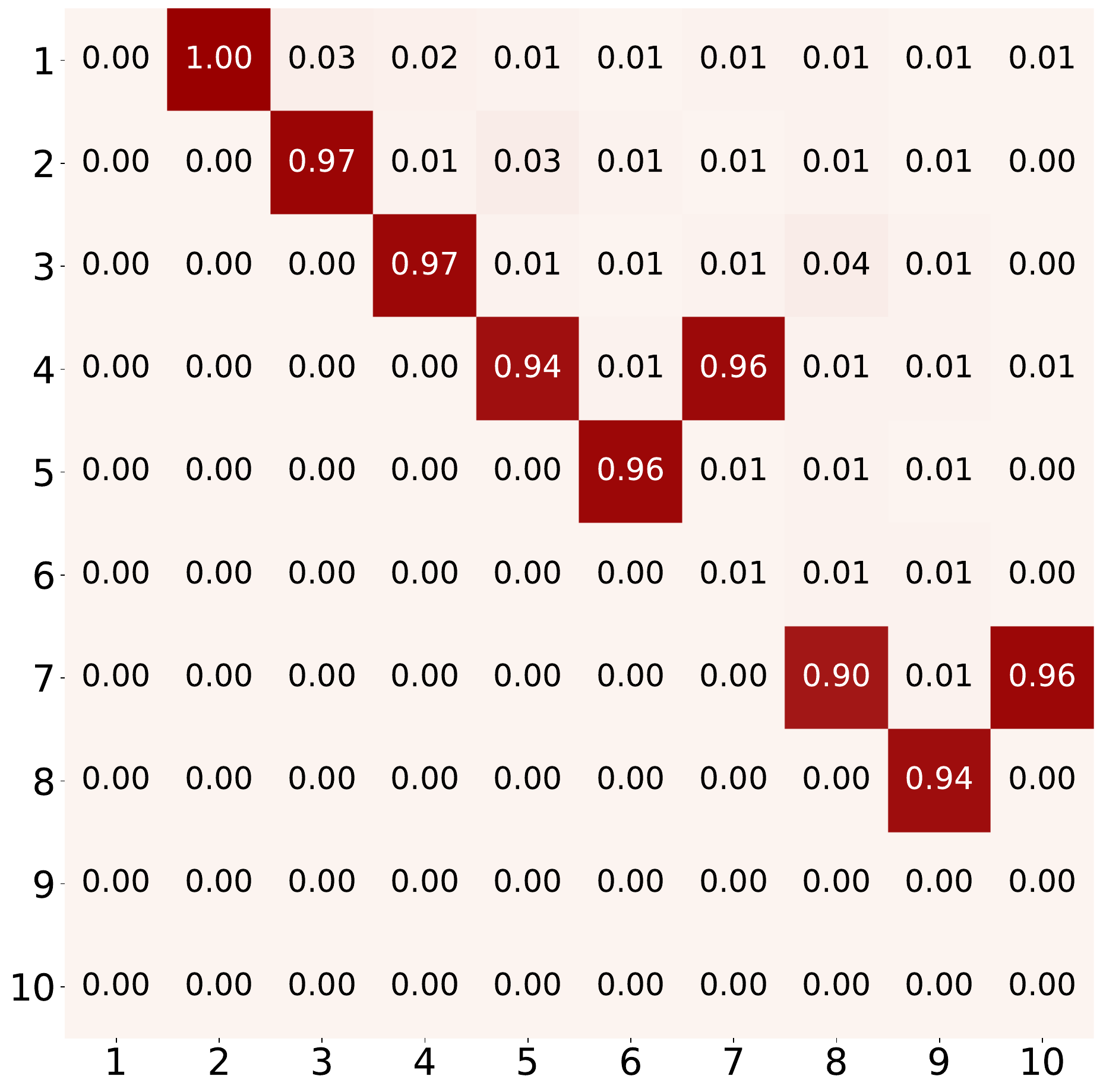} &\quad~
    \includegraphics[width=0.26\textwidth]{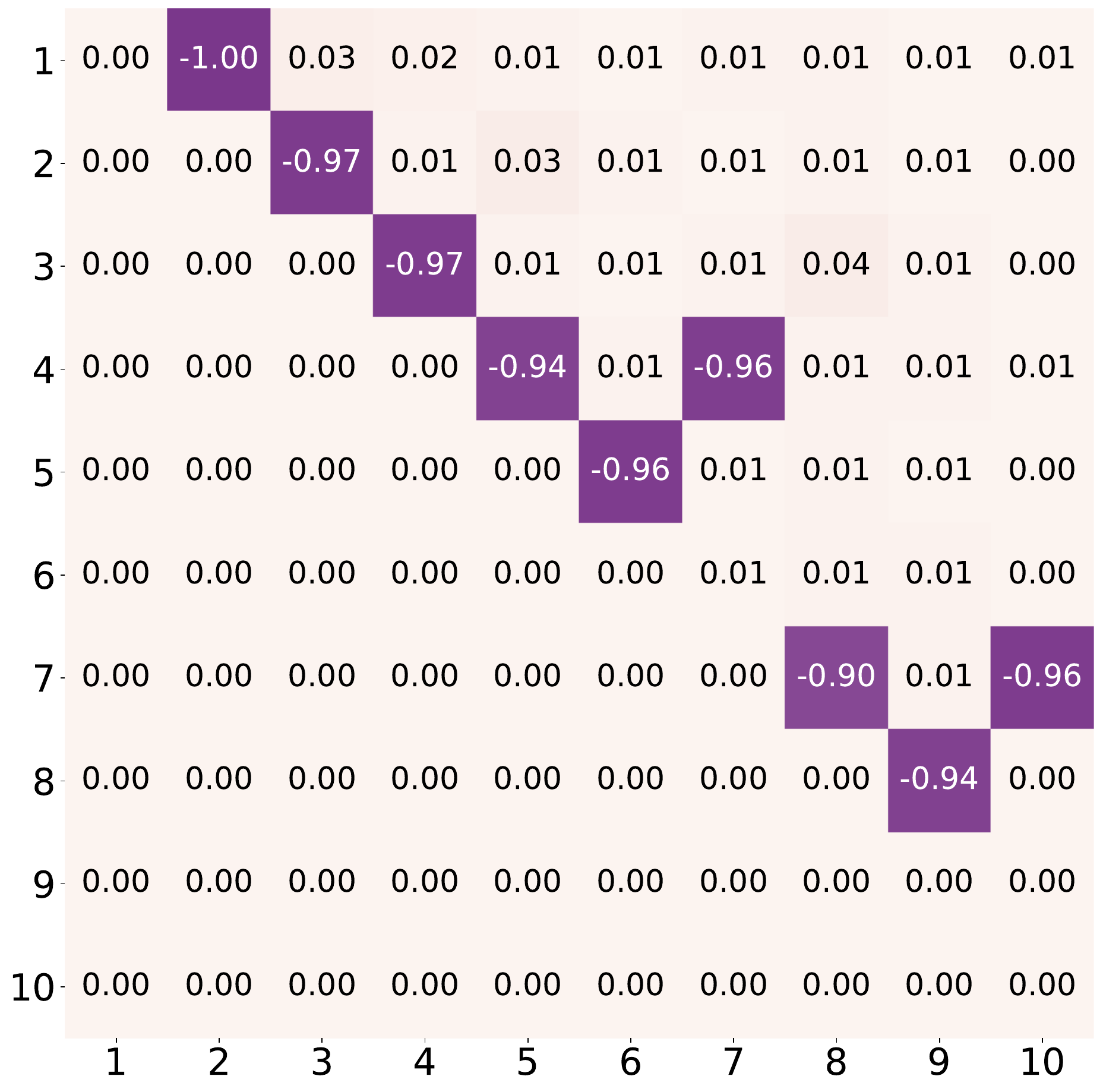} &~~
    \put(0,12){\includegraphics[width=0.05\textwidth]{IT/fig/Heatmap/colorbar_w_num.pdf}}
    \\
    \end{tabular}
    \caption{Heatmaps of the true adjacency matrices of  the DAG  $\mathcal{G}$ (first column) are shown alongside the attention patterns learned by two heads using different mutual information measures. The top row displays results obtained with KL mutual information (second and third columns), while the bottom row presents results with the $\chi^2$-mutual information (second and third columns). In this experiment, the data is sampled from a graph with 10 nodes, with the first two nodes being roots.
To visualize the learned multi-head attention scores, different colors (red and blue) are used to represent the attention patterns of each head. When trained with the naive objective from Eq.~\eqref{Eq: stra obj}, both KL and $\chi^2$-mutual information lead to head collapse: the two heads produce identical heatmaps and only recover a single parent node in \(\mathcal{G}\).}
\label{fig:head collapse}
\end{figure}

% For any node $i$, given the number of parents $K$, one straightforward way of extension is to select the node set which has the largest mutual information with $i$. However, this is not efficient as a multi-head transformer xx xx. Another efficient way is to directly select the top $K$ random variables with the largest mutual information with $i$ by the $K$ heads, which is not effective as the $K$ heads will select the same node with the same defined mutual information.
To overcome the above difficulty, we propose a novel modified mutual information as follows, which encourages different attention heads to learn distinct parents.
% If $P_{S_i,S_j|\pi}(s',s)\!>\!0$ for all $(s,s')$ and $\pi \in \mathrm{supp}(P_\Pi)$, then
\begin{definition}[\Name Mutual Information]\label{Def: Modified MI} Let $f: \mathbb{R}_+ \to \mathbb{R}$ be a convex function with $f(1) = 0$. For any $\ell  \in  [K]$ and $i, j \in  [T]$, define the {\em kernel-guided mutual information} (KG-MI) as 
    \begin{align*}
        \tilde{I}^\ell_f(S_i; S_j) = \Eb_{ \Pi\sim P_\Pi}\left[\sum_{s,s'} \frac{P_{S_j|\Pi}(s)\Pi^\ell(s'|s) P_{S_i|\Pi}(s')P_{S_j|\Pi}(s)}{P_{S_i,S_j|\Pi}(s',s)}f\Big(\frac{P_{S_i,S_j|\Pi}(s',s)}{P_{S_i|\Pi}(s')P_{S_j|\Pi}(s)}\Big)\right].
        \end{align*}
\end{definition}
% We note that for any $f$,  then we can rewrite the mutual information between token $X_i$ and $X_j$ as 
% \begin{align}
%     I_f(X_i; X_j) &= \Eb_{\pi\sim P_\pi,(x_i,x_j)\sim P_{X_i,X_j}}\left[\frac{P_{X_i}(x_i) P_{X_j}(x_j)}{P_{X_i,X_j}(x_i,x_j)}f(\frac{P_{X_i,X_j}(x_i,x_j)}{P_{X_i(x_i)}P_{X_j}(x_j)})\right]\nonumber\\
%     &= \Eb_{ \pi\sim P_\pi}\Eb_{x_j\sim P_{X_j}}\left[\Eb_{x_j\sim P_{X_i|X_j}}\left[\frac{P_{X_i}(x_i) P_{X_j}(x_j)}{P_{X_i,X_j}(x_i,x_j)}f(\frac{P_{X_i,X_j}(x_i,x_j)}{P_{X_i}(x_i)P_{X_j}(x_j)})\right]\right].\label{Eq: super popular f-mutual info}
% \end{align}
% \textcolor{red}{VT: Please explain why the quantity above is termed ``kernel-guided''. }
% \textcolor{blue}{done}

%The {\em kernel-guided mutual information} (KG-MI) 
The KG-MI derives its name from the incorporation of the marginal transition kernel  $\Pi^\ell$ within the standard mutual information framework to guide DAG learning. By incorporating this kernel and reweighting the standard mutual information with the factor ${P_{S_j|\Pi}(s)\Pi^\ell(s'|s)}/{P_{S_i,S_j|\Pi}(s',s)}$, we observe that the KG-MI has several desirable properties, as outlined below.

% \Name mutual information (KG-MI) gets its name from the introduction of marginal transition kernel to guide learning in the standard mutual information. By reweighting the standard mutual information with the factor $\frac{P_{S_j}(s)\pi^\ell(s'|s)}{P_{S_i,S_j}(s',s)}$, PG-MI is equipped several desirable properties, outlined as follows.
\begin{itemize}
    \item \textbf{Variation with $\ell$:} The definition of the KG-MI  $\tilde{I}^\ell_f(S_i; S_j)$ explicitly depends on $\ell$. Consequently, when designing objective functions for multiple heads $\ell$, each head leverages a different $\tilde{I}^\ell_f$, enabling the discovery of all distinct parents by the different heads.
\item \textbf{Specialization to the standard MI:} If $j=p(i)^\ell$, then $\tilde{I}^\ell_f(S_i; S_j)$ coincides with the $f$-mutual information between $S_i$ and $S_{p(i)^\ell}$. Furthermore, $\tilde{I}^\ell_f(S_i; S_j)=0$ if $S_j$ is independent of $S_i$. 

\end{itemize}

% Different choices of the convex function $f$ may lead to varying theoretical and empirical performance, highlighting the importance of a judicious choice of $f$ for a given application, which we will discuss in detail in \Cref{sec: thm,sec: experiment}.

%\subsection{Training setting}\label{subs: training setting}

\subsection{Objective Function and Training Algorithm}\label{sec:obj_alg}
Inspired by \eqrefn{eq:losS_singleparent}, which is the objective function for learning a tree structure, 
we define the following objective function based on the $f$-KG-MIs $\{\tilde{I}^\ell_f(S_i; S_j)\}$ in \Cref{Def: Modified MI}:
\begin{align}
   L_f(\theta)= \frac{1}{KT}\sum_{\ell=1}^K\sum_{i,j \in [T]} \tilde{I}^\ell_f(S_i; S_j) \attn_{j,i}^{\ell}(\theta), \label{Eq: obj f}
\end{align}
%where $\tilde{I}^\ell_f(S_i; S_j)$ is defined as in \Cref{Def: Modified MI},and 
where $\attn^{\ell}$ is the attention score of head $\ell$ and $\attn^{\ell}_{j,i}$ is its   $(j,i)$-th entry.

We explain how the objective function in \eqrefn{Eq: obj f} guides the learning of the structure of DAGs. Specifically, \eqrefn{Eq: obj f} uses {\em multiple} attention heads to capture the {\em  multiple} parent nodes that each node in a DAG may have.
%Similarly to the single-parent case, 
For each attention head $\ell$ and each node $i$, maximizing the above objective function drives the attention score for target $i$ to  {\em concentrate} on the $j$ that maximizes $\tilde{I}^\ell_f(S_i; S_j)$, and hence, $j$ will be learned as the $\ell$-th parent of $i$. Since $\tilde{I}^\ell_f(S_i; S_j)$ varies across different $\ell$'s (i.e., the KG-MIs $\tilde{I}^\ell_f(S_i; S_j)$ are distinct across $\ell$'s), each attention head is maximized by a different $j$, enabling the model to learn {\em distinct} parent nodes.

\textbf{No loss in optimality.} For any non-root node $i \notin \mathcal{R}$, and head $\ell \in [K]$, we denote $j(i)^{\ell,*}:=\arg\max_{j\in[K]}\tilde{I}^\ell_f(S_i; S_j)$ as the parent of $i$ with the largest $f$-KG-MI. Then the maximum of the objective function is given by $$L^*=\frac{1}{KT}\sum_{\ell=1}^K\sum_{i \notin \mathcal{R}} \tilde{I}^\ell_f(S_i; S_{j(i)^{\ell,*}}),$$ which is equal to the optimal objective that one would obtain without any reparameterization in the transformer architecture (see Section~\ref{subs: transformer architecture}). In other words, the specific form of the weight matrices $\{W^\ell_{V},\, W^\ell_{KQ}\}$ (adopted in, say, Eq.~\eqref{eqn:WKQ}) does not prevent the transformer from being trained to achieve the optimal objective value.

In practice, the true $f$-KG-MIs are  unknown. Thus,  we need to estimate them for training. Under additional assumptions and if the number of sequences and the sequence lengths are sufficiently large,   it is possible to obtain an accurate estimate of the $f$-KG-MIs. Thus, we assume access to a reliable estimate with a small error. 
\begin{assumption}\label{assp: est of F}
    For each $i,j \in  [T]$ and any $\epsilon>0$, there exists an empirical version of $\tilde{I}_f^\ell(S_i; S_j)$, denoted as $\hat{I}_f^\ell(S_i; S_j)$ and constructed using sufficiently many length-$T$  sequences $S_{1:T}$, satisfying that for each  $ \ell \in [  K]$,  $$\big|\tilde{I}^\ell_f(S_i; S_j)-\hat{I}^\ell_f(S_i; S_j)\big| \leq \epsilon.$$  
\end{assumption}
To demonstrate that the assumption can be satisfied in practice, we consider the case where $f(x) = x^2 - x$, which corresponds to  Pearson's $\chi^2$-KG-MI. Under mild assumptions on the graph structure and transition kernel as stated precisely in \Cref{lem: est chi2 MI} and is also assumed in previous work~\citep{nichani2024transformers}, a reliable practical estimate satisfying \Cref{assp: est of F} can be obtained when both the length and the number of random sequences $S_{1:T}$ are sufficiently large. The detailed estimation procedure for $\tilde{I}^\ell_{\chi^2}(S_i; S_j)$ and  the proof that the estimated $\hat{I}^\ell_{\chi^2}(S_i; S_j)$ satisfies \Cref{assp: est of F} are provided in Appendix~\ref{sec:example_pearsons}.

\begin{algorithm}[H]
   \caption{Learning a multi-parent DAG via Gradient Ascent}
   \label{alg:meta}
\begin{algorithmic}[1]
   \STATE {\bfseries Input:} Learning rate $\eta >0 $; number of iterations $\tau$; in-degree $K$; convex function $f$ in the $f$-divergence. % to be used in the $f$-divergence.
   \STATE Estimates of the $f$-KG-MIs $\{\hat{I}^\ell_f(S_i; S_j)\}$   satisfying \Cref{assp: est of F}.
  \STATE Objective  function as in \eqrefn{Eq: obj f}
  \STATE Initialize $\theta(0)=\{Q^\ell=\mathbf{0}_{T\times T}\}_{\ell=1}^K$. \label{line: init}. 
   % \begin{align*}
   %     \textstyle L_f(\theta)=\frac{1}{KT}\sum_{\ell=1}^K\sum_{i,j=1}^T \tilde{I}^\ell_f(S_i; S_j) \attn_{j,i}^{\ell}(\theta), \theta=Q^\ell. 
   % \end{align*}
   % \textcolor{red}{VT: remove the above equation}
   \FOR{$t=1$ {\bfseries to} $\tau$}
   \STATE $\theta(t) = \theta(t-1) + \eta \widehat{\nabla}_\theta L_f(\theta(t-1))$. 
   \ENDFOR
   \STATE {\bfseries Output:} $\hat{\theta}=\theta(\tau)$.
\end{algorithmic}
\end{algorithm}
% \end{minipage}
% \end{wrapfigure}
\textbf{Training Algorithm:} We first obtain   estimates $\{\hat{I}^\ell_f(S_i; S_j)\}$ satisfying \Cref{assp: est of F}. Next, we maximize the objective in \eqrefn{Eq: obj f} via gradient ascent (GA) with a constant learning rate $\eta>0
$ and the estimate of the gradient $\widehat{\nabla}_\theta L_f(\theta(t-1))$ which     is obtained by replacing the population $f$-KG-MI in the population gradient $\nabla_\theta L_f(\theta(t-1))$ with its empirical estimate $\hat{I}^\ell_f(S_i; S_j)$. At $t=0$, parameters $\theta(0)$ are initialized as zero matrices. The   pseudocode of the structure learning algorithm is provided in \Cref{alg:meta}.
 
\section{Main Theorems}\label{sec: thm} 
In this section, we characterize the training dynamics of the multi-head attention model by GA. The  proofs of the theoretical results  stated in this section are provided in Appendix \ref{App: Proof}.

\subsection{Convergence of Objective Function}

Since the algorithm's performance depends on the $f$-KG-MIs, we introduce some relevant statistics of $f$-KG-MIs as follows. For any non-root node $i$, and head $\ell \in [K]$, we denote $$\Delta^{\ell}(i) :=\tilde{I}_f^{\ell}(S_i;S_{j(i)^{\ell,*}})-\max_{j \neq j(i)^{\ell,*},j<i}\tilde{I}^\ell_f(S_i; S_j),$$ and then define the {\it information gap} of the DAG, together with its parameters, as $$\Delta:=\min_{\ell \in [K],i \in [T]}\Delta^\ell(i).$$ The information gap quantifies the variation of the $f$-KG-MIs across different node pairs and impacts the convergence rate of the GA algorithm, as we characterize in the following theorem. 

\begin{theorem}[Convergence of Objective Function] \label{Thm: loss convergence}  
Suppose Assumptions \ref{Assp: MC-non sym} and \ref{assp: est of F} hold, and every estimate for $f$-KG-MI chosen in \Cref{alg:meta} is bounded above by $I_{\max}$, i.e.,   $\max_{i,j \in [T]}\max_{\ell\in [K]} \hat{I}^\ell_f(S_i; S_j)\leq I_{\max}$. Let $\epsilon<\frac{1}{4}\Delta$, and for any attention error $\varepsilon_\mathrm{attn}>0$, define $$\tau_*=\frac{4KTI_{\max}\log{\frac{I_{\max}}{\varepsilon_\mathrm{attn}}}}{\varepsilon_\mathrm{attn}\eta\Delta}+\frac{4KT^2\log T}{\eta\Delta}.$$  Then, for any iteration $t\geq \tau_*$, the output of \Cref{alg:meta}, $\theta(t)$, satisfies $L^*-L(\theta(t)) \leq \max\{I_{\max}\varepsilon_{\mathrm{attn}},\epsilon\}.$
\end{theorem}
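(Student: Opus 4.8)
\textbf{Proof proposal for \Cref{Thm: loss convergence}.}

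The plan is to reduce the multi-head training dynamics to $KT$ independent single-token, single-head softmax optimization problems and analyze each one separately. Because of the reparameterization in Section~\ref{subs: transformer architecture}, the objective $L_f(\theta)$ in \eqrefn{Eq: obj f} decouples over heads $\ell$ and over target columns $i$: for a fixed $(\ell,i)$, the only parameters that matter are the $i$-th column of $\mathrm{Mask}(Q^\ell)$, and the contribution to $L_f$ is $\frac{1}{KT}\sum_{j<i}\tilde I^\ell_f(S_i;S_j)\,\attn^\ell_{j,i}(\theta)$, where the attention scores $\attn^\ell_{\cdot,i}$ form a softmax over the coordinates $j<i$. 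So I would first write down this decomposition explicitly, note that the gradient ascent update in \Cref{alg:meta} acts independently on each column, and argue that it suffices to control, for each $(\ell,i)$, how fast the softmax weight $\attn^\ell_{j(i)^{\ell,*},i}$ approaches $1$.

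Second, I would set up the dynamics for a single column. Writing $a_j(t)$ for the $j$-th softmax logit (restricted to $j<i$) and $v_j:=\frac{1}{KT}\hat I^\ell_f(S_i;S_j)$ for the reward, the population-style gradient of the softmax-linear objective is the standard one: $\partial/\partial a_j = \attn_{j,i}\,(v_j - \sum_k \attn_{k,i} v_k)$. The key qualitative fact is that the coordinate $j^\star:=j(i)^{\ell,*}$ with the (strictly, by the gap $\Delta$ and the condition $\epsilon<\Delta/4$) largest reward has positive gradient while every other coordinate's ``advantage'' $v_j - \langle\attn_{\cdot,i},v\rangle$ is eventually negative, so the logit gap $a_{j^\star}(t)-\max_{j\neq j^\star}a_j(t)$ is nondecreasing and in fact grows. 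I would quantify this in two phases, which is exactly the two-term structure of $\tau_*$: (i) a \emph{warm-up phase} lasting $O(KT^2\log T/(\eta\Delta))$ steps in which, starting from the uniform initialization $Q^\ell=\mathbf 0$ (so each of the $\le T$ logits is $0$ and $\attn_{j,i}=1/|\{j<i\}|\ge 1/T$), the softmax mass on $j^\star$ is driven from $\Omega(1/T)$ up to a constant; the $T^2\log T$ comes from the $1/T$ initial mass times an additional $\log T$ to clear the other $\le T$ competitors, divided by the per-step drift $\sim \eta\Delta/(KT)$ coming from $v_{j^\star}-v_j\ge (\Delta-2\epsilon)/(KT)\ge \Delta/(2KT)$. (ii) a \emph{linear-convergence phase}: once $\attn_{j^\star,i}\ge \tfrac12$, the residual mass $1-\attn_{j^\star,i}$ contracts geometrically at rate $1-\Theta(\eta\Delta/(KT))$, so after $O\!\big(\frac{KTI_{\max}}{\eta\Delta}\log\frac{I_{\max}}{\varepsilon_\mathrm{attn}}\big)$ further steps we have $1-\attn^\ell_{j^\star,i}\le \varepsilon_\mathrm{attn}/I_{\max}$ for every $(\ell,i)$. (The $I_{\max}$ factors enter when translating an attention-mass bound into a loss bound.)

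Third, I would convert the per-column attention bound into the claimed loss bound. For each $(\ell,i)$ the loss deficit is $\tilde I^\ell_f(S_i;S_{j^\star}) - \sum_{j<i}\tilde I^\ell_f(S_i;S_j)\attn^\ell_{j,i} = \sum_{j\neq j^\star}\attn^\ell_{j,i}\big(\tilde I^\ell_f(S_i;S_{j^\star})-\tilde I^\ell_f(S_i;S_j)\big)\le (1-\attn^\ell_{j^\star,i})\,\tilde I^\ell_{f,\max}$, and here the $f$-KG-MI values are bounded (by $I_{\max}$ up to the $\epsilon$ estimation error, which is why the bound is stated as $\max\{I_{\max}\varepsilon_\mathrm{attn},\epsilon\}$ rather than just $I_{\max}\varepsilon_\mathrm{attn}$). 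Summing $\frac{1}{KT}\sum_{\ell,i}$ and using $1-\attn^\ell_{j^\star,i}\le\varepsilon_\mathrm{attn}/I_{\max}$ gives $L^*-L(\theta(t))\le\max\{I_{\max}\varepsilon_\mathrm{attn},\epsilon\}$; I also need the preliminary remark that $L^*$ (the reparameterized optimum) equals the unconstrained optimum, which is already asserted under ``No loss in optimality'' in Section~\ref{sec:obj_alg}, so that comparing against $L^*$ is legitimate.

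The main obstacle I anticipate is making the warm-up phase fully rigorous: proving that the logit of $j^\star$ never falls behind and that its softmax mass genuinely increases at rate $\Omega(\eta\Delta/(KT))$ uniformly until it reaches a constant, despite the softmax being nonlinear and the ``advantage'' $v_{j^\star}-\langle\attn_{\cdot,i},v\rangle$ being small (order $\Delta/(KT)$ times $\attn_{j^\star,i}$, i.e., itself order $1/T$) in the early steps — this is where the extra $\log T$ and the $T^2$ (rather than $T$) show up, and it requires a careful monotonicity/induction argument controlling all $\le T$ competing logits simultaneously, plus checking that the estimation error $\epsilon<\Delta/4$ is small enough that $j^\star$ is still the argmax of the \emph{estimated} rewards $\{v_j\}$ and that the effective gap is at least $\Delta/2$ after accounting for the $2\epsilon$ slack. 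A secondary technical point is handling the learning-rate/discretization: the gradient-ascent step must not overshoot, which needs a bound on the second-order term (the softmax is $1$-smooth-ish in the logits), but since the reward magnitudes are $O(I_{\max}/(KT))$ and $\eta$ is treated as a fixed constant this should only affect constants.
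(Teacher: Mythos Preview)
Your overall strategy is exactly the paper's: decouple over heads and target columns, track the logit gap $\delta^\ell_i(t)=Q^\ell_{j(i)^{\ell,*},i}(t)-\max_{j\neq j(i)^{\ell,*}}Q^\ell_{j,i}(t)$ for each $(\ell,i)$, run a two-phase analysis (warm-up from the uniform initialization to $\attn^\ell_{j^\star,i}\ge\tfrac12$, then a tail phase), and finally convert the attention residual into a loss gap. Your Phase~1 matches the paper's almost line-for-line, including the key lower bound $\delta^\ell_i(t)-\delta^\ell_i(t-1)\ge \tfrac{\eta}{2KT}\,\attn^\ell_{j^\star,i}(1-\attn^\ell_{j^\star,i})\,\Delta^\ell(i)$.

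The genuine gap is your Phase~2. You assert that once $\attn^\ell_{j^\star,i}\ge\tfrac12$ the residual $1-\attn^\ell_{j^\star,i}$ \emph{contracts geometrically} at rate $1-\Theta(\eta\Delta/(KT))$, giving a step count $O\!\big(\tfrac{KTI_{\max}}{\eta\Delta}\log\tfrac{I_{\max}}{\varepsilon_{\mathrm{attn}}}\big)$. This is false: the increment of $\delta^\ell_i$ is only $\Theta\!\big(\tfrac{\eta\Delta}{KT}(1-\attn^\ell_{j^\star,i})\big)$, so it is $e^{\delta}$ that grows linearly and $1-\attn^\ell_{j^\star,i}$ decays like $1/t$, not like $(1-c)^t$. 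Concretely, while the residual is still $\ge\varepsilon_{\mathrm{attn}}'$ one can only guarantee $\delta^\ell_i(t)-\delta^\ell_i(t-1)\ge\tfrac{\eta\Delta}{4KT}\varepsilon_{\mathrm{attn}}'$, and $\delta$ must grow by $\log(1/\varepsilon_{\mathrm{attn}}')$ to push the residual below $\varepsilon_{\mathrm{attn}}'$; this produces the factor $\tfrac{1}{\varepsilon_{\mathrm{attn}}}\log\tfrac{1}{\varepsilon_{\mathrm{attn}}}$ that appears in the stated $\tau_*$ but is missing from your count. With your claimed rate you would not reach the theorem's $\tau_*$.

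Two minor remarks. First, your explanation of the $\epsilon$ in $\max\{I_{\max}\varepsilon_{\mathrm{attn}},\epsilon\}$ is not the paper's: the paper splits into non-root nodes (each contributing $\le \tilde I^\ell_f(S_i;S_{j^\star})\,\varepsilon_{\mathrm{attn}}$) and root nodes (each contributing $\le\epsilon$, trivially, since $\tilde I^\ell_f(S_i;S_j)=0$ there). Second, your worry about discretization/overshoot is unnecessary here: the paper's argument uses only the one-step lower bound on $\delta^\ell_i(t)-\delta^\ell_i(t-1)$, which holds for every $\eta>0$ without any smoothness control.
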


\textbf{Objective Convergence:} \Cref{Thm: loss convergence} demonstrates that when $\Delta/4$ exceeds the permissible estimation error $\epsilon$, training a one-layer transformer using the objective function defined in Eq.~\eqref{Eq: obj f} converges to the global maximum of the objective in the reparameterized space via GA. This convergence occurs with time complexity polynomial  in $T$, $K$, $\frac{1}{\Delta}$, $\frac{1}{\eta}$ and $\frac{1}{\varepsilon_{\mathrm{attn}}}$. We experimentally validate our theoretical convergence rate using synthetic data, as presented in \Cref{fig:convergence rate}. The results indicate that the actual convergence rate scales linearly with both $T^2 \log T$ and $\frac{1}{\Delta}$, aligning well with the upper bounds stated in \Cref{Thm: loss convergence}. This consistency suggests that our theoretical result presented in Theorem~\ref{Thm: loss convergence} is tight w.r.t.\  $T$ and $\Delta$. % in practical scenarios. 

% The convergence rate is inversely proportional to the learning rate $\eta$, and if the learning rate is large enough, the training can end in one step. Such a phenomena is because of the special landscape of 

%\Cref{Thm: loss convergence} shows that, if the information gap $\Delta$
%dominates the estimation error $\epsilon$, training a one-layer transformer with an objective function defined in \eqrefn{Eq: obj f} converges to the global maximum of the objective function in the reparameterization space via GA, with polynomial time
%efficiency with respect to $T,K,\frac{1}{\Delta}$ and $\frac{1}{\varepsilon_{\mathrm{attn}}}$. We validate our theoretical result of convergence rate with experiment on synthetic data in \Cref{fig:convergence rate}. It can be observed that the actual convergence rate scales linearly in both $T^2\log T$ and $\frac{1}{\Delta}$, which matches the upper bounds given in \Cref{Thm: loss convergence}, verifying that our upper bounds are rather tight in some practical scenarios.

\textbf{Impact of different choices of $f$ in \abname:} As \Cref{Thm: loss convergence} suggests, the convergence rate is inversely proportional to the information gap $\Delta$. Consequently, the choice of the function $f$ in the $f$-KG-MI can be considered as a design parameter to enhance the rate of convergence. Specifically, selecting an $f$ that produces a larger gap $\Delta$ in $\tilde{I}^\ell_f(S_i; S_j)$ accelerates the training process. We further validate this observation experimentally, as shown in \Cref{fig: compreh f mi} and discussed in \Cref{sec: experiment}.

%As \Cref{Thm: loss convergence} implies, the convergence rate is inversely proportional to the information gap $\Delta$. Hence, 
%the choice of $f$ in KG-MI can be treated as a design parameter to optimize the convergence. Namely, by selecting an $f$ such that $\tilde{I}^\ell_f(S_i; S_j)$ has a larger gap, we can accelerate the convergence during training. We demonstrate this insight through an experiment, with results shown in \Cref{fig: relation to gap}. In the experiment, we evaluate four $f$-KG-MIs, including KL, Hellinger, Pearson $\chi^2$ and Neyman $\chi^2$ \abname, on the same data model. Clearly, these choices of $f$ result in different convergence rates. Among them, Neyman KG-MI, which has the largest information gap 
%$\Delta$, achieves the fastest convergence. This empirically validates that the choice of $f$ directly influences the convergence rate via its impact on $\Delta$.

\subsection{Concentration of Attention  Scores}
The next theorem characterizes the attention patterns for both root and non-root nodes at convergence. 
\begin{theorem}[Attention Concentration]\label{Thm: Attention Concentration}  
Under the same conditions of \Cref{Thm: loss convergence}, for any attention error $\varepsilon_\mathrm{attn}>0$ and any $i \in [T]$, the output of \Cref{alg:meta} satisfies 
    \begin{enumerate}[wide, labelwidth=!, labelindent=0pt]
        \itemsep 0pt
        \item Let $i$ be a non-root node  and $\epsilon<\frac{1}{4}\Delta^\ell(i)$ for all~$\ell$ and~$i$. Let $$\tau^{\ell,i}_*=\frac{4KT\log{\frac{1}{\varepsilon_\mathrm{attn}}}}{\varepsilon_\mathrm{attn}\eta\Delta^\ell(i)}+\frac{4KTi\log i }{\eta\Delta^\ell(i)}.$$ Then for any iteration $t\geq \tau^{\ell,i}_*$, $\attn_{j(i)^{\ell,*},i}^\ell(t) > 1-\varepsilon_{\mathrm{attn}}.$
    \item Let $i>1$ be a root node. Then for any iteration $1\leq t \leq \tau^{\ell,i}_*$, any $j < i$,
    $$\Big|\attn_{j,i}^\ell(t) -\frac{1}{i-1}\Big| 
   \leq 64\bigg(\frac{1}{\varepsilon_\mathrm{attn}}\log{\frac{1}{\varepsilon_\mathrm{attn}}}+{i\log i}\bigg) \frac{\epsilon}{(i-1)^2\Delta^\ell(i)}.$$
    \end{enumerate}
\end{theorem}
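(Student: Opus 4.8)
The plan is to treat the two cases of Theorem~\ref{Thm: Attention Concentration} (non-root nodes and root nodes) separately, but in both cases the starting point is to track the evolution of the logits $Q^\ell_{j,i}(t)$ under gradient ascent. Since the objective in \eqrefn{Eq: obj f} decouples across heads $\ell$ and across target columns $i$, and $\attn^\ell_{j,i}(\theta)=\mathrm{softmax}(\mathrm{Mask}(Q^\ell))_{j,i}$ depends only on the $i$-th column of $Q^\ell$, I would first write the coordinate-wise gradient: $\partial L_f/\partial Q^\ell_{j,i}$ equals $\frac{1}{KT}\attn^\ell_{j,i}\bigl(\tilde I^\ell_f(S_i;S_j)-\sum_{k<i}\attn^\ell_{k,i}\tilde I^\ell_f(S_i;S_k)\bigr)$, i.e.\ the familiar ``softmax times (value minus mean value)'' form, with the empirical $\hat I^\ell_f$ in place of the population quantity and an error of at most $\epsilon$ by Assumption~\ref{assp: est of F}. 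From the zero initialization in line~\ref{line: init}, the whole $i$-th column of $Q^\ell$ stays invariant under permutations of the $j$'s with equal $\tilde I^\ell_f$ values, so the dynamics is effectively a low-dimensional ODE/recursion on the probability simplex over $\{j:j<i\}$.

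For the \textbf{non-root case}, the goal is to show the attention mass concentrates on $j(i)^{\ell,*}$. The key quantity to monitor is the gap $g(t):=Q^\ell_{j(i)^{\ell,*},i}(t)-\max_{j\neq j(i)^{\ell,*},j<i}Q^\ell_{j,i}(t)$. Using the gradient formula and the condition $\epsilon<\tfrac14\Delta^\ell(i)$, I would show that as long as the attention is not yet concentrated, $g(t)$ increases by at least a fixed amount per step proportional to $\eta\Delta^\ell(i)/(KT)$ times the current attention on the best parent, and once $g(t)$ is large the attention on the best parent exceeds $1-\varepsilon_\mathrm{attn}$. The argument naturally splits into two phases: a ``warm-up'' phase where $\attn^\ell_{j(i)^{\ell,*},i}$ grows from $1/(i-1)$ to a constant (this contributes the $\frac{KTi\log i}{\eta\Delta^\ell(i)}$ term, since when the best-parent mass is $\asymp 1/(i-1)$ the per-step increment in $g$ is only $\asymp \eta\Delta^\ell(i)/(KTi)$, and we need $\asymp i\log i$ such steps to overcome the initial logit handicap of $\log(i-1)$ against the other $i-2$ competitors), and a ``concentration'' phase where the mass is already $\Omega(1)$ and $g$ grows linearly, so after another $\asymp \frac{KT}{\varepsilon_\mathrm{attn}\Delta^\ell(i)}\log\frac{1}{\varepsilon_\mathrm{attn}}$ steps the residual mass drops below $\varepsilon_\mathrm{attn}$. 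Summing the two phase lengths gives $\tau^{\ell,i}_*$. This phase analysis is precisely the kind of bookkeeping done in \citep{DBLP:conf/icml/HuangCL24,nichani2024transformers}, and I expect the main technical obstacle to be making the ``growth of $g(t)$ is monotone'' claim fully rigorous: a priori one must rule out that the empirical-gradient error $\epsilon$ (or the nonlinearity of softmax) causes $g(t)$ to stall or oscillate; the condition $\epsilon<\tfrac14\Delta^\ell(i)$ is exactly what guarantees that the ``signal'' $\Delta^\ell(i)$ always dominates the ``noise'' $4\epsilon$ so that $g(t)$ is strictly increasing until concentration.

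For the \textbf{root case}, the target is the quantitative statement that the attention on column $i$ stays $\epsilon$-close to uniform, $\bigl|\attn^\ell_{j,i}(t)-\tfrac{1}{i-1}\bigr|$ small, for all $t$ up to $\tau^{\ell,i}_*$. The clean observation here is that for a root node $i$ all the true $f$-KG-MIs $\tilde I^\ell_f(S_i;S_j)$ vanish (by the ``specialization to the standard MI'' property in \Cref{Def: Modified MI}, since a root node is independent of every earlier node — this needs to be invoked, or re-derived from the data-generating process), so the only driving force is the estimation error, each coordinate of which has magnitude at most $\epsilon$. Starting from $Q^\ell_{\cdot,i}=0$ (uniform attention $1/(i-1)$), I would bound the logit drift: after $t$ steps each $Q^\ell_{j,i}(t)$ has moved by at most $\asymp \eta t\epsilon/(KT)$ (the $\frac{1}{KT}$ and the current attention $\le 1$ from the gradient formula), so the pairwise logit spread is $\lesssim \eta t\epsilon/(KT)$, and plugging $t\le\tau^{\ell,i}_*$ and linearizing $\mathrm{softmax}$ around uniform (which costs a factor $\asymp 1/(i-1)$ in converting logit spread to probability deviation, and the $\tau^{\ell,i}_*$ has a $\frac{KT}{\eta\Delta^\ell(i)}$ scale) yields the stated bound with its $\frac{\epsilon}{(i-1)^2\Delta^\ell(i)}$ prefactor and the $\bigl(\tfrac{1}{\varepsilon_\mathrm{attn}}\log\tfrac{1}{\varepsilon_\mathrm{attn}}+i\log i\bigr)$ factor coming directly from $\tau^{\ell,i}_*$. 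The constant $64$ is just slack absorbed along the way. The main care needed here is the softmax linearization error: one should verify the logit spread never becomes large enough (say, larger than a constant) over the relevant horizon, so that the first-order Taylor bound with a controlled remainder is legitimate; this is where the smallness of $\epsilon$ relative to $\Delta^\ell(i)$ re-enters, ensuring $\eta\tau^{\ell,i}_*\epsilon/(KT)$ stays bounded.
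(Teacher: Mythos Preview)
Your proposal is correct and follows essentially the same approach as the paper's proof: for non-root nodes, you track the logit gap $g(t)$ (the paper calls it $\delta^\ell_i(t)$), show via the ``softmax times (value minus mean)'' gradient formula together with $\epsilon<\tfrac14\Delta^\ell(i)$ that it is monotone with increment $\gtrsim \frac{\eta}{KT}\attn^\ell_{j(i)^{\ell,*},i}(1-\attn^\ell_{j(i)^{\ell,*},i})\Delta^\ell(i)$, and then carry out the identical two-phase analysis; for root nodes, you correctly note all true KG-MIs vanish, bound the logit range by $\asymp \eta t\epsilon/(KT)$ (the paper tracks $r^\ell_i(t)=\max_j Q^\ell_{j,i}-\min_j Q^\ell_{j,i}$ and uses a hitting-time/bootstrapping argument to control the feedback between $\max_j\attn^\ell_{j,i}$ and the drift rate, which is exactly the ``care'' you flagged), and convert to the stated probability deviation.
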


\textbf{Non-root nodes concentrate.} For each non-root node \( i \), as long as $\Delta/4$
exceeds the estimation error $\epsilon$, each attention head \( \ell \) predominantly attends to the node \( j(i)^{\ell,*} \), which maximizes the KG-MI \( \tilde{I}^\ell_f(S_i; S_j) \) among all \( j < i \). Although \Cref{Thm: Attention Concentration} does not guarantee that, for a general 
$\tilde{I}^\ell_f(S_i; S_j)$, the maximizer $j(i)^{\ell,*}$ coincides with the $\ell$-th parent $p(i)^\ell$ of node $i$, our experiments in \Cref{fig:atten-score heatmap} with some widely used $f$, namely KL-KG-MI, and $\chi^2$-KG-MI, show that the corresponding heatmaps consistently recover the true adjacency matrix. This empirical evidence suggests that our proposed metric is practically effective in certain real-world scenarios.

For the KL-KG-MI, we can further establish the following theorem, showing that each attention head indeed attends to its correct parent, and thus the structure of the DAG can be  successfully learned.

\begin{theorem}\label{coro: KL conver}
If the convex function $f: \mathbb{R}_+\to\mathbb{R}$ is chosen to be  $f(x)=x\log x$, the $f$-KG-MI particularizes to the KL-KG-MI, which can be expressed as $$\tilde{I}^\ell_{\mathrm{KL}}(S_i; S_j)=\Eb_{ \Pi\sim P_\Pi}\left[\sum_{s,s'}{P_{S_j|\Pi}(s)\Pi^\ell(s'|s)}\log\bigg(\frac{P_{S_i,S_j|\Pi}(s',s)}{P_{S_i|\Pi}(s')P_{S_j|\Pi}(s)}\bigg)\right].$$ For the KL-KG-MI, $j(i)^{\ell,*}=p(i)^\ell$. As a result, under the same conditions as \Cref{Thm: Attention Concentration}, for any non-root node $i$, if the number of iterations $t\geq \tau^{\ell,i}_*$, the output of \Cref{alg:meta} satisfies $\attn_{p(i)^\ell,i}^\ell(t) > 1-\varepsilon_{\mathrm{attn}}.$
\end{theorem}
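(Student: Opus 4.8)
The plan is to prove \Cref{coro: KL conver} in two stages: (a) derive the displayed closed form for $\tilde{I}^\ell_{\mathrm{KL}}$ by a direct substitution, and (b) show that $p(i)^\ell$ is the unique maximizer of $j\mapsto\tilde{I}^\ell_{\mathrm{KL}}(S_i;S_j)$ over $j<i$, i.e.\ $j(i)^{\ell,*}=p(i)^\ell$; the attention bound is then immediate from the first item of \Cref{Thm: Attention Concentration}. For (a), take $f(x)=x\log x$ in \Cref{Def: Modified MI}. Then $f\bigl(\tfrac{P_{S_i,S_j|\Pi}(s',s)}{P_{S_i|\Pi}(s')P_{S_j|\Pi}(s)}\bigr)$ equals $\tfrac{P_{S_i,S_j|\Pi}(s',s)}{P_{S_i|\Pi}(s')P_{S_j|\Pi}(s)}$ times its logarithm, so the reweighting factor $\tfrac{P_{S_j|\Pi}(s)\,\Pi^\ell(s'|s)\,P_{S_i|\Pi}(s')P_{S_j|\Pi}(s)}{P_{S_i,S_j|\Pi}(s',s)}$ cancels the leading ratio exactly, and the summand collapses to $P_{S_j|\Pi}(s)\,\Pi^\ell(s'|s)\log\tfrac{P_{S_i,S_j|\Pi}(s',s)}{P_{S_i|\Pi}(s')P_{S_j|\Pi}(s)}$; taking $\Eb_{\Pi\sim P_\Pi}$ of the sum over $(s,s')$ yields the stated formula.

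For (b), I would first recast this closed form. Using the properties of the data model --- every node has marginal law $\mu_\Pi$, and the $\ell$-th-parent-to-child conditional satisfies $P_{S_i|S_{p(i)^\ell},\Pi}=\Pi^\ell$, as recorded in \Cref{subs: data model} and Appendix~\ref{sec: MC concentration} --- the weight $P_{S_j|\Pi}(s)\Pi^\ell(s'|s)$ equals $\mu_\Pi(s)\Pi^\ell(s'|s)=P_{S_i,S_{p(i)^\ell}|\Pi}(s',s)$ (note it does \emph{not} depend on $j$), while $P_{S_i|\Pi}(s')P_{S_j|\Pi}(s)=\mu_\Pi(s')\mu_\Pi(s)$. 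Splitting $\log\tfrac{P_{S_i,S_j|\Pi}}{\mu_\Pi\mu_\Pi}=\log\tfrac{P_{S_i,S_{p(i)^\ell}|\Pi}}{\mu_\Pi\mu_\Pi}+\log\tfrac{P_{S_i,S_j|\Pi}}{P_{S_i,S_{p(i)^\ell}|\Pi}}$, the first sum is $I_{\mathrm{KL}}(S_i;S_{p(i)^\ell})$ and the second is $-\Eb_{\Pi\sim P_\Pi}[D_{\mathrm{KL}}(P_{S_i,S_{p(i)^\ell}|\Pi}\,\|\,P_{S_i,S_j|\Pi})]$, giving
\[
\tilde{I}^\ell_{\mathrm{KL}}(S_i;S_j)=I_{\mathrm{KL}}(S_i;S_{p(i)^\ell})-\Eb_{\Pi\sim P_\Pi}\bigl[D_{\mathrm{KL}}\bigl(P_{S_i,S_{p(i)^\ell}|\Pi}\,\|\,P_{S_i,S_j|\Pi}\bigr)\bigr].
\]
Taking $j=p(i)^\ell$ recovers $\tilde{I}^\ell_{\mathrm{KL}}(S_i;S_{p(i)^\ell})=I_{\mathrm{KL}}(S_i;S_{p(i)^\ell})$, consistent with the specialization property stated after \Cref{Def: Modified MI}.

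Because $D_{\mathrm{KL}}\ge0$, the identity yields $\tilde{I}^\ell_{\mathrm{KL}}(S_i;S_j)\le\tilde{I}^\ell_{\mathrm{KL}}(S_i;S_{p(i)^\ell})$ for all $j<i$, with equality iff the conditional law of $S_i$ given $S_j$ equals $\Pi^\ell$ for $P_\Pi$-almost every $\Pi$ (using that all relevant one-dimensional marginals are $\mu_\Pi$). If $j=p(i)^m$ is another parent of $i$ with $m\ne\ell$, this conditional is $\Pi^m$, and \Cref{Assp: MC-non sym} forces $\Pi^m\ne\Pi^\ell$ on all of $\mathrm{supp}(P_\Pi)$, so the inequality is strict. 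If $j<i$ is not a parent of $i$, the parents $d$-separate $S_j$ from $S_i$, so $P_{S_i|S_j=b,\Pi}$ is the full-support mixture $\sum_{s_{1:K}}\Pi(\cdot\mid s_{1:K})\,\Pb\bigl((S_{p(i)^1},\dots,S_{p(i)^K})=s_{1:K}\mid S_j=b,\Pi\bigr)$, and one argues --- using the strict positivity of $\Pi$, which is the delicate step elaborated below --- that it cannot equal $\Pi^\ell$ for $P_\Pi$-almost every $\Pi$. Hence $j(i)^{\ell,*}=p(i)^\ell$, and the first item of \Cref{Thm: Attention Concentration} then gives $\attn^\ell_{p(i)^\ell,i}(t)>1-\varepsilon_{\mathrm{attn}}$ for every $t\ge\tau^{\ell,i}_*$.

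The main obstacle is precisely this last strictness check against a non-parent $j<i$ (for instance an ancestor of $i$ or a node sharing common ancestors with it). One might hope a data-processing argument forces $I_{\mathrm{KL}}(S_i;S_j)<I_{\mathrm{KL}}(S_i;S_{p(i)^\ell})$, but this is false --- a well-placed non-parent can be strictly more informative about $S_i$ than the single parent $S_{p(i)^\ell}$ --- and indeed the point of the kernel reweighting in \Cref{Def: Modified MI} is that it caps $\tilde{I}^\ell_{\mathrm{KL}}$ at $I_{\mathrm{KL}}(S_i;S_{p(i)^\ell})$ regardless. The cap is attained only when the \emph{joint} law of $(S_i,S_j)$ exactly replicates that of $(S_i,S_{p(i)^\ell})$, and excluding this for non-parents cannot be done by comparing mixing weights alone: the averaging map $\mathcal{P}(\Sc^K)\to\mathcal{P}(\Sc)$ induced by $\Pi$ is far from injective when $K\ge2$, so the full-support weights above and the sparse weights $M_\Pi(s_{-\ell}\mid b)\,\mathbbm{1}\{s_\ell=b\}$ realizing $\Pi^\ell(\cdot\mid b)$ could in principle produce the same child law. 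A rigorous argument must therefore couple the acyclic ($d$-separation) structure with the strict positivity (and possibly the distinctness in \Cref{Assp: MC-non sym}) of the kernels; the remaining steps --- the algebraic collapse in (a) and the variational identity in (b) --- are routine.
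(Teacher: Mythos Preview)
Your route is exactly the paper's: both proofs rewrite $\tilde{I}^\ell_{\mathrm{KL}}(S_i;S_j)$ using the fact that the kernel weight $P_{S_j|\Pi}(s)\Pi^\ell(s'|s)=\mu_\Pi(s)\Pi^\ell(s'|s)$ is the joint law of $(S_i,S_{p(i)^\ell})$ and does not depend on $j$, and then reduce $\arg\max_j \tilde{I}^\ell_{\mathrm{KL}}(S_i;S_j)$ to $\arg\min_j \Eb_{\Pi}\bigl[D_{\mathrm{KL}}\bigl(\mu_\Pi(\cdot)\Pi^\ell(\cdot\mid\cdot)\,\|\,P_{S_i,S_j|\Pi}\bigr)\bigr]$. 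Your additive identity $\tilde{I}^\ell_{\mathrm{KL}}(S_i;S_j)=I_{\mathrm{KL}}(S_i;S_{p(i)^\ell})-\Eb_\Pi[D_{\mathrm{KL}}(\cdot\,\|\,\cdot)]$ is the same statement with the $j$-independent constant pulled out.

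Where you differ is in care, not content. The paper's argument ends with the bare line $\arg\min_j \Eb_\Pi[D_{\mathrm{KL}}(\mu_\Pi\Pi^\ell\,\|\,P_{S_i,S_j|\Pi})]=p(i)^\ell$ and stops; it does not separately handle the other-parent case (your appeal to \Cref{Assp: MC-non sym}) or the non-parent case, and it does not supply the strictness argument you correctly flag as the delicate step. So the ``main obstacle'' you identify is genuine, but it is a gap the paper's own proof shares rather than closes --- your proposal is already more rigorous than the published version on this point.
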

The above attention concentration phenomenon arises due to the unique properties of the KL divergence and the KL-KG-MI. Intuitively, for any non-root node $i$ and head $\ell$, identifying the node $j$ that maximizes $\tilde{I}^\ell_{\mathrm{KL}}(S_i; S_j)$ is equivalent to identifying the node $j$ whose joint distribution with $i$ is the closest to $P_{S_i|\Pi}(\cdot)\Pi^\ell(\cdot| \cdot)$ under the KL divergence. The node that optimally aligns its distribution with $P_{S_i|\Pi}(\cdot)\,\Pi^\ell(\cdot|\cdot)$   corresponds to the $\ell$-th parent of $i$, namely $p(i)^\ell$. The proof of Theorem~\ref{coro: KL conver} can be found in Appendix~\ref{appd: subs: KL}.

\textbf{Root nodes diverge.} In contrast, for root nodes, each attention head \( \ell \) diverges slightly. This occurs because, for any node $j$ preceding $i$, the true KG-MI $\tilde{I}^\ell_f(S_i; S_j)$ is identically zero. Consequently, the attention updates are dominated by  the estimation error $\epsilon$.

If $\epsilon$ is small, this irregularity does not significantly affect the convergence of the objective function. Intuitively, for root nodes $i$, the estimated $f$-\abname $\hat{I}^\ell_f(S_i; S_j)$ is always upper bounded by $\epsilon$, making it negligible compared to the \abname of non-root nodes and, consequently, almost not influencing the convergence behavior of the objective function.

\section{Experiments}\label{sec: experiment}

In this section, we present experiments using synthetic data to support our theoretical findings, and show that our methods outperform—or at least match—classical baselines on certain DAGs. 

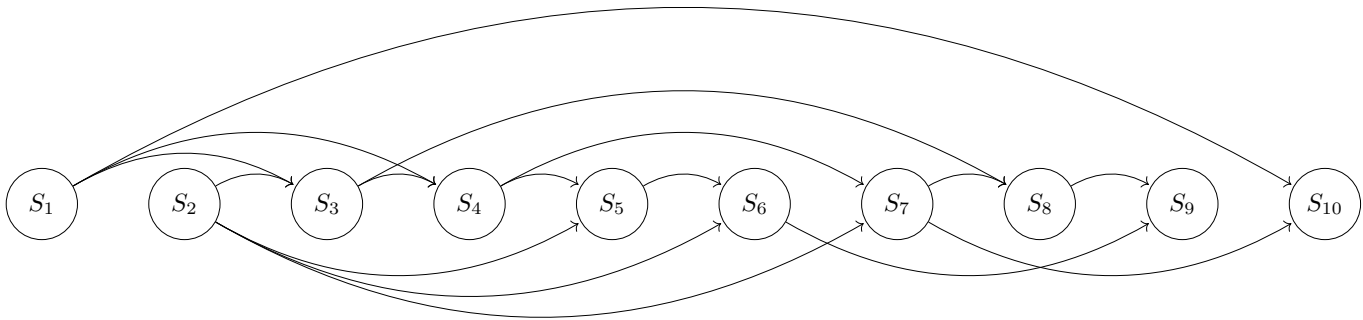
\begin{figure}[t]
    \centering
\resizebox{\textwidth}{!}{
\begin{tikzpicture}[
    ->, % Define arrow format
    node distance=2cm, % Distance of node
    every node/.style={circle, draw, minimum size=1cm}, % Node format
    shorten >=1pt % shorten arrow
]

% Define node
\node (S1) at (0, 0) {$S_1$};
\node (S2) [right of=S1] {$S_2$};
\node (S3) [right of=S2] {$S_3$};
\node (S4) [right of=S3] {$S_4$};
\node (S5) [right of=S4] {$S_5$};
\node (S6) [right of=S5] {$S_6$};
\node (S7) [right of=S6] {$S_7$};
\node (S8) [right of=S7] {$S_8$};
\node (S9) [right of=S8] {$S_9$};
\node (S10) [right of=S9] {$S_{10}$};

% Draw Edge
\draw (S1) edge[bend left] (S3);
\draw (S2) edge[bend left] (S3);

\draw (S1) edge[bend left] (S4);
\draw (S3) edge[bend left] (S4);

\draw (S4) edge[bend left] (S5);
\draw (S2) edge[bend right] (S5);

\draw (S5) edge[bend left] (S6);
\draw (S2) edge[bend right] (S6);

\draw (S4) edge[bend left] (S7);
\draw (S2) edge[bend right] (S7);

\draw (S7) edge[bend left] (S8);
\draw (S3) edge[bend left] (S8);

\draw (S8) edge[bend left] (S9);
\draw (S6) edge[bend right] (S9);

\draw (S1) edge[bend left] (S10);
\draw (S7) edge[bend right] (S10);

\end{tikzpicture}}
\caption{The structure of the meta-graph with 10 nodes. It includes two root nodes, 1 and 2, and eight non-root nodes, 3-10, each with an in-degree of 2. The edge set $E=\{(1,3),(1,4),(1,10),(2,3),(2,5),(2,6),(2,7),(3,4),(3,8),(4,5),(4,7),(5,6),$ $(6,9),(7,8),(7,10),(8,9)\}$.}
\label{fig:structure of 10 length graph}
\end{figure}

\subsection{Experiment Settings} \label{data gen}
We first introduce the \emph{data model}. In all the tasks, we consider order-2 Markov chains with a state space consisting of 3 states. We introduce meta-graphs, which are used to generate larger graphs by combining the meta-graphs. In the meta-graphs, the first two nodes are root nodes, and the remaining are non-root nodes with in-degree 2.  

\textbf{Graph Structure.} We have two meta-graphs, whose numbers of nodes are 5 and 10, and the graph with 5 nodes is a subgraph of the graph with 10 nodes. \Cref{fig:structure of 10 length graph} shows the structure of the meta-graph with 10 nodes, and the meta-graph with 5 nodes consists of the nodes from $S_1$ to $S_5$ with the edges only depending on $S_1$ to $S_5$.

\textbf{Transition Kernel.} For simplicity, we set the distribution  $P_\pi$  to be a Dirac delta distribution centered at a single transition kernel, i.e., we fix the transition kernel. The sample space $\mathcal{S} = \{0,1,2\}$, and the transition kernel $\pi$ are defined as follows (where the format below indicates the transition probabilities from the pair of parent states $(S_1,S_2)$ to each state $\{0,1,2\}$ as  $(S_1,S_2): \{0: \pi(0|S_1,S_2), \ 1: \pi(1|S_1,S_2), \ 2: \pi(2|S_1,S_2)\}$):
\begin{align}
& (0, 0): \{0: 0.1, \ 1: 0.5 - p_0, \ 2: 0.4 + p_0\}, \nonumber\\
& (0, 1): \{0: 0.2 + p_1, \ 1: 0.3, \ 2: 0.5 - p_1\}, \nonumber\\
& (0, 2): \{0: 0.3, \ 1: 0.4 - p_2, \ 2: 0.3 + p_2\}, \nonumber\\
& (1, 0): \{0: 0.5 - p_3, \ 1: 0.3, \ 2: 0.2 + p_3\}, \nonumber\\
& (1, 1): \{0: 0.4 + p_4, \ 1: 0.4 - p_4, \ 2: 0.2\}, \nonumber\\
& (1, 2): \{0: 0.2 + p_5, \ 1: 0.3, \ 2: 0.5 - p_5\}, \nonumber\\
& (2, 0): \{0: 0.6 - p_6, \ 1: 0.2 + p_6, \ 2: 0.2\}, \nonumber\\
& (2, 1): \{0: 0.1 + p_7, \ 1: 0.5 - p_7, \ 2: 0.4\}, \nonumber\\
& (2, 2): \{0: 0.2 + p_8, \ 1: 0.3, \ 2: 0.5 - p_8\}. \label{Eq: emp_transit} 
\end{align}
Here  $\{p_i\}_{i=0}^8$ represents a set of tunable probabilities which can be adjusted in different cases. For each sample, we follow the process introduced in Section \ref{subs: data model} to obtain the state of each position step by step according to the structure in Figure \ref{fig:structure of 10 length graph} and the above transition kernels.

%We next introduce our network model and training algorithm.

\textbf{Transformer Structure and Training Settings.} We use a transformer consisting of one attention layer with $2$ heads. We adopt  the GA-based procedure outlined in \Cref{alg:meta}. The initialization of the attention layer follows the settings in Section~\ref{subs: transformer architecture}.  We set the learning rate $\eta=10$.
%In accordance to   \Cref{alg:meta} (gradient ascent), we use a one-layer transformer consisting of only one attention layer with $2$ heads, and the initialization of the attention layer follows the settings in Section~\ref{subs: transformer architecture}.   We set the learning rate $\eta=10$.

\subsection{Experiment Results}
We first present our experiment results of \emph{attention score concentration}, \emph{convergence rate}, and \emph{impact of different choices of~$f$} on the meta-graph with 10 nodes as follows to validate our theoretical findings of \Cref{Thm: Attention Concentration,Thm: loss convergence}. 

\textbf{Attention Score Concentration.}
In \Cref{fig:atten-score heatmap}, we present the attention heatmaps generated by the trained transformer via GA for different KG-MIs. The heatmaps combine the outputs from both heads 1 and 2, with random sequences sampled using the meta-graph with 10 nodes. For the KL-\abname, we use the perfect (population) ground truth. For $\chi^2$-\abname, we estimate it using the method detailed in Appendix \ref{sec:example_pearsons}. For both KG-MIs, the heatmaps converge to the true adjacency matrix of the meta graph, corroborating our theoretical result in \Cref{Thm: Attention Concentration,coro: KL conver}. 
% While we do not provide theoretical guarantees for $\chi^2$-\abname like KL-\abname in \Cref{coro: KL conver}, \yl{do we want to explicitly say the above sentence?} in practice, it still effectively guides the transformer to learn the correct adjacency matrix.}
%\yh{rearrange the fig} 

\begin{figure*}[htbp]
    \centering
    \begin{tabular}{c@{\hspace{5pt}}c@{\hspace{5pt}}c@{\hspace{5pt}}c@{\hspace{5pt}}}
    DAG $\mathcal{G}$ & \quad ~~ Heatmap of KL (Population) & \quad ~~ Heatmap of $\chi^2$ (Estimate) &~~ \\ \includegraphics[width=0.26\textwidth]{IT/fig/Heatmap/heatmap_of_graph_w_num.pdf} &\quad ~
    \includegraphics[width=0.26\textwidth]{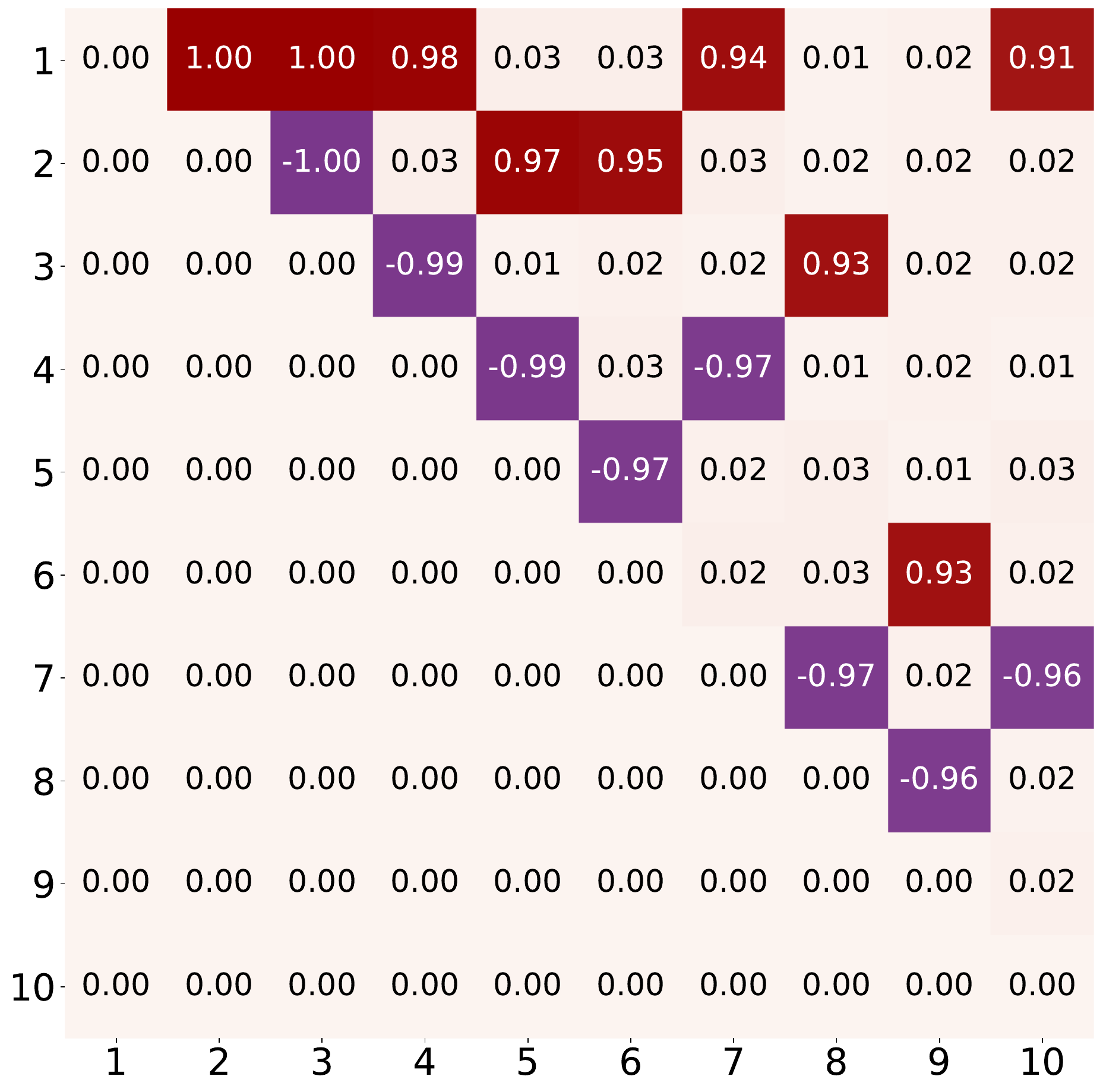} &\quad~
    \includegraphics[width=0.26\textwidth]{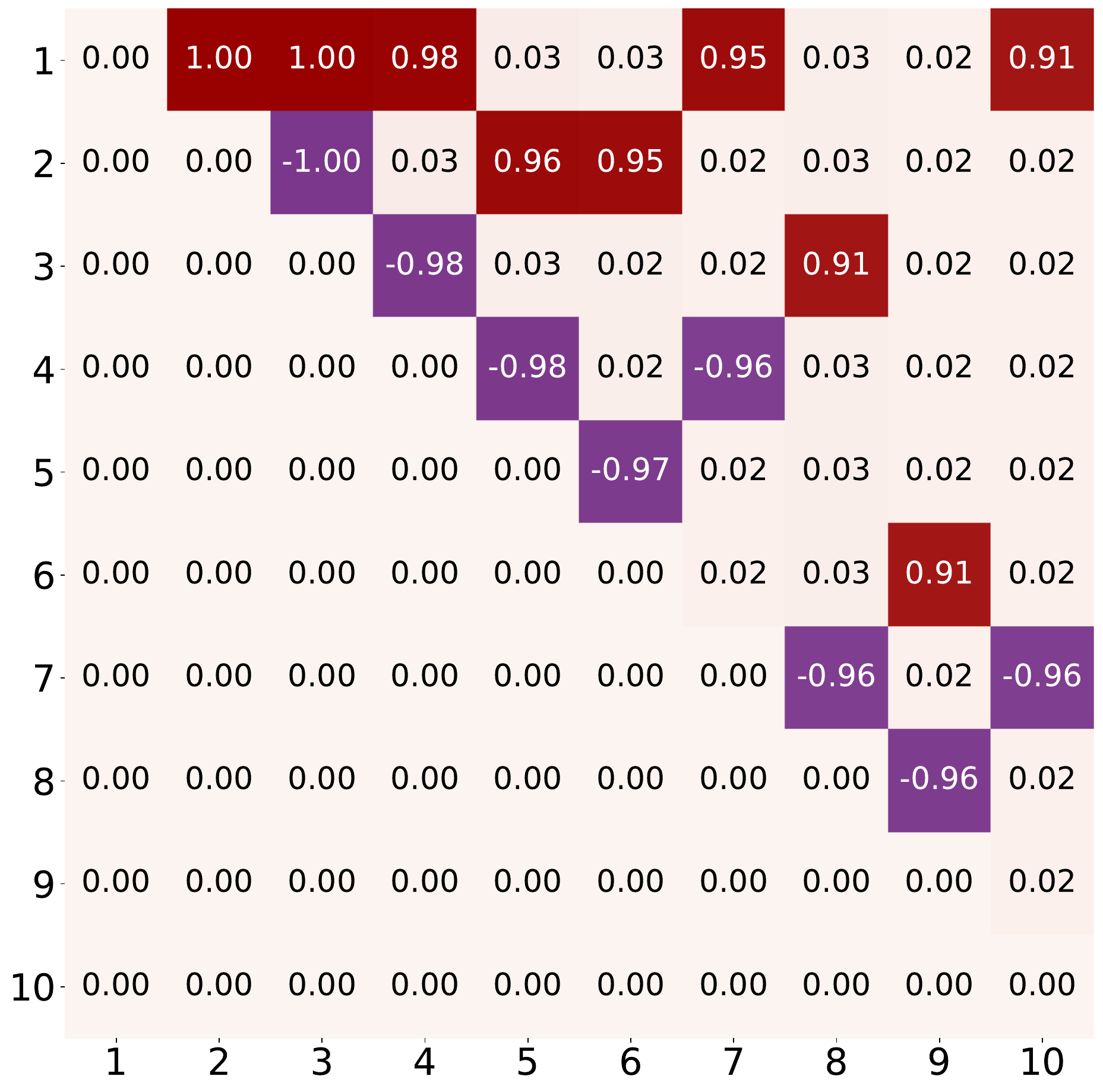} &~~
    \put(0,12){\includegraphics[width=0.05\textwidth]{IT/fig/Heatmap/colorbar_w_num.pdf}}
    \\
    \end{tabular}
    \caption{\textbf{The attention scores of the trained transformer using KG-MI.} We compare the heatmaps between the adjacency matrix of graph $\Gc$ (left), and the trained attention scores trained with true KL-\abname (middle) and with estimated $\chi^2$-\abname (right). To present the learned multi-head attention scores, for non-root nodes (nodes $3$ to $10$ above), we add the attention score of both heads $1,2$ in our transformer into one heatmap and change the highest score of head $2$  to be negative to distinguish it from head $1$. In particular, the first two nodes of the graph are root nodes. As a result, for node $1$, heads $1$ and $2$ will not attend to anything and for node $2$, heads $1$ and $2$ will attend to node $1$. In conclusion, both heatmaps converge to the true adjacency matrix of $\Gc$.}
    % for each column, the two patches with value 1 represent the two parent nodes of the corresponding position. For example, the parents of position 6 are position 2 and position 5.
    \label{fig:atten-score heatmap}
\end{figure*}

\textbf{Convergence Rate.} In \Cref{fig:convergence rate}, we plot how the convergence rates of the objective function depend on the length of the random sequences $T$ (\Cref{fig:relation to sequence length}) and information gap $\Delta$ (\Cref{fig: relation to gap}), where the convergence rate is evaluated by the stopping epoch to achieve $\varepsilon_{\mathrm{attn}}$. For the length $T$, error $\varepsilon_{\mathrm{attn}}=0.1$, we generate a meta-sequence by the meta-graph and identical transition kernel. We obtain sequences of different lengths by appending meta-sequences repeatedly, which ensures that the information gap $\Delta$ of sequences with different lengths is identical. For the information gap $\Delta$, error $\varepsilon_{\mathrm{attn}}=0.001$, we fix the meta-graph and change the transition kernel to get different information gaps $\Delta$. \Cref{fig:convergence rate} illustrates that the stopping epoch scales linearly in both $T^2\log T$ and $\frac{1}{\Delta}$, which matches the upper bounds given in \Cref{Thm: loss convergence}, verifying that our upper bounds are rather tight in some practical scenarios.
%, which specifies the number of epochs required to achieve  $\varepsilon_{\mathrm{attn}}=0.1$. Thus, even though we do not have a lower bound, the experiments suggest that our upper bounds are rather tight in some practical scenarios.
% Form Based on the lower bound derived in Theorem \ref{Thm: meta-population}, for the reliance on $T$, we provide the relationship between convergence epoch and $T^2\log(T)$ and for gap we show the relation between the convergence epoch and $\frac{1}{\Delta}$. Both of the relation are built on the graph structure with 10 basic positions. As we can see, both of the relationship with regard to $T$ and $\Delta$ are linear, which coincides with the lower bound $\tau_{*}$.

\begin{figure*}[htbp]
    \centering
\begin{subfigure}[t]{0.3\textwidth}
    % \begin{subfigure}{0.48\textwidth}
    \centering
    \includegraphics[width=0.95\linewidth]{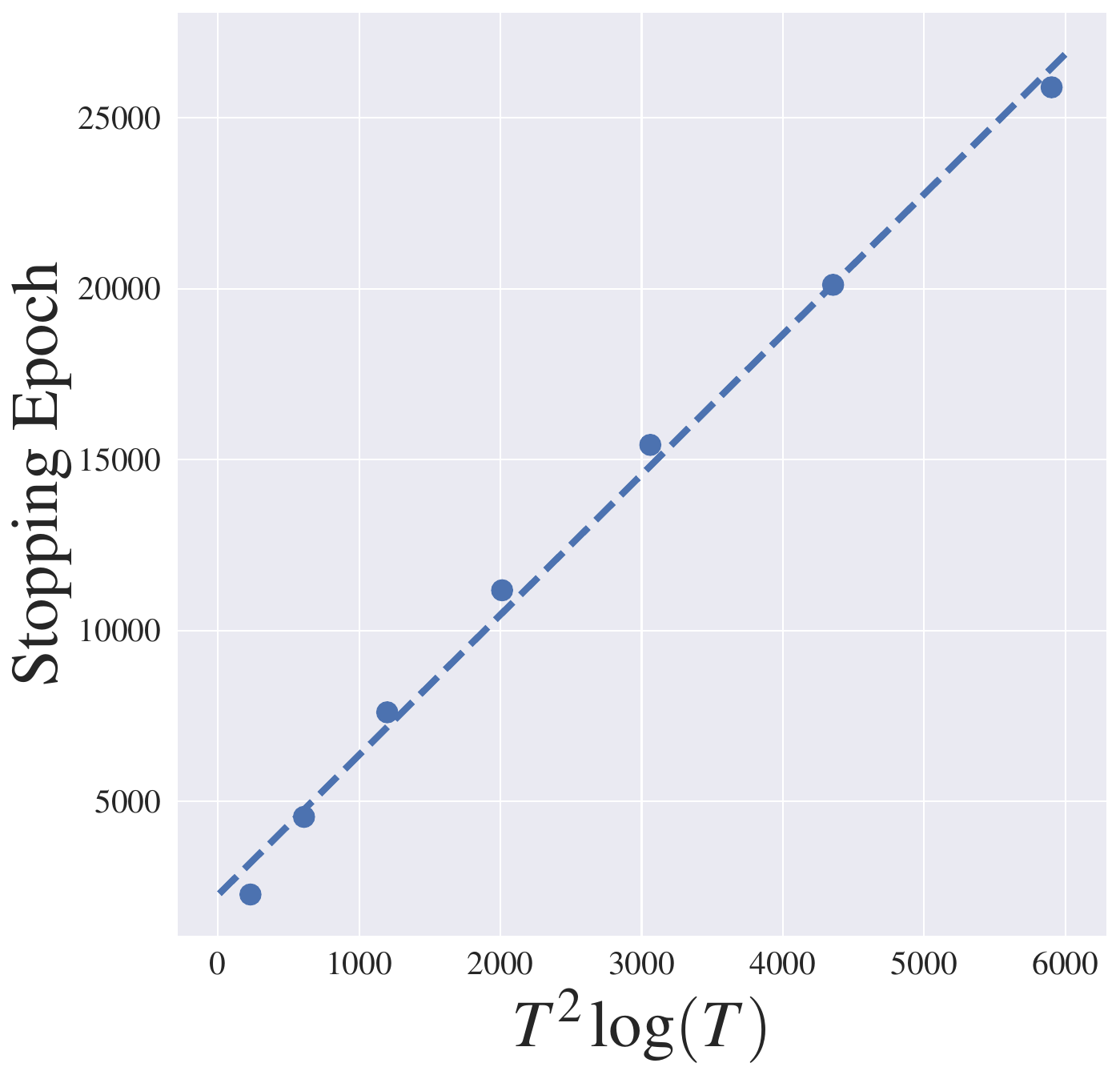}
    \caption{Random Sequences Length $T$}
    \label{fig:relation to sequence length}
    % \end{subfigure}
    % \begin{subfigure}{0.48\textwidth}
        % \includegraphics[width=\linewidth]{icml2025/figures/wrt_gap_aux.pdf}
        % \caption{Information gap $\Delta$}
        % \label{fig: relation to gap}
            
    % \end{subfigure}
\end{subfigure}
\hspace{-0em}
\begin{subfigure}[t]{0.3\textwidth}
\centering
\includegraphics[width=0.95\linewidth]{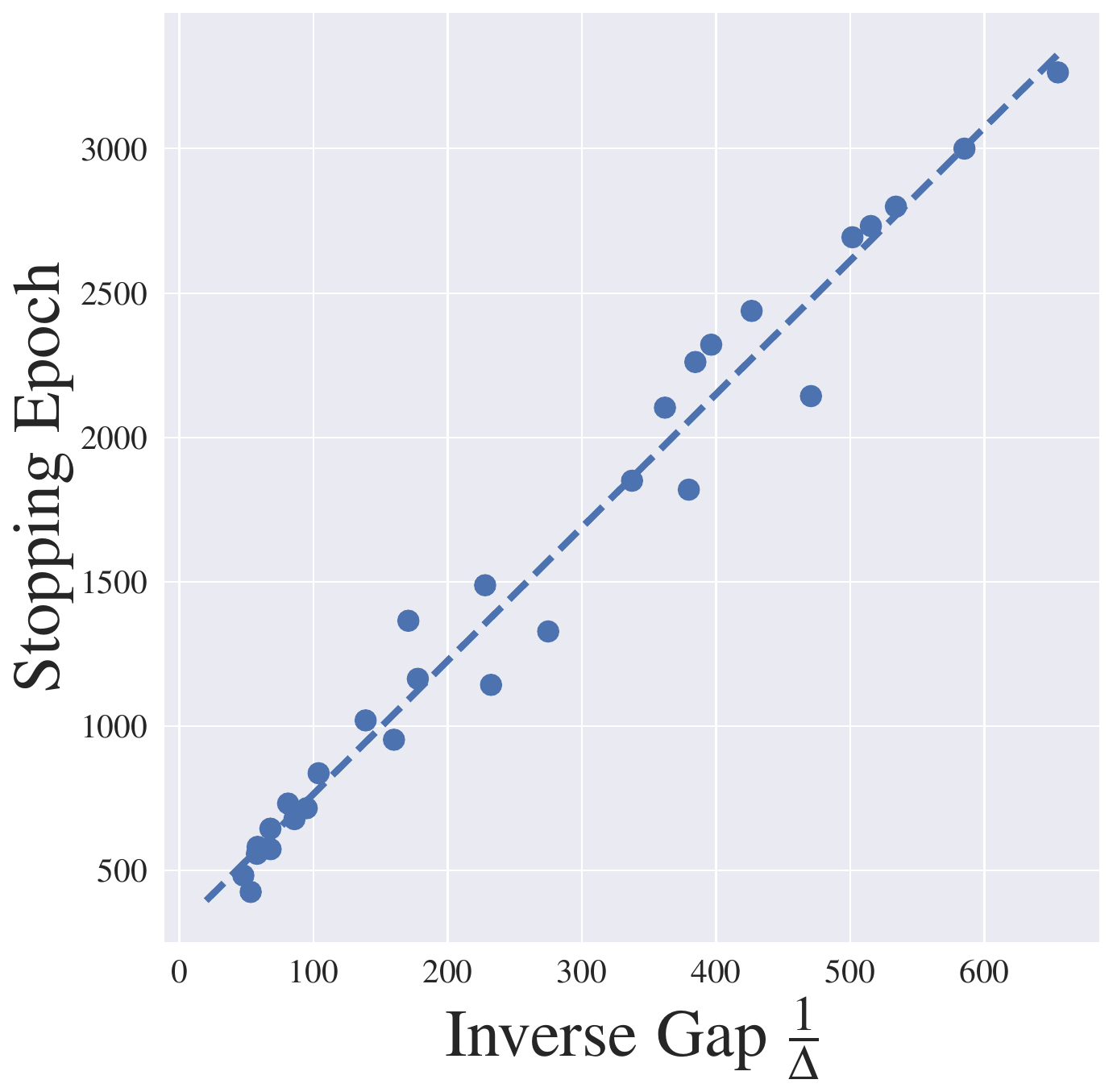}
    \caption{Information gap $\Delta$}
        \label{fig: relation to gap}
\end{subfigure}
\hspace{-0em}
\begin{subfigure}[t]{0.3\textwidth}
\includegraphics[width=0.95\linewidth]{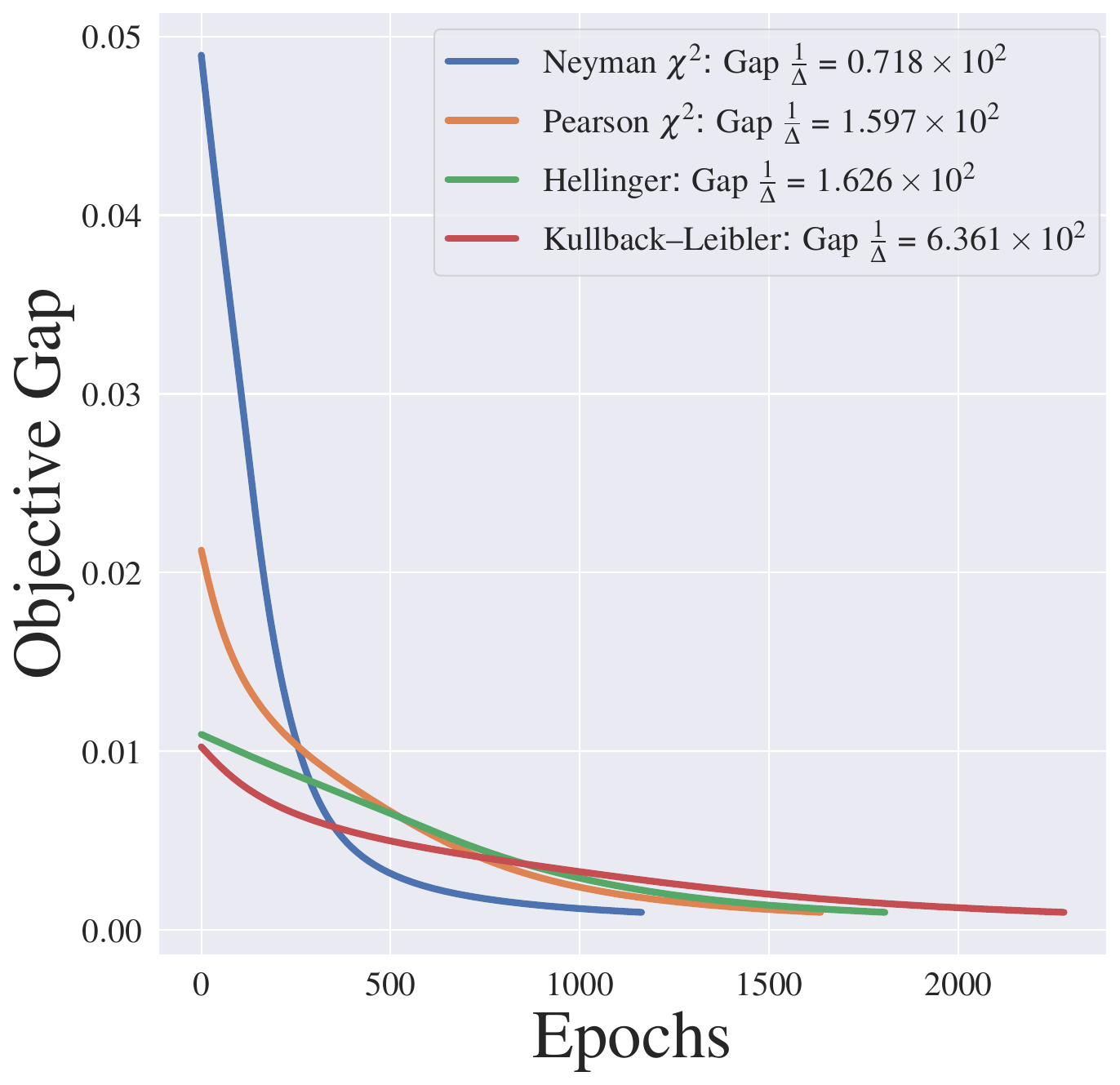}
    % \hspace{0.416em}
    \caption{$f$-\abname for different $f$'s.}
    \label{fig: compreh f mi}
\end{subfigure}
\caption{Convergence rates w.r.t.\ the length of the random sequences $T$ (subfigure a) and information gap $\Delta$ (subfigure  b). Comprehensive study of different $f$-\abname's for the sample random sequences (subfigure  c). }
    \label{fig:convergence rate}
\end{figure*}

\textbf{Impact of different choices of $f$ in \abname.} In \Cref{fig: compreh f mi}, we demonstrate how the suboptimality gap (the gap between the objective function and its maximum value) decays with respect to the number of epochs (iterations) for different choices of $f$ that result in KL, Hellinger's, Pearson's $\chi^2$- and Neyman's $\chi^2$-\abname. For the same data model, where the random sequences are of length 10, and are generated by the meta-graph with an identical transition kernel, different choices of $f$-\abname have different $\Delta$, leading to different convergence rates. Among the four KG-MIs presented in \Cref{fig: compreh f mi}, the larger $\Delta$ of the $f$-\abname, the faster the objective converges. We also observe that $\Delta$ of Pearson's $\chi^2$-\abname and Hellinger's \abname are close and have similar convergence rates, which 
corroborates the upper bounds given in \Cref{Thm: loss convergence} and is consistent with the empirical results in \Cref{fig:convergence rate}.

\textbf{Scalability with length $T$.} To analyze the scalability with length $T$, we have extended our experiments to larger values of $T$, up to 100. As in \Cref{fig: large T}, we conduct a comprehensive study of different $f$-KG-MI's for $T=100$. We observe that a larger information gap $\Delta$ in $f$-KG-MI still leads to a faster convergence rate. Specifically, $\chi^2$-KG-MI, which has the largest $\Delta$ among the three KG-MIs, converges the fastest, while KL-KG-MI, with the smallest $\Delta$, converges the slowest.

\subsection{Comparison with Baseline Algorithms}

\begin{figure*}[ht]
\centering
\begin{minipage}{0.3\textwidth}
    \centering
\includegraphics[width=0.95\linewidth]{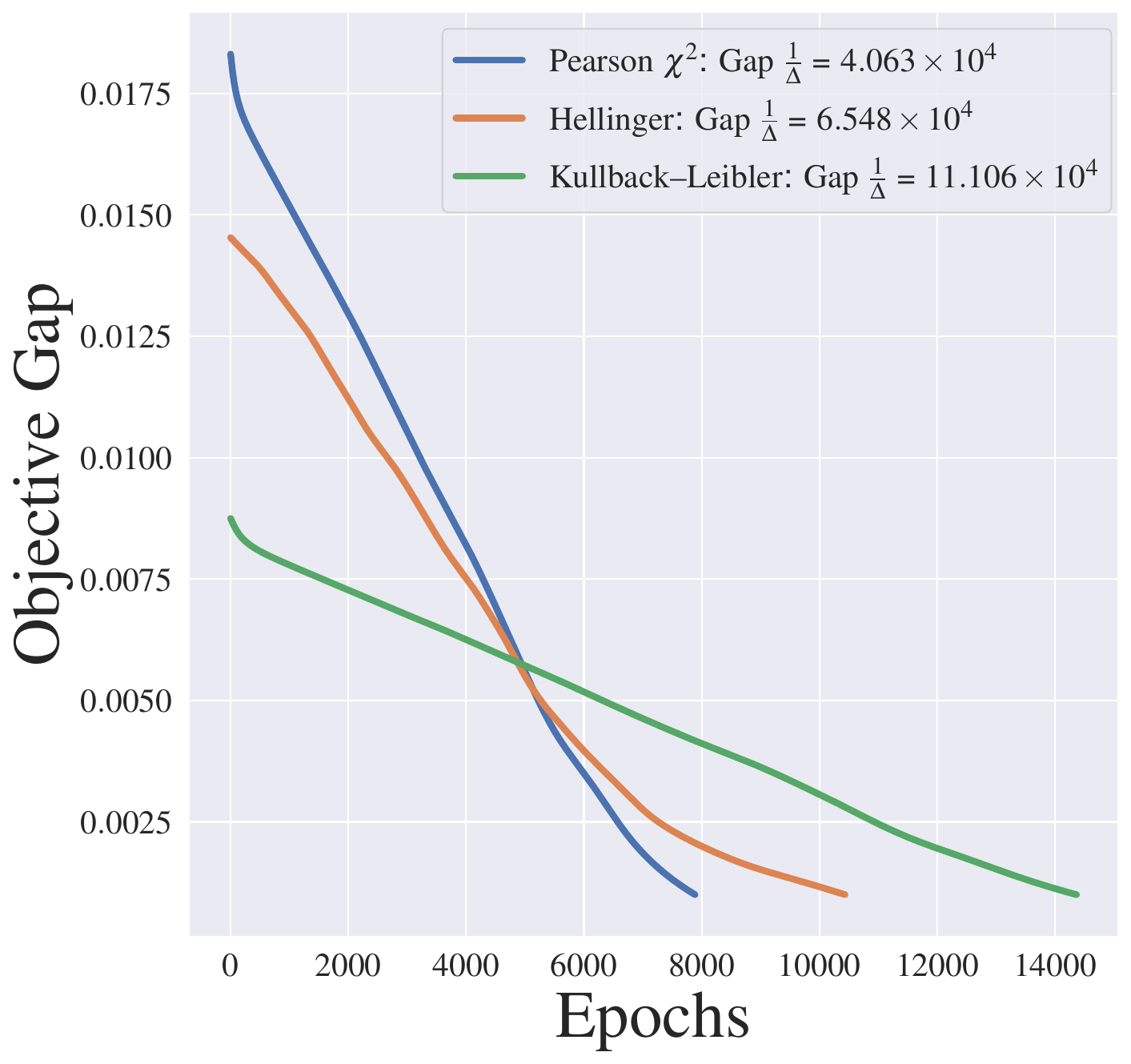}
    \caption{Comprehensive Study of different $f$-\abname's with large $T=100$.}
    \label{fig: large T}
\end{minipage}%
\hfill
\begin{minipage}{0.7\textwidth}
    \centering
\begin{tabular}{lccc}
\toprule
\textbf{Method} & \textbf{F1 ($\uparrow$)} & \textbf{SHD ($\downarrow$)} & \textbf{Running Time (s)} \\
\midrule
NOTEARS~\citep{zheng2018dags} & 0.210 $\pm$ 0.103 & 22.20 $\pm$ 2.588 & 7.279 $\pm$ 0.867 \\
PC~\citep{spirtes1991algorithm}     & 0.267 $\pm$ 0.033 & 11.60 $\pm$ 0.548 & 0.096 $\pm$ 0.012 \\
MMHC~\citep{tsamardinos2006max}    & 0.581 $\pm$ 0.050 & 6.000 $\pm$ 1.000 & 1.830 $\pm$ 0.043 \\
HC~\citep{chickering2002optimal}      & 1.000 $\pm$ 0.000 & 0.000 $\pm$ 0.000 & 1.134 $\pm$ 0.045 \\
A*~\citep{hart1968formal,yuan2013learning}     & 1.000 $\pm$ 0.000 & 0.000 $\pm$ 0.000 & 1.113 $\pm$ 0.035 \\
\textbf{Ours (KG-KL-MI)} & \textbf{1.000 $\pm$ 0.000} & \textbf{0.000 $\pm$ 0.000} & \textbf{0.057 $\pm$ 0.001} \\
\bottomrule
\end{tabular}
\captionof{table}{Comparison of different methods.  The \textbf{bold} represents the best.}
\label{tab:Performance comparison}
\end{minipage}
\end{figure*}

%\textbf{Comparison with baselines.}
We next show that our transformer-based method in \Cref{alg:meta} outperforms—or at least matches—classical baselines on the same data model. Specifically, we evaluate the performance across five independently sampled datasets, each containing $10,000$ samples. Our comparison includes five established methods: NOTEARS~\cite{zheng2018dags}, PC~\cite{spirtes1991algorithm}, MMHC~\cite{tsamardinos2006max}, HC~\cite{chickering2002optimal}, and A$^*$ search~\cite{hart1968formal,yuan2013learning}. We employ both the  F1 score~\cite{van2004geometry} and the Structural Hamming Distance (SHD)~\cite{tsamardinos2006max} as evaluation metrics.  NOTEARS, HC, and A$^*$ search were adapted to incorporate the provided topological ordering. As shown in \Cref{tab:Performance comparison}, our transformer-based method, together with HC and A$^*$ search, achieves comparable performance, accurately recovering the entire graph structure in terms of both the F1 score and the SHD. Importantly, our method also outperforms others in computational efficiency, completing the task in a mere 0.057 seconds---significantly faster than all competing techniques. This remarkable speed underscores the efficiency gains enabled by our transformer-based framework. The key advantage stems from our parameterized model, which facilitates effective graph learning within a principled optimization paradigm, leveraging well-established algorithms to achieve both accuracy and speed.

\section{Conclusion}\label{sec: conclusion}

In this work, we address the critical challenge of training transformers to accurately learn multi-parent DAGs. We introduce a novel metric, KG-MI, and develop an objective function that integrates KG-MI with a multi-head attention mechanism. Our approach demonstrates that optimizing this objective via gradient ascent effectively enables the model to distinguish multiple parent nodes for each node. Moreover, by leveraging estimates of the KG-MIs, we establish that our framework guarantees convergence to the global optimum within polynomial time. We further analyze the attention patterns at convergence, showing that when the $f$-divergence is set to the KL divergence, the trained transformer's attention scores precisely recover the DAG's adjacency matrix, and hence, the structure of the underlying DAG.

Our theoretical findings provide a firm theoretical foundation for the use of attention-based models in graph structure learning. Experimental results and analyses indicate that our approach not only enhances understanding of attention mechanisms but also offers practical guidance for selecting suitable $f$-divergences to improve training efficiency and accuracy. While our graph structure learning guarantees are currently specific to the KL divergence setting, our results open promising avenues for extending these insights to other $f$-divergences, paving the way for broader applicability in complex graph learning tasks.

\allowdisplaybreaks

\begin{appendices}

\section{Extension to DAGs with non-uniform in-degrees.}\label{app: ext to non uni K}

The proposed method can be extended to DAGs with non-uniform in-degrees, as long as the transition kernels are compatible in a marginal sense. Formally, suppose the maximum in-degree is $K$, and some node $i$ has a smaller in-degree $K_i \leq K$. We denote the transition kernel corresponding to $K_i$ parents as $\pi_{K_i}$, and the transition kernel corresponding to $K$ parents as $\pi_{K}$. If the transition kernel $\pi_{K_i}(s'|s_1,\ldots,s_{K_i})$ shares the same marginal stationary distribution $\mu_\pi$ as $\pi_K$, and is the ``\emph{marginal distribution}'' of $\pi^K$ in the sense that 
\begin{align}
    \pi_{K_i}(s'|s_1,\ldots,s_{K_i})=\sum_{s_{K_{i+1}},\ldots,s_K}\pi_K(s'|s_1,\ldots,s_{K})\mu_\pi(s_{K_{i+1}})\ldots\mu_\pi(s_K). \label{Eq: condition of margin}
\end{align}

Under this setup, our method using $K$-head attention and the KG-MI loss remains applicable. Take KG-KL-MI as an example: for a node $i$ with $K_i$ parents, there are two cases. 
\begin{enumerate}
    \item The $K_i$ heads responsible for those marginals can successfully recover the true $K_i$ parents.
    \item The remaining $K-K_i$ heads will either (1) \textbf{redundantly concentrate} on the same parent set, or (2) fail to concentrate and assign attention to irrelevant non-parent nodes diffusely, similar to the behavior observed in the attention scores of root nodes described in \Cref{Thm: Attention Concentration}.
\end{enumerate}

In both scenarios, \textbf{the true parent set is identifiable}, by selecting the concentration heads out. To support this claim, we provide an empirical study on synthetic data with non-uniform in-degrees, demonstrating that our framework still recovers the correct structure in such settings.

We next provide an experiment on synthetic data to validate our claim above.

\begin{figure}[t]
    \centering
\resizebox{\textwidth}{!}{
\begin{tikzpicture}[
    ->, % Define arrow format
    node distance=2cm, % Distance of node
    every node/.style={circle, draw, minimum size=1cm}, % Node format
    shorten >=1pt % shorten arrow
]

% Define node
\node (S1) at (0, 0) {$S_1$};
\node (S2) [right of=S1] {$S_2$};
\node (S3) [right of=S2] {$S_3$};
\node (S4) [right of=S3] {$S_4$};
\node (S5) [right of=S4] {$S_5$};
\node (S6) [right of=S5] {$S_6$};
\node (S7) [right of=S6] {$S_7$};
\node (S8) [right of=S7] {$S_8$};
\node (S9) [right of=S8] {$S_9$};
\node (S10) [right of=S9] {$S_{10}$};

% Draw Edge
\draw (S1) edge[bend left] (S3);
% \draw (S2) edge[bend left] (S3);

\draw (S2) edge[bend left] (S4);
\draw (S3) edge[bend left] (S4);

\draw (S1) edge[bend left] (S5);
\draw (S4) edge[bend right] (S5);

\draw (S3) edge[bend left] (S6);
\draw (S4) edge[bend right] (S6);

\draw (S5) edge[bend left] (S7);
% \draw (S2) edge[bend right] (S7);

\draw (S6) edge[bend left] (S8);
\draw (S7) edge[bend left] (S8);

\draw (S8) edge[bend left] (S9);
% \draw (S6) edge[bend right] (S9);

\draw (S6) edge[bend left] (S10);
\draw (S9) edge[bend right] (S10);

\end{tikzpicture}}
\caption{The structure of the meta-graph with 10 nodes which includes two root nodes, 1 and 2, and eight non-root nodes, 3-10, among which nodes $3,7,9$ have in-degrees $1$, and nodes $4,5,6,8,10$ have in-degrees $2$. The edge set $\widetilde{E}=\{(1,3),(2,4),(3,4),(1,5),(4,5),(3,6),(4,6),(5,7),(6,8),(7,8),(8,9),(6,10),(9,10)\}$.}
\label{fig:structure of 10 length graph_non unifrm}
\end{figure}
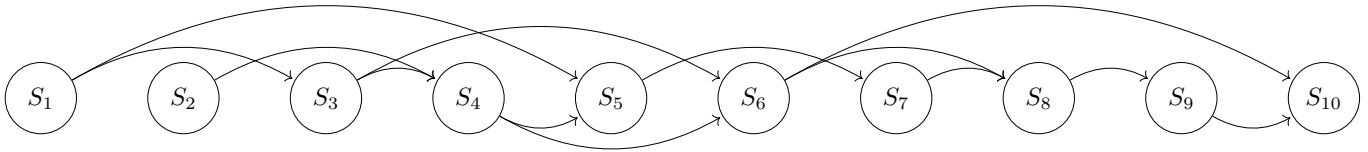

\textbf{Data Model.}
As in \Cref{fig:structure of 10 length graph_non unifrm}, we consider a different DAG $\widetilde{\mathcal{G}}=([T],\widetilde{E})$ from  the previous ${\mathcal{G}}$ with $T=10$ and maximum in-degree of $2$, and the in-degree can be $0,1,2$. For the transition kernel, we consider the same joint kernel as in \eqrefn{Eq: emp_transit} with  marginal kernels satisfying \eqrefn{Eq: condition of margin}.

\begin{figure*}[t]
    \centering
    \begin{tabular}{c@{\hspace{5pt}}c@{\hspace{5pt}}c@{\hspace{5pt}}c@{\hspace{5pt}}}
    DAG $\mathcal{G}$ & \quad ~~ Heatmap of KL (Head 1) & \quad ~~ Heatmap of KL (Head 2) &~~ \\ \includegraphics[width=0.26\textwidth]{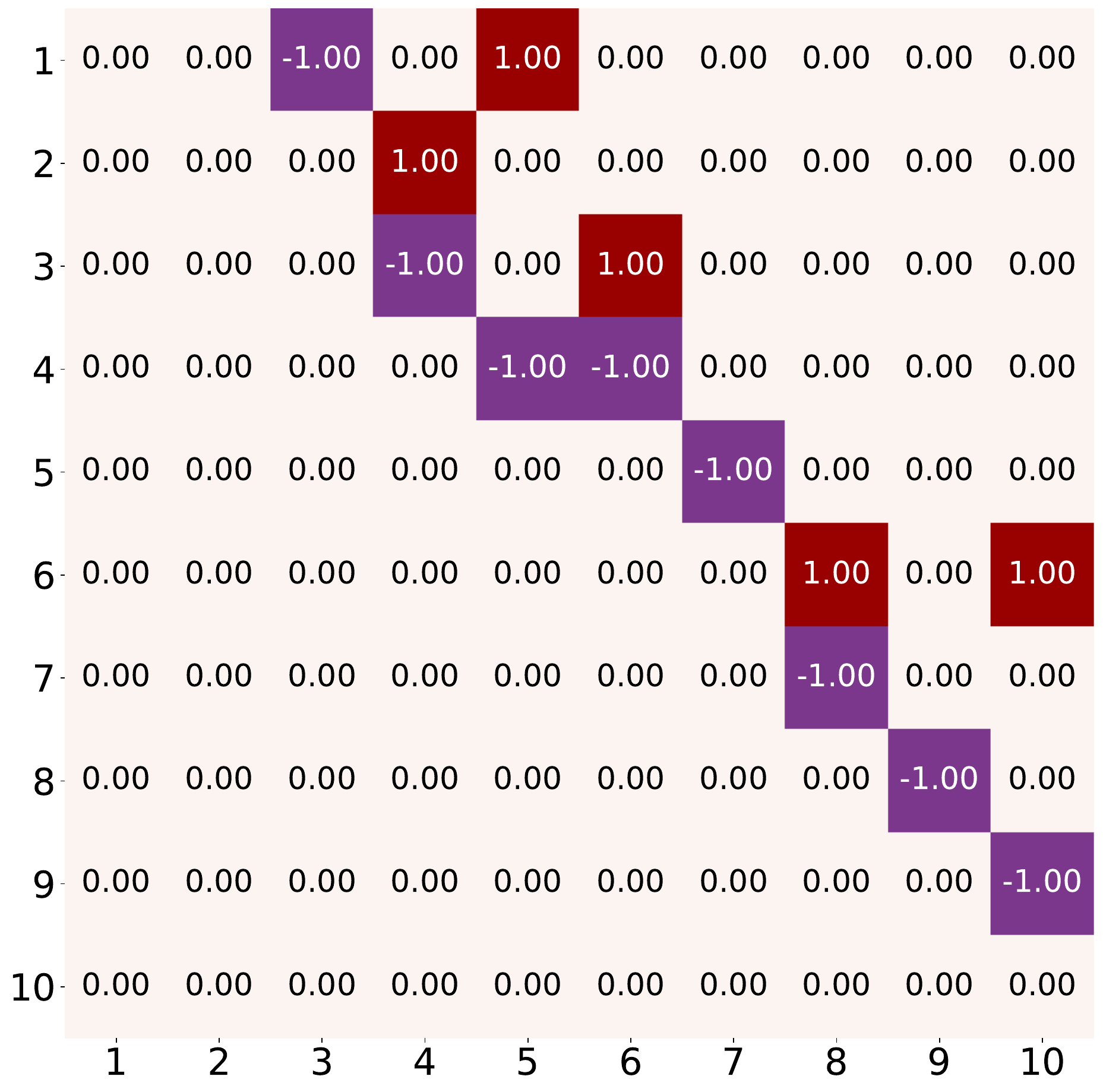} &\quad ~
    \includegraphics[width=0.26\textwidth]{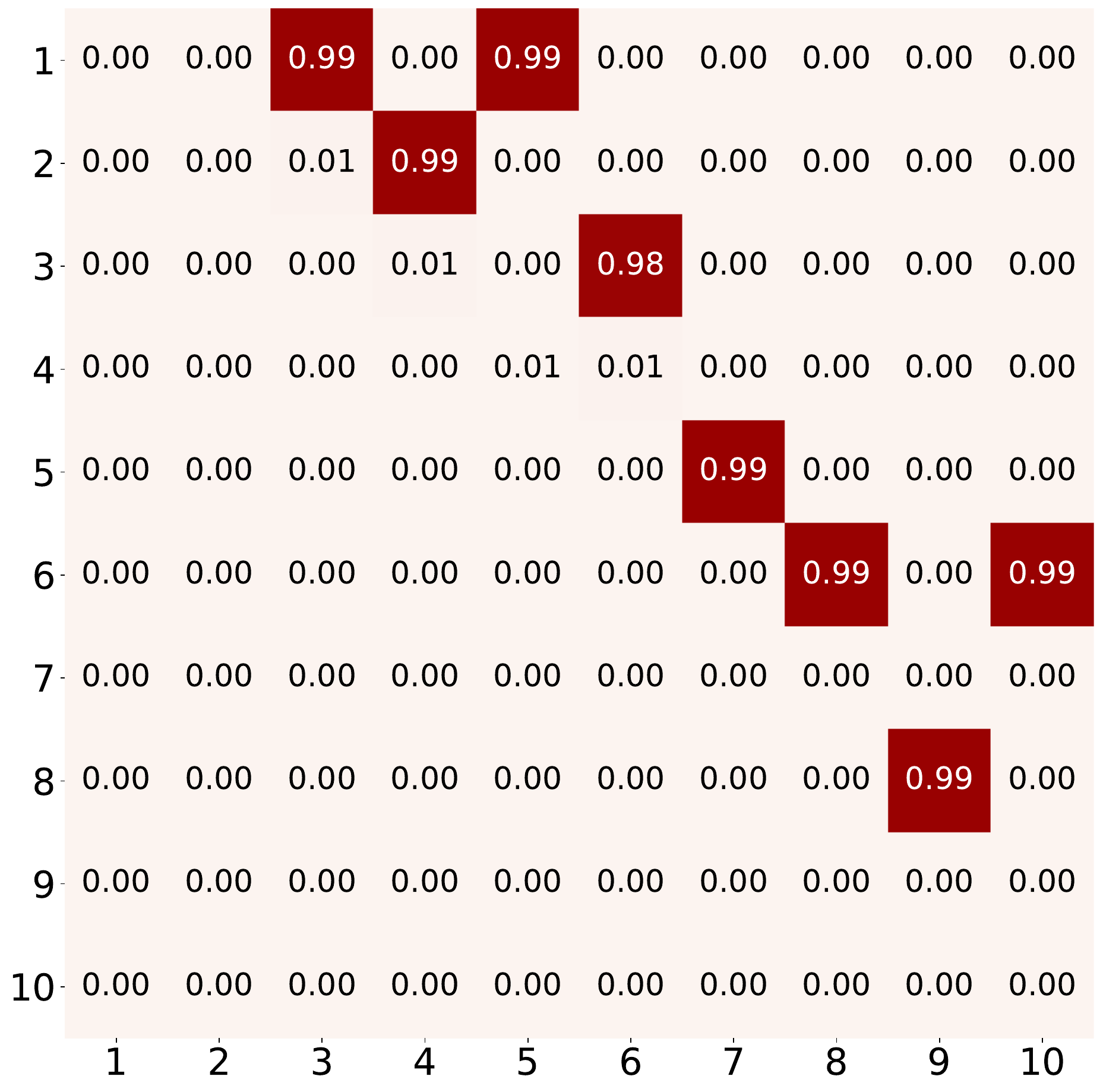} &\quad~
    \includegraphics[width=0.26\textwidth]{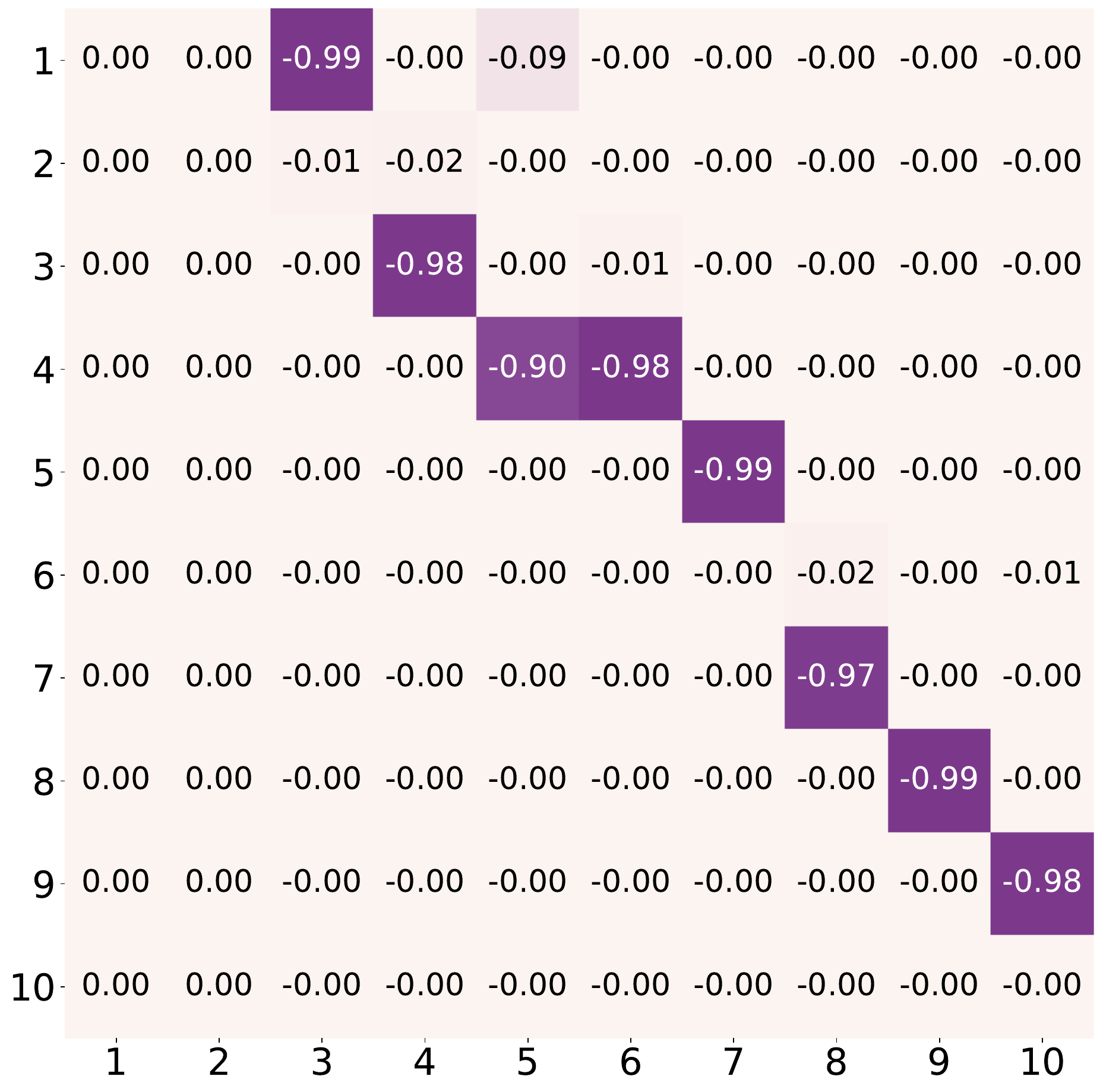} &~~
    \put(0,12){\includegraphics[width=0.05\textwidth]{IT/fig/Heatmap/colorbar_w_num.pdf}}
    \\
    \end{tabular}
    \caption{\textbf{Learning general DAG with non-uniform in-degrees.}  
\Cref{fig:general dag} compares heatmaps of the true adjacency matrix of graph $\mathcal{G}$ (first column) with the attention patterns learned by two heads using KL mutual information under a general DAG with non-uniform in-degrees.
}
    % for each column, the two patches with value 1 represent the two parent nodes of the corresponding position. For example, the parents of position 6 are position 2 and position 5.
    \label{fig:general dag}
\end{figure*}

As  shown in \Cref{fig:general dag}, the attention scores successfully capture the structure of the underlying graph $\mathcal{G}$. In particular, for nodes with in-degree 1, there are two cases. 
\begin{enumerate}
    \item \textbf{Nodes 3 and 9:} both attention heads \textbf{redundantly concentrate on the same single parent}, with a combined attention score of two heads close to 2. This corresponds exactly to the first case discussed above.
    \item \textbf{Node 7:} one head correctly concentrates on its single parent (node 5), while the other head fails to concentrate and instead diffusely assigns attention to irrelevant, non-parent nodes—matching the second case described above.
\end{enumerate}

\section{Discussion of Properties of Transition Kernels}\label{sec: MC concentration}
We start by introducing notation to be used in this section and the remainder of the appendix. Let $I_{i,j}$ be the $T \times T$ matrix with 1 at the $(i,j)$-th position, and 0 elsewhere, and let $1_T$ be the $T$-dimension vector with each entry being 1. We write $a \lesssim b$ if there exists an absolute constant $C > 0$ not depending on $a,b$ such that $a \leq C \cdot b$. We write $a=\mathrm{poly}(b)$ if $a=O(\mathrm{poly}(b))$.

In this section, we prove certain statistical properties of $K$-parent DAGs in our paper. To prove these lemmas, it suffices to consider the special case of a $K$-th-order Markov chain, where each non-root node has exactly its $K$ immediate predecessors as parents, and to apply standard results from Markov chain theory. The extension to a general DAG then follows.
 
\subsection{Discussion on the Stationary Distribution}\label{Appd: 3}
We first establish the existence and uniqueness of the stationary distribution and its marginals, which ensures that the definition of the stationary distribution and its marginals in main body is well-founded.
\begin{lemma}[Existence of a Stationary Distribution and Identical Marginals]\label{lem:sta dis}
Consider a stationary transition kernel $\pi\in\mathcal{P}(\Sc|\Sc^K)$, and its corresponding $\tilde{\pi}$ (constructed according to Section~\ref{subs: data model}),   defined as  $\widetilde{\pi}\bigl( s'_1, s'_2, \ldots, s'_K | s_1, s_2, \ldots, s_K \bigr) =\pi\bigl(s'_K | s_1, s_2, \ldots, s_K\bigr)$, if $s'_i = s_{i+1} \text{ for } i = 1,\dots,K-1$, and 0, otherwise. If the transition kernel $\pi$ is positive in the sense that $\pi(s'|s_1,\ldots,s_K)>0$ for all $(s',s_1,\ldots,s_K)$, then 
\begin{itemize}
    \item \textbf{Existence and uniqueness of a stationary distribution.} There exists a unique stationary distribution 
    $M_{\widetilde{\pi}} \in \mathcal{P}(\Sc^K)$ of $\widetilde{\pi}$.
    \item \textbf{Identical marginals.}
    Define the stationary distribution of $\pi$ as $M_\pi=M_{\widetilde{\pi}}$. Define the marginal distribution of $M_{\pi}$ on the $i$-th coordinate as $\mu_{\pi,i}(s)=\sum_{j\neq i}\sum_{s_j\in\Sc}M_{\pi}(s_1,\ldots,s_{i-1},s,s_{i+1},\ldots,s_K), s\in \Sc$. The marginal distributions $\mu_{\pi,i}$ are identical across all $i \in [K]$. 
\end{itemize}
\end{lemma}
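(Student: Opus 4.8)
\textbf{Proof plan for Lemma~\ref{lem:sta dis}.}

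The plan is to reduce everything to the classical Perron--Frobenius / finite Markov chain theory by viewing $\widetilde\pi$ as the transition matrix of a Markov chain on the finite state space $\Sc^K$, and then exploit the very special ``shift'' structure of $\widetilde\pi$ to get the identical-marginals claim. First, for \emph{existence and uniqueness}, I would argue that $\widetilde\pi$ is an irreducible aperiodic stochastic matrix on $\Sc^K$. Irreducibility: given any two tuples $(s_1,\dots,s_K)$ and $(s'_1,\dots,s'_K)$, one can reach the second from the first in exactly $K$ steps by feeding in the symbols $s'_1,s'_2,\dots,s'_K$ one at a time; each such step has positive probability because $\pi$ is strictly positive (this is precisely the accessibility remark already made in Section~\ref{subs: data model}). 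Aperiodicity: since $\pi>0$, from any tuple $(s_1,\dots,s_K)$ one can return to it in $K$ steps (feed back $s_1,\dots,s_K$) and also, by first moving to a neighbour and wandering, in $K+1$ steps, so $\gcd$ of return times is $1$; alternatively, irreducibility plus positivity of $\widetilde\pi^K$ entrywise already forces a single aperiodic class. Hence by the standard finite-state Markov chain convergence theorem there is a unique stationary distribution $M_{\widetilde\pi}\in\mathcal P(\Sc^K)$ with $M_{\widetilde\pi}^\top\widetilde\pi = M_{\widetilde\pi}^\top$ and full support.

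For the \emph{identical marginals} claim, the key observation is that if $(S'_1,\dots,S'_K)$ is distributed according to $M_\pi$ and we apply one step of $\widetilde\pi$ to obtain $(S''_1,\dots,S''_K)$, then by the definition of $\widetilde\pi$ we have the deterministic relations $S''_i = S'_{i+1}$ for $i=1,\dots,K-1$. Stationarity says $(S''_1,\dots,S''_K)$ is again distributed as $M_\pi$. Therefore the marginal of $M_\pi$ on coordinate $i$ equals the marginal of $M_\pi$ on coordinate $i+1$: writing it out, $\mu_{\pi,i}(s) = \Pb(S''_i = s) = \Pb(S'_{i+1}=s) = \mu_{\pi,i+1}(s)$ for every $s\in\Sc$ and every $i\in[K-1]$. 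Chaining these equalities gives $\mu_{\pi,1}=\mu_{\pi,2}=\dots=\mu_{\pi,K}$, which is exactly the assertion, and justifies writing $\mu_\pi$ without a coordinate subscript. One should double-check the bookkeeping: the marginal of the pushforward $M_{\widetilde\pi}\widetilde\pi$ on coordinate $i$ is indeed $\sum$ over all other coordinates of the joint, and the coordinate-shift identity transfers this cleanly.

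I expect the only genuinely delicate point to be making the irreducibility/aperiodicity argument airtight for $\widetilde\pi$, since $\widetilde\pi$ itself is a very sparse matrix (most one-step transitions are forbidden) — so one cannot apply Perron--Frobenius directly to $\widetilde\pi$ but rather to $\widetilde\pi^K$, or equivalently one must phrase the communication argument in terms of length-$K$ paths. Concretely I would state: the entries of $\widetilde\pi^{\,K}$ are all strictly positive (any tuple reaches any tuple in exactly $K$ steps with positive probability), which simultaneously delivers irreducibility and aperiodicity, hence a unique stationary distribution by the standard theorem; this is clean and avoids fussing over $\gcd$'s. Everything after that is routine. The extension from the special $K$-th-order Markov chain to a general $K$-parent DAG, as noted at the start of the section, is immediate because the stationary distribution only depends on $\pi$ through $\widetilde\pi$ and not on the particular parent-assignment pattern of the DAG.
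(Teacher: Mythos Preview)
Your proposal is correct and follows essentially the same approach as the paper: both establish irreducibility by showing that $\widetilde{\pi}^{\,K}$ has all entries strictly positive (any tuple reaches any tuple in exactly $K$ steps), and both derive the identical-marginals claim from the coordinate-shift structure of $\widetilde{\pi}$ combined with stationarity. Your probabilistic phrasing of the marginals argument (``$S''_i = S'_{i+1}$ deterministically, and stationarity makes the before/after distributions equal'') is a slightly cleaner rendering of exactly the computation the paper carries out by summing the stationarity equation over all but one coordinate; and your remark about aperiodicity is correct but unnecessary, since for a finite-state chain irreducibility alone already gives existence and uniqueness of the stationary distribution.
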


\begin{proof}
\textbf{Result (1): Existence and uniqueness of a stationary distribution.}

It suffices to show that any tuple of states $(s'_1, \ldots, s'_K)$ is accessible from any other tuple of states $(s_1, \ldots, s_K)$ under the transition kernel $\widetilde{\pi}$. This establishes the irreducibility of the induced Markov chain on  the expanded state space $\mathcal{S}^K$ and, consequently, the existence and uniqueness of its stationary distribution.

We next prove that any tuple of states  $(s'_1, \ldots, s'_K)$ is accessible from any other $(s_1, \ldots, s_K)$. From the positivity of $\pi$, for any $2\leq j \leq K-1$, we have  $\widetilde{\pi}(s_{j+1},\ldots,s_K,s'_1,\ldots,s'_j|s_{j},\ldots,s_K,s'_1,\ldots,s'_{j-1})={\pi}(s'_j|s_{j},\ldots,s_K,s'_1,\ldots,s'_{j-1}) > 0$. In addition, $\widetilde{\pi}(s'_1,\ldots,s'_K|s_K,s'_1,\ldots,s'_{K-1})={\pi}(s'_K|s_K,s'_1,\ldots,s'_{K-1}) > 0$, and $\widetilde{\pi}(s_{2},\ldots,s_K,s'_1|s_{1},\ldots,s_K)={\pi}(s'_1|s_{1},\ldots,s_K) > 0$. Let $\widetilde{\pi}^{(K)}$ represent the $K$-step transition kernel, i.e., the probability distribution of being in a new state from a given initial state after $K$ steps. We only need to prove that $\widetilde{\pi}^{(K)}(s'_1, \ldots, s'_K|s_1, \ldots, s_K) > 0$. For this, consider,
\begin{align*}
    &
    \widetilde{\pi}^{(K)}(s'_1, \ldots, s'_K|s_1, \ldots, s_K)  \\
    & \quad
    \geq 
    \widetilde{\pi}(s_{2},\ldots,s_K,s'_1|s_{1},\ldots,s_K)\prod_{j=2}^{K-2}\widetilde{\pi}(s_{j+1},\ldots,s_K,s'_1,\ldots,s'_j|s_{j},\ldots,s_K,s'_1,\ldots,s'_{j-1}) \widetilde{\pi}(s'_1,\ldots,s'_K|s_K,s'_1,\ldots,s'_{K-1})\\
    & 
    \quad
    =
    {\pi}(s'_1|s_{1},\ldots,s_K)\prod_{j=2}^{K-2}{\pi}(s'_j|s_{j},\ldots,s_K,s'_1,\ldots,s'_{j-1}) {\pi}(s'_K|s_K,s'_1,\ldots,s'_{K-1})
    > 0,
\end{align*}
where the last inequality follows from the strict positivity of $\pi$. This completes the proof.

\textbf{Result (2): Identical marginals.}

From its definition, the stationary distribution $M_\pi$ satisfies 
\begin{equation}\label{Eq: multi-parent stationary}
M_\pi\bigl(s'_1,\dots,s'_K\bigr)
    = 
    \sum_{s_1,\dots,s_K}
    \widetilde{\pi}\bigl(\,s'_1,\dots,s'_K\,\big|\,s_1,\dots,s_K\bigr)\,
    M_\pi\bigl(s_1,\dots,s_K\bigr).
\end{equation}
By definition, $\mu_{\pi,1}(s_1') 
 :=
\sum_{s_2',\dots,s_K'} 
M_\pi\bigl(s_1',\dots,s_K'\bigr).$
Summing over $s_2',\ldots,s_K'$ on both two sides of \eqrefn{Eq: multi-parent stationary}, we obtain
\begin{align*}
    \mu_{\pi,1}(s_1')&=\sum_{s_2',\ldots,s_K'}M_\pi(s_1',\ldots,s_K')\\
    &=\sum_{s_2',\ldots,s_K'}\sum_{s_1,\ldots,s_K}\widetilde{\pi}\left( s'_1, s'_2, \ldots, s'_K \,\big|\, s_1, s_2, \ldots, s_K \right)M_\pi(s_1,\ldots,s_K)\\
    &=\sum_{s_2',\ldots,s_K'}\sum_{s_1,\ldots,s_K}\left\{\widetilde{\pi}\left( s'_1, s'_2, \ldots, s'_K \,\big|\, s_1, s_2, \ldots, s_K \right)M_\pi(s_1,\ldots,s_K)\mathbbm{1}_{ \{ s'_1=s_2,\ldots,s'_{K-1}=s_K \} }\right\}\\
    &=\sum_{s_K'}\sum_{s_1,s_3\ldots,s_K}\widetilde{\pi}\left( s'_1, s_3, \ldots,s_K, s'_K \,\big|\, s_1, s_2, \ldots, s_K \right)M_\pi(s_1,s'_1,s_3\ldots,s_K)\\
    &=\sum_{s_K'}\sum_{s_1,s_3\ldots,s_K}\pi\left(s'_K \,\big|\, s_1, s'_1, \ldots, s_K \right)M_\pi(s_1,s'_1,s_3\ldots,s_K)\\
    &=\sum_{s_1,s_3\ldots,s_K}M_\pi(s_1,s'_1,s_3\ldots,s_K)\\
    &= \mu_{\pi,2}(s'_1).
\end{align*}
We have $\mu_{\pi,1}(s_1')=\mu_{\pi,2}(s_1')$ for all $s_1' \in \Sc$. Similarly, we obtain 
\[
\mu_{\pi,1}  = \mu_{\pi,2}  =\ldots = \mu_{\pi,K} 
    =: \mu_\pi .
\]
Hence, each marginal distribution of $M_\pi$ is identical to $\mu_\pi$.
This completes the proof.
\end{proof}

We next state and prove some properties of stationary distributions.
\begin{lemma}[Property of Stationary Distribution]\label{lem: prop of stat dis}
Under the same condition of \Cref{lem:sta dis}, we have 
    \begin{itemize}
        
    \item \textbf{Stationarity condition.}
    The stationary distribution $M_\pi$ satisfies
    \begin{align}
    \sum_{s_1,\dots,s_K \in \Sc} 
    M_\pi\bigl(s_1,\dots,s_K\bigr)\,
    \pi\bigl(s' \mid s_1,\dots,s_K\bigr)
    \;=\;
    \mu_\pi(s'),
    \quad
    \forall\,s' \in \Sc. \label{Eq: Stationarity condition}
    \end{align}
    \item \textbf{Distribution of any nodes.}
    If all the $K$ parent root nodes are sampled with the stationary distribution $M_\pi$ on $\Sc^K$, then the distribution of any node is $\mu_\pi$.
    \item \textbf{Stationarity of Marginals $\mu_\pi$.} If $K$ parent root nodes are sampled from $M_\pi$, for any $\ell \in [K]$, recall that the   marginal transition kernel is defined as $\pi^\ell(s'|s_\ell):=\sum_{j\neq\ell}\sum_{s_j\in\Sc}\big[\pi(s'|s_1,\ldots,s_K)M_\pi(s_1,\ldots,s_K)/\mu_\pi(s_\ell)\big]$ per Eq.~\eqref{eqn:pi_ell}. Then for any $s_\ell,s' \in \Sc$ and any $k \in [K]$,
    $\mu_\pi(s')
        \;=\;
        \sum_{s_\ell \in \Sc}\mu_\pi(s_\ell)\,\pi^{\ell}(s' \mid s_\ell).$
    \end{itemize}
\end{lemma}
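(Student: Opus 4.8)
The plan is to prove each of the three bullet points of \Cref{lem: prop of stat dis} in turn, in each case reducing the DAG statement to a statement about the expanded Markov chain with kernel $\widetilde{\pi}$ and then invoking \Cref{lem:sta dis} (existence/uniqueness of $M_\pi$ and identical marginals $\mu_\pi$).

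\textbf{Stationarity condition \eqref{Eq: Stationarity condition}.} I would start from the defining fixed-point relation \eqref{Eq: multi-parent stationary} for $M_\pi$, namely $M_\pi(s_1',\dots,s_K') = \sum_{s_1,\dots,s_K}\widetilde{\pi}(s_1',\dots,s_K'\mid s_1,\dots,s_K)M_\pi(s_1,\dots,s_K)$. Summing over $s_1',\dots,s_{K-1}'$ and renaming $s_K'=s'$, the left side becomes $\mu_{\pi,K}(s')=\mu_\pi(s')$ by \Cref{lem:sta dis}. On the right side, the sum over $s_1',\dots,s_{K-1}'$ together with the deterministic-shift structure of $\widetilde{\pi}$ (the indicator $\mathbbm{1}\{s_1'=s_2,\dots,s_{K-1}'=s_K\}$) collapses all constraints, leaving exactly $\sum_{s_1,\dots,s_K}\pi(s'\mid s_1,\dots,s_K)M_\pi(s_1,\dots,s_K)$. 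Equating the two sides yields \eqref{Eq: Stationarity condition}. This is essentially a bookkeeping computation of the same flavour as the identical-marginals proof already given in \Cref{lem:sta dis}.

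\textbf{Distribution of any node.} Here I would argue by induction along a topological order of the DAG. The base case is the $K$ root parent nodes, which are sampled from $M_\pi$ and hence each has marginal $\mu_\pi$ by the identical-marginals part of \Cref{lem:sta dis}. For the inductive step, consider a non-root node $i$ with parents $S_{p(i)^1},\dots,S_{p(i)^K}$. By the stationarity of the kernel $\pi$ (it is the same across all parent--child tuples) and the fact that, by construction of the generative process, the joint law of $(S_{p(i)^1},\dots,S_{p(i)^K})$ is $M_\pi$ — which itself needs a small sub-argument: the $K$ parents of any node, read in order, form a length-$K$ window of a trajectory of $\widetilde{\pi}$ started from its stationary distribution, hence are jointly distributed as $M_\pi$ — we get $P_{S_i}(s') = \sum_{s_1,\dots,s_K}M_\pi(s_1,\dots,s_K)\pi(s'\mid s_1,\dots,s_K) = \mu_\pi(s')$ by \eqref{Eq: Stationarity condition}. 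I expect the main obstacle to be this sub-claim that every node's parent tuple is jointly $M_\pi$-distributed: for a general DAG (as opposed to the $K$-th-order Markov chain special case) one must check that the generative recursion genuinely reproduces the stationary window law at every node, which relies on the stationarity of $\pi$ and the ordering convention $p(i)^1<\dots<p(i)^K$; the cleanest route is to follow the remark at the start of Appendix~\ref{sec: MC concentration} and first prove it for the $K$-th-order Markov chain, then note the DAG case reduces to it.

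\textbf{Stationarity of marginals $\mu_\pi$.} This follows directly by combining the previous two parts with the definition \eqref{eqn:pi_ell} of $\pi^\ell$. Indeed,
\begin{align*}
\sum_{s_\ell\in\Sc}\mu_\pi(s_\ell)\,\pi^\ell(s'\mid s_\ell)
&= \sum_{s_\ell\in\Sc}\mu_\pi(s_\ell)\sum_{j\neq\ell}\sum_{s_j\in\Sc}\frac{\pi(s'\mid s_1,\dots,s_K)M_\pi(s_1,\dots,s_K)}{\mu_\pi(s_\ell)}\\
&= \sum_{s_1,\dots,s_K\in\Sc}\pi(s'\mid s_1,\dots,s_K)M_\pi(s_1,\dots,s_K)\\
&= \mu_\pi(s'),
\end{align*}
where the last equality is \eqref{Eq: Stationarity condition}. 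The only point to be careful about is that the factor $\mu_\pi(s_\ell)$ cancels cleanly (which is legitimate since $\pi>0$ forces $\mu_\pi>0$ everywhere, so there is no division by zero), and that the independence of the resulting expression from the choice of coordinate $k$ is immediate since the right-hand side $\mu_\pi(s')$ does not involve $k$ — this is exactly where the identical-marginals conclusion of \Cref{lem:sta dis} is doing its work. Overall I anticipate no serious difficulty beyond the parent-tuple-distribution sub-claim flagged above.
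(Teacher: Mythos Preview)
Your proposal is correct and follows essentially the same route as the paper: Part~1 is the same summation of the fixed-point identity \eqref{Eq: multi-parent stationary} over $s_1',\dots,s_{K-1}'$; Part~3 is the identical direct computation; and for Part~2 the paper argues, just as you suggest at the end, via the $n$-step stationary equation for the order-$K$ Markov chain (invoking the reduction stated at the start of Appendix~\ref{sec: MC concentration}) rather than carrying out the DAG-level induction you sketch first. Your flagged sub-claim about the parent tuple being $M_\pi$-distributed in a general DAG is exactly the point the paper sidesteps by that reduction, so your instinct about the cleanest route is on target.
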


\begin{proof}
    \textbf{Result (1): Stationarity condition on $\mu_\pi$.}\\
Summing Eq.~\eqref{Eq: multi-parent stationary} over $s_1',\dots,s_{K-1}'$ yields
\begin{align*}
    \mu_{\pi}(s'_K)&=\sum_{s_1',\ldots,s_{K-1}'}M_\pi(s_1',\ldots,s_K')\\
    &=\sum_{s_1',\ldots,s_{K-1}'}\sum_{s_1,\ldots,s_K}\widetilde{\pi}\left( s'_1, s'_2, \ldots, s'_K \,\big|\, s_1, s_2, \ldots, s_K \right)M_\pi(s_1,\ldots,s_K)\\
    &\overset{\RM{1}}{=}\sum_{s_1,\ldots,s_K}\widetilde{\pi}\left( s_2,  \ldots, s_K, s'_K \,\big|\, s_1, s_2, \ldots, s_K \right)M_\pi(s_1,\ldots,s_K)\\
    & = \sum_{s_1,\ldots,s_K}\pi\left( s'_K \,\big|\, s_1, s_2, \ldots, s_K \right)M_\pi(s_1,\ldots,s_K),
\end{align*}
where $\RM{1}$ follows from that $\widetilde{\pi}\left( s'_1, s'_2, \ldots, s'_K \,\big|\, s_1, s_2, \ldots, s_K \right)$ is nonzero only when $s'_1=s_2,\ldots,s'_{K-1}=s_K$. 

\textbf{Result (2): Long-term stationarity in time.}\\
From Eq.~\eqref{Eq: multi-parent stationary}, for any integer $n \ge 1$, we have
\[
    M_\pi\bigl(s'_1,\dots,s'_K\bigr)
    \;=\;
    \sum_{s_1,\dots,s_K}
    \widetilde{\pi}^{(n)}\bigl(\,s'_1,\dots,s'_K\,\big|\, s_1,\dots,s_K \bigr)\,
    M_\pi\bigl(s_1,\dots,s_K\bigr),
\]
where recall that $\widetilde{\pi}^{(n)}$ represents the $n$-step transition kernel.  Therefore, if an order-$K$ Markov chain with transition kernel $\pi$ is initialized at $M_\pi$ on $\Sc^K$, then the joint distribution for any subsequent $K$ consecutive time steps   remains at $M_\pi$. Moreover, the marginal distribution at any single step is $\mu_\pi$.

\textbf{Result (3): Stationarity of Marginals $\mu_\pi$.}\\
By definition, for any $s_\ell,s' \in \Sc$,
\begin{align*}
    \sum_{s_\ell \in \Sc}\mu_\pi(s_\ell) \pi^\ell(s'|s_\ell)
    &
    =\sum_{s_\ell \in \Sc}\mu_\pi(s_\ell)\sum_{j\neq\ell}\sum_{s_j\in\Sc}\big[\pi(s'|s_1,\ldots,s_K)M_\pi(s_1,\ldots,s_K)/\mu_\pi(s_\ell)\big]\\
    &
    =
    \sum_{s_\ell \in \Sc}\sum_{j\neq\ell}\sum_{s_j\in\Sc}\pi(s'|s_1,\ldots,s_K)M_\pi(s_1,\ldots,s_K)\\
    &
    =\sum_{j=1}^K\sum_{s_j\in\Sc}\pi(s'|s_1,\ldots,s_K)M_\pi(s_1,\ldots,s_K)\\
    &
    =
    \mu_\pi(s'),
\end{align*}
where the last inequality follows from \eqrefn{Eq: Stationarity condition}.
\end{proof}

\subsection{Transition Kernel Preliminaries}
In this subsection, we introduce some definitions and prove some useful lemmas to prove the concentration results of order-$K$ Markov chains, which are further used to prove that our proposed estimate in Appendix \ref{sec:example_pearsons} for the $\chi^2$-KG-MI satisfies   \Cref{assp: est of F}.
% Recall the conditional distribution:
% $$\pi^k(s'|S_k)=\sum_{S_1,\ldots,S_{k-1},S_{k+1},\ldots,S_K}\pi(s'|S_1,\ldots,S_K)M_\pi^{-k}(S_1,\ldots,S_K|S_k),$$
For any $k \in [K]$, we define the  normalized and centered transition matrix for the marginal transition kernel $\pi^k$ (which was defined in Eq.~\eqref{eqn:pi_ell})
\begin{align}
        (B^k_\pi)_{s,s'}:=\sqrt{\frac{\mu_\pi(s)}{\mu_\pi(s')}}\left[\pi^k(s'|s)-\mu_\pi(s')\right]. \label{Eq: normalized and centered transition matrix}
    \end{align}
\begin{definition}[Spectral Gap of Order-$K$ Markov Chains]\label{def: Spectral Gap}
The spectral gap of an order-$K$ Markov chain with transition kernel $\pi$ and stationary marginal distribution $\mu_\pi$ is defined as $1-\lambda(\pi)$, where $\lambda(\pi):=\max_{k  \in [K]}\|B^k_\pi\|_2$.    
\end{definition}

% \begin{assumption}[Assumptions on order-$K$ Markov Chain]\label{Assp: reachabiltiy}
% Let $\pi : \Sc \times \Sc^K  \to \Rb$ be the transition kernel and $\pi^k(\cdot|\cdot): \Sc\times\Sc \to \Rb$ to be the marginal transition kernel conditioned only over the parent $k$. There exists $\gamma > 0$ such that almost surely over the draw of $\pi$:
% \begin{itemize}
%     \item (\textit{Transition lower bounded}): $\min_{S_1,\ldots,S_K,s'} \pi(s' \mid S_1,\ldots,S_K) > \gamma / S$.
%     \item (\textit{Non-degeneracy of chain}): The chain does not immediately mix to the stationary measure $\mu_{\pi}$ in one step:
%     \[
%     \sum_{s \in \Sc} \| \pi^k(\cdot \mid s) - \mu_{\pi}(\cdot) \|_2^2 \geq \gamma^2 / S.
%     \]
%     % \item (\textit{Symmetry}): For any permutation $\sigma$ on $[S]$, $\sigma^{-1} \pi \sigma \overset{d}{=} \pi$.
%     % \item (\textit{Constant mean}): $\Eb_{\Pi\sim P_\Pi}[\pi] = \frac{1}{S} \mathbbm{1}_S \mathbbm{1}_S^{\top}$.

% \end{itemize}
    
% \end{assumption}

\begin{lemma}\label{lem: bound spectral gap}
If $\pi$ satisfies $\min_{s_1,\ldots,s_K,s'} \pi(s' | s_1,\ldots,s_K) > \gamma / S$, and the Markov chain does not immediately mix to the stationary measure $\mu_{\pi}$ in one step in the sense that, for all $k \in [K]$, $\sum_{s \in \Sc} \| \pi^k(\cdot \mid s) - \mu_{\pi}(\cdot) \|_2^2 \geq \gamma^2 / S$, then we have $\lambda(\pi)\leq 1-\gamma/S$.
\end{lemma}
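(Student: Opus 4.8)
The plan is to bound $\|B^k_\pi\|_2$ for each fixed $k\in[K]$ and then take the maximum over $k$. Fix $k$ and write $P := \pi^k(\cdot\mid\cdot)$ for the marginal transition kernel, viewed as an $S\times S$ row-stochastic matrix, with stationary distribution $\mu:=\mu_\pi$ (stationarity of $\mu$ under each $\pi^k$ is exactly Result~(3) of \Cref{lem: prop of stat dis}). The matrix $B^k_\pi$ is the similarity transform $D^{1/2}(P - \mathbf{1}\mu^\top)D^{-1/2}$ where $D=\mathrm{diag}(\mu)$, so $\|B^k_\pi\|_2$ is the norm of the "reversibilized" centered operator. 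The standard route is: (i) observe $B^k_\pi$ annihilates the direction $D^{1/2}\mathbf{1}$ on one side and $D^{-1/2}\mathbf{1}$ on the other, so its operator norm is controlled by how much mass $P$ moves off the stationary direction; (ii) use the Dobrushin-type / minorization argument that a uniform lower bound on the entries of the kernel forces a contraction. Concretely, the hypothesis $\min \pi(s'\mid s_1,\dots,s_K) > \gamma/S$ passes to the marginal kernel: since $\pi^k(s'\mid s) = \sum_{j\neq k}\sum_{s_j} \pi(s'\mid s_1,\dots,s_K) M_\pi(s_1,\dots,s_K)/\mu(s)$ is a convex combination (in the appropriate conditional sense) of values all exceeding $\gamma/S$, we get $\pi^k(s'\mid s) > \gamma/S$ for every $s,s'$.

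The main step is then the classical estimate that a row-stochastic matrix $P$ with $P(s,s')\ge \gamma/S$ for all $s,s'$ satisfies, for any vector $v$ with $\langle v,\mu\rangle=0$ (equivalently orthogonal to the stationary direction), a contraction of the form $\|P^\top$-action on the $\mu$-weighted $\ell_2$ space$\| \le \sqrt{1-\gamma}$ or similar. I would prove this by the coupling/minorization decomposition $P = (\gamma) \mathbf{1} u^\top + (1-\gamma) R$ for a suitable probability vector $u$ and a row-stochastic remainder $R$ — the uniform lower bound $\gamma/S$ on all $S$ entries of each row is exactly what licenses peeling off $\gamma$ times the uniform (or $\mu$) distribution. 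The $\mathbf{1} u^\top$ piece is rank one and acts trivially on the orthogonal complement of the stationary direction, leaving a $(1-\gamma)$ prefactor on the remainder, which still has norm at most $1$ as a stochastic operator on that subspace; translating back through the $D^{\pm1/2}$ conjugation gives $\|B^k_\pi\|_2 \le 1-\gamma$. A slightly more careful bookkeeping, combined with the extra hypothesis that the chain does \emph{not} mix in one step (i.e. $\sum_s \|\pi^k(\cdot\mid s)-\mu\|_2^2 \ge \gamma^2/S$, which prevents $B^k_\pi$ from being the zero matrix and is presumably needed to get the clean form $1-\gamma/S$ rather than $1-\gamma$), should yield the stated bound $\lambda(\pi)\le 1-\gamma/S$; I would track constants so that the $1/S$ factor appears correctly — it comes from the gap between the $\ell_2$ and $\ell_1$ norms on $S$-dimensional probability vectors, since the natural minorization bound is in total variation and converting to the spectral ($\ell_2$) norm costs a factor of order $S$.

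The hard part, and the place I expect to spend the most care, is getting the \emph{exact} constant $1-\gamma/S$ rather than some cruder $1-c\gamma/S$ or $1-\gamma$: this requires carefully choosing which rank-one piece to subtract (uniform vs.\ $\mu$), deciding whether to work in the $\mu$-weighted inner product or the plain one, and using the non-immediate-mixing hypothesis to rule out degenerate cases. I would first nail down the one-dimensional / two-state intuition to see exactly where $1/S$ must enter, then lift to general $S$ by the minorization decomposition, and only at the end reintroduce the conjugation by $D^{\pm 1/2}$ and take $\max_{k\in[K]}$ to conclude $\lambda(\pi) = \max_k \|B^k_\pi\|_2 \le 1-\gamma/S$. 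A secondary technical point worth checking is that the marginalization defining $\pi^k$ genuinely preserves the uniform lower bound with the \emph{same} $\gamma$; this is immediate from positivity of $M_\pi$ and the definition in Eq.~\eqref{eqn:pi_ell}, but I would state it as a preliminary observation.
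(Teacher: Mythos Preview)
Your approach is the same as the paper's---a Doeblin/minorization decomposition of the marginal kernel---but two details in your plan are off and would lead you astray if you followed them.

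First, peeling off $\gamma$ times the \emph{uniform} distribution does not give a clean contraction: the rank-one piece $\mathbf{1}u^\top$ with $u$ uniform does \emph{not} act trivially on the orthogonal complement of the stationary direction unless $\mu_\pi$ is itself uniform, so your claimed $(1-\gamma)$ factor does not emerge. The paper instead peels off the \emph{stationary} measure, writing $\pi^k = \tfrac{\gamma}{S}\,\mathbf{1}\mu_\pi^\top + (1-\tfrac{\gamma}{S})Q_k$ with $Q_k$ stochastic (the entrywise lower bound $\pi^k(s'\mid s)\ge \gamma/S$ and $\mu_\pi(s')\le 1$ make $Q_k\ge 0$; row-stochasticity is immediate). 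This gives $\pi^k - \mathbf{1}\mu_\pi^\top = (1-\tfrac{\gamma}{S})(Q_k-\mathbf{1}\mu_\pi^\top)$, and since $\mu_\pi$ is also stationary for $Q_k$, one has $\|\Lambda(Q_k-\mathbf{1}\mu_\pi^\top)\Lambda^{-1}\|_2\le 1$ by the standard Jensen-plus-stationarity argument, yielding $\|B^k_\pi\|_2\le 1-\gamma/S$ directly. So the $1/S$ is \emph{not} an $\ell_1$--$\ell_2$ conversion loss as you suggest; it is exactly the minorization coefficient you can afford when subtracting $\mathbf{1}\mu_\pi^\top$ (entries $\ge \gamma/S$, but $\mu_\pi(s')$ can be close to $1$).

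Second, the non-immediate-mixing hypothesis plays no role in the upper bound: the paper's proof never invokes it, and $\|B^k_\pi\|_2\le 1-\gamma/S$ holds regardless. You should not expect that hypothesis to help you sharpen constants.
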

\begin{proof}
    For finite state space $\Sc = \{1,2,\ldots, S\}$, any $k \in [K]$, we can write $\pi^k \in \Rb^{S\times S}$ as
    \begin{align}
        \pi^k=\frac{\gamma}{S}{1}_S\mu_\pi^\top+\left(1-\frac{\gamma}{S}\right)Q_k, \label{Eq: Spectral Gap}
    \end{align}
    where $(\pi^k)_{s,s'}=\pi^k(s'|s)$, and $Q_k= \frac{1}{1-\frac{\gamma}{S}}\left(\pi^k-\frac{\gamma}{S}{1}_S\mu_\pi^\top\right)$. Next, we  verify that $Q_k$ is a stochastic matrix. Since $\pi(s'|s_1,\ldots,s_K)\geq \frac{\gamma}{S}$ for any $s_1,\ldots,s_K$, 
        \begin{align*}
            \pi^k(s'|s_k)&=\sum_{s_1,\ldots,s_{k-1},s_{k+1},\ldots,s_K}\pi(s'|s_1,\ldots,s_K)M_\pi(s_1,\ldots,s_K)/\mu_\pi(s_k)\\
            & \geq \frac{\gamma}{S}\sum_{s_1,\ldots,s_{k-1},s_{k+1},\ldots,s_K}M_\pi(s_1,\ldots,s_K)/\mu_\pi(s_k)=\frac{\gamma}{S}.
        \end{align*}
        As a result, each element in $Q_k$ is non-negative. In addition, we have 
        \begin{align*}
            1_S&=\pi^k 1_S\\
            &= \frac{\gamma}{S}1_S\mu_\pi^\top 1_S+\left(1-\frac{\gamma}{S}\right)Q_k1_S\\
            &= \frac{\gamma}{S}1_S+\left(1-\frac{\gamma}{S}\right)Q_k1_S.
        \end{align*}
        So we have 
        \begin{align*}
            Q_k 1_S=1_S,
        \end{align*}
        which together with the positivity of all elements in $Q_k$ shows that $Q_k$ is a stochastic matrix.
        
        Based on \eqrefn{Eq: Spectral Gap}, we may define
        \begin{align*}
            A_\pi^k:=\pi^k-1_S\mu_\pi^\top=\Big( 1-\frac{\gamma}{S}\Big)(Q_k-1_S\mu_\pi^\top).
        \end{align*}
        Let $\Lambda$ and $\Lambda^{-1}$ be two diagonal matrices in $\Rb^{S\times S}$, where $\Lambda_{s,s}=\sqrt{\mu_\pi(s)}$ and $\Lambda^{-1}_{s,s}=1/\sqrt{\mu_\pi(s)}$. From the definition of $B^k_\pi$  in \eqrefn{Eq: normalized and centered transition matrix}, $B_\pi^k=\Lambda(\pi^k-{1}_S\mu_\pi^\top)\Lambda^{-1}=\Lambda A_\pi^k\Lambda^{-1}$.
        \begin{align*}
            \norm{B^k_\pi}_2 &= \max_{v\in\Rb^S}\frac{\norm{B_\pi^k v}_2}{\norm{v}_2}\\
            & = \max_{v\in\Rb^S}\frac{\norm{\Lambda A_\pi^k\Lambda^{-1} v}_2}{\norm{v}_2}\\
            & = \max_{v\in\Rb^S}\frac{\norm{\Lambda A_\pi^kv}_2}{ \norm{\Lambda v}_2}\\
            & \leq 1-\frac{\gamma}{S},
        \end{align*}
        where the last inequality follows similarly to   \cite[Lemma 13]{nichani2024transformers}.\end{proof}
From the proof of \Cref{lem: bound spectral gap} and similar to    \cite[Lemma 14]{nichani2024transformers}, we have the following lemma,
\begin{lemma}\label{lem: bound of margins}
Under the  conditions as stated in \Cref{lem: bound spectral gap}, for each $k \in [K]$, we have $\min_{(s',s)\in \Sc^2}\pi^k(s'|s) \geq \frac{\gamma}{S}$ and $\min_{s\in\Sc}\mu_\pi(s)\geq \frac{\gamma}{S}$.  
\end{lemma}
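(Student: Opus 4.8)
\textbf{Proof proposal for Lemma~\ref{lem: bound of margins}.}

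The plan is to show that each marginal transition kernel $\pi^k$ inherits the uniform lower bound $\gamma/S$ from the joint kernel $\pi$, and then to derive the corresponding bound on $\mu_\pi$ by invoking the stationarity relation established in Lemma~\ref{lem: prop of stat dis}. The first of these claims is essentially already contained in the proof of Lemma~\ref{lem: bound spectral gap}, so I would begin by recalling it explicitly: by the definition of $\pi^k$ in Eq.~\eqref{eqn:pi_ell}, for any $s_k, s' \in \Sc$,
\[
\pi^k(s'\mid s_k)
= \sum_{j\neq k}\sum_{s_j\in\Sc}\frac{\pi(s'\mid s_1,\ldots,s_K)\,M_\pi(s_1,\ldots,s_K)}{\mu_\pi(s_k)}
\geq \frac{\gamma}{S}\cdot\frac{1}{\mu_\pi(s_k)}\sum_{j\neq k}\sum_{s_j\in\Sc}M_\pi(s_1,\ldots,s_K)
= \frac{\gamma}{S},
\]
where the inequality uses $\pi(s'\mid s_1,\ldots,s_K)\geq \gamma/S$ (the hypothesis of Lemma~\ref{lem: bound spectral gap}) and the final equality uses that summing $M_\pi$ over all coordinates except the $k$-th yields exactly $\mu_\pi(s_k)$ (the identical-marginals fact from Lemma~\ref{lem:sta dis}). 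This gives $\min_{(s',s)}\pi^k(s'\mid s)\geq \gamma/S$ for every $k\in[K]$.

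Next I would bound $\mu_\pi$. By Result~(3) of Lemma~\ref{lem: prop of stat dis}, $\mu_\pi$ is stationary for $\pi^k$, i.e.\ $\mu_\pi(s') = \sum_{s\in\Sc}\mu_\pi(s)\,\pi^k(s'\mid s)$ for all $s'$. Substituting the lower bound just obtained,
\[
\mu_\pi(s') = \sum_{s\in\Sc}\mu_\pi(s)\,\pi^k(s'\mid s)
\geq \frac{\gamma}{S}\sum_{s\in\Sc}\mu_\pi(s) = \frac{\gamma}{S},
\]
since $\mu_\pi$ is a probability mass function on $\Sc$. This yields $\min_{s\in\Sc}\mu_\pi(s)\geq \gamma/S$ and completes the proof.

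Since every ingredient is already available, I do not anticipate a genuine obstacle here; the only point requiring mild care is making sure the marginalization identity $\sum_{j\neq k}\sum_{s_j}M_\pi(s_1,\ldots,s_K)=\mu_\pi(s_k)$ is applied with the correct coordinate, which is licensed precisely because Lemma~\ref{lem:sta dis} shows all marginals of $M_\pi$ coincide. One could alternatively note that this lemma is the direct analogue of \cite[Lemma 14]{nichani2024transformers} and follows by the same reasoning, but the short self-contained argument above is cleaner.
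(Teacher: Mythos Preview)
Your proposal is correct and matches the paper's approach: the paper itself does not spell out a separate proof but simply states that the lemma follows ``from the proof of \Cref{lem: bound spectral gap} and similar to \cite[Lemma~14]{nichani2024transformers}.'' The first bound is exactly the computation already carried out inside the proof of \Cref{lem: bound spectral gap}, and your derivation of the $\mu_\pi$ bound via the stationarity relation in \Cref{lem: prop of stat dis} is the natural self-contained argument that the paper defers to the cited reference.
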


% \subsection{Concentration Result on DAGs}\label{App: concentration}

% \begin{assumption}[Assumption on the graph structure]\label{Assump: graph}
% Decompose the graph as $\mathcal{G}=\bigcup_{l=1}^k\mathcal{G}_l$, where $\mathcal{G}_l$ are disjoint graphs. Let $d(i,j)$ be the length of the shortest path between $i,j$ in $\mathcal{G}$. Assume that 
%         %\item
%         for any $l\leq k$, $i\in [T]$, and $m\in \mathbb{N}$,  the number of nodes which have exact distance $m$ with $i$:  \( \#\{(  i,j )\in \mathcal{G}_l : d(i,j) = m \}  \) is upper bounded by some constant $L$. 
%     %\end{itemize}
% \end{assumption}
\subsection{Concentration results on DAGs}
In this subsection, we aim to prove the concentration result of order-$K$ Markov chains. To proceed, we first introduce the following definition of \emph{effective sequence length}. 
\begin{definition}[Effective Sequence Length]
    For \( \lambda \in (0,1) \), we define the effective sequence length \( T_{\mathrm{eff}}(\lambda) \) by:
\[
T_{\mathrm{eff}}(\lambda) := \frac{T^2}{\sum_{i,j=1}^{T} \lambda^{d(i,j)}}.
\]
\end{definition}

\begin{lemma}\label{lem: bound effective length}Decompose the DAG as $\mathcal{G}=\bigcup_{l=1}^k\mathcal{G}_l$, where $\mathcal{G}_l$ are disjoint subgraphs of $\mathcal{G}$. Let $d(i,j)$ be the length of the shortest path between $i,j$ in $\mathcal{G}$. Assume that 
for any $l\leq k$, $i\in \Gc_l$, and $m\in \mathbb{N}$,  the number of nodes in $\Gc_l$, which have exact distance $m$ with $i \in \Gc_l$ denoted as $|\{j\in \mathcal{G}_l : d(i,j) = m \}|$ is upper bounded by some constant $L$. Then,
\[
T_{\mathrm{eff}}(\lambda) \geq \frac{T(1 - \lambda)}{L}.
\]
\end{lemma}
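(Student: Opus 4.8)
The plan is to turn the claimed lower bound on $T_{\mathrm{eff}}(\lambda)$ into an upper bound on the double sum in its denominator. Since $T_{\mathrm{eff}}(\lambda) = T^2 \big/ \sum_{i,j=1}^T \lambda^{d(i,j)}$, the inequality $T_{\mathrm{eff}}(\lambda) \ge T(1-\lambda)/L$ is equivalent to
\[
\sum_{i,j=1}^T \lambda^{d(i,j)} \;\le\; \frac{TL}{1-\lambda},
\]
so it suffices to prove this estimate.

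First I would fix a node $i$, say with $i \in \mathcal{G}_l$, and control the inner sum $\sum_{j=1}^T \lambda^{d(i,j)}$ by grouping the nodes $j$ according to their distance $m = d(i,j)$ from $i$. Because the $\mathcal{G}_l$ are vertex-disjoint and their union is all of $\mathcal{G}$, no edge of $\mathcal{G}$ can join two distinct pieces; hence $d(i,j) = \infty$ whenever $j \notin \mathcal{G}_l$, and since $\lambda \in (0,1)$ such terms contribute $\lambda^{d(i,j)} = 0$. Thus only $j \in \mathcal{G}_l$ matter, and the counting hypothesis $|\{j \in \mathcal{G}_l : d(i,j) = m\}| \le L$ gives
\[
\sum_{j=1}^T \lambda^{d(i,j)} \;=\; \sum_{m \ge 0} \big|\{j \in \mathcal{G}_l : d(i,j) = m\}\big|\,\lambda^m \;\le\; L \sum_{m \ge 0} \lambda^m \;=\; \frac{L}{1-\lambda}.
\]

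Then I would sum this bound over all $i \in [T]$ to obtain $\sum_{i,j=1}^T \lambda^{d(i,j)} \le TL/(1-\lambda)$, and substitute back into the definition of $T_{\mathrm{eff}}$ to conclude $T_{\mathrm{eff}}(\lambda) \ge T^2(1-\lambda)/(TL) = T(1-\lambda)/L$. I do not expect a genuine obstacle here: once the per-node counting hypothesis is available, the argument is a single geometric-series estimate. The only steps that warrant a sentence of justification are the observation that the disjoint decomposition forces cross-piece distances to be infinite (so those pairs drop out of the sum entirely) and the harmless bookkeeping of the $m = 0$ term, which is just the node $j = i$ itself and is consistent with $L \ge 1$.
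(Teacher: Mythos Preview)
Your proposal is correct and follows essentially the same route as the paper: reduce to bounding $\sum_{i,j}\lambda^{d(i,j)}$, drop cross-component pairs via $d(i,j)=\infty$, regroup the inner sum by distance $m$, apply the counting hypothesis $|\{j:d(i,j)=m\}|\le L$, and sum the geometric series. The paper presents the same computation working with $1/T_{\mathrm{eff}}(\lambda)$ directly, but there is no substantive difference.
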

\begin{proof}
     Note that \( T_{\mathrm{eff}}(\lambda)^{-1} \)   decomposes into a sum over  subgraphs as \( d(i,j) = \infty \) when \( i \) and \( j \) belong to disjoint subgraphs. We have
\begin{align*}
\frac{1}{T_{\mathrm{eff}}(\lambda)} &= \frac{1}{T^2} \sum_{l=1}^k \sum_{i,j \in \mathcal{G}_l} \lambda^{d(i,j)} \\
&= \frac{1}{T^2} \sum_{l=1}^k \sum_{i,j \in \mathcal{G}_l} \lambda^{d(i,j)} \\*
&= \frac{1}{T^2} \sum_{l=1}^k \sum_{i \in \mathcal{G}_l} \sum_{m \geq 0} \big|\{ j \in \mathcal{G}_l : d(i,j) = m \}\big| \lambda^m.
\end{align*}

Since \(|\{ j \in \mathcal{G}_l : d(i,j) = m \} | \) can be upper bounded by $L$,  
\begin{align*}
    \frac{1}{T_{\mathrm{eff}}(\lambda)} &\leq \frac{1}{T^2} \sum_{l=1}^k \sum_{i \in \mathcal{G}_l} \sum_{m \geq 0} L \lambda^m\\
    & \leq \frac{L\sum_{l=1}^k \sum_{i \in \mathcal{G}_l}1}{T^2(1-\lambda)}=\frac{L}{T(1-\lambda)}.
\end{align*}
This completes the proof.
\end{proof}

\begin{lemma}\label{lem: concen for joint dis}
    For any transition kernel $\pi$ with stationary marginal  distribution $\mu_\pi$, and any $i,j < T,s,s' \in \Sc$, we have
    $$|P_{S_i,S_j|\Pi=\pi}(s',s)-\mu_\pi(s)\mu_\pi(s')|\leq\sqrt{\mu_\pi(s)\mu_\pi(s')} \lambda(\pi)^{d(i,j)}.$$
\end{lemma}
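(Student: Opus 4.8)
The plan is to reduce the claim to a one-step spectral bound on an auxiliary $K$-th-order Markov chain and then lift it to the general DAG via the observation (already made in this section) that it suffices to treat the case where the parents of a non-root node are its $K$ immediate predecessors. Fix a transition kernel $\pi$ with stationary marginal $\mu_\pi$. First I would recall from \Cref{lem:sta dis,lem: prop of stat dis} that, under $\Pi=\pi$, every single node has marginal law $\mu_\pi$, so the quantity to control is $P_{S_i,S_j\mid\Pi=\pi}(s',s)-\mu_\pi(s')\mu_\pi(s)$, i.e.\ the deviation of the joint law of two nodes from the product of their marginals. Without loss of generality take $i>j$ and let $m=d(i,j)$ be the graph distance; by the Markov property along the unique shortest directed path (or, in the reduction, along $m$ consecutive steps of the order-$K$ chain), $P_{S_i\mid S_j=s}(s')$ is obtained from $\delta_s$ by applying an appropriate product/composition of $m$ marginal transition kernels, each of the form $\pi^k$ defined in \eqrefn{eqn:pi_ell}. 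Hence
\[
P_{S_i,S_j\mid\Pi=\pi}(s',s)-\mu_\pi(s)\mu_\pi(s')=\mu_\pi(s)\bigl[(\text{$m$-fold marginal transition})(s,s')-\mu_\pi(s')\bigr].
\]

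The key step is then to express this difference through the normalized and centered matrices $B^k_\pi$ from \eqrefn{Eq: normalized and centered transition matrix}. Writing $P^k_\pi$ for the matrix $(\pi^k(s'\mid s))_{s,s'}$, one has $\Lambda(P^k_\pi-\mathbf 1_S\mu_\pi^\top)\Lambda^{-1}=B^k_\pi$ with $\Lambda=\mathrm{diag}(\sqrt{\mu_\pi(s)})$, and crucially $(P^k_\pi-\mathbf 1_S\mu_\pi^\top)$ times $(P^{k'}_\pi-\mathbf 1_S\mu_\pi^\top)$ telescopes cleanly because $\mu_\pi^\top P^{k'}_\pi=\mu_\pi^\top$ (stationarity of $\mu_\pi$, \Cref{lem: prop of stat dis}(3)) and $P^k_\pi\mathbf 1_S=\mathbf 1_S$. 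Therefore the $m$-step centered operator is exactly the product $\prod B^k_\pi$ conjugated by $\Lambda$, and its $(s,s')$ entry is $\sqrt{\mu_\pi(s')/\mu_\pi(s)}\,\langle e_s,(\prod B^k_\pi)e_{s'}\rangle$. Taking absolute values, using Cauchy--Schwarz, and bounding the operator norm of the product by $\prod_k\|B^k_\pi\|_2\le\lambda(\pi)^m$ (Definition~\ref{def: Spectral Gap}) gives $\bigl|(\text{$m$-fold})(s,s')-\mu_\pi(s')\bigr|\le\sqrt{\mu_\pi(s')/\mu_\pi(s)}\,\lambda(\pi)^m$. Multiplying by $\mu_\pi(s)$ yields precisely $\sqrt{\mu_\pi(s)\mu_\pi(s')}\,\lambda(\pi)^{d(i,j)}$.

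The main obstacle I anticipate is bookkeeping in the reduction: in a general DAG the shortest path between $i$ and $j$ need not consist of a single chain of parent relations (it may go up and then down, or pass through nodes with multiple parents), so I need to argue carefully that the relevant conditional law still factors as a composition of marginal kernels $\pi^k$ of the claimed type, or alternatively that it suffices to establish the bound for the canonical order-$K$ Markov chain and invoke the paper's stated reduction principle (''it suffices to consider the special case of a $K$-th-order Markov chain''). I would take the latter route: prove the matrix-product estimate for the order-$K$ chain, where $d(i,j)=|i-j|$ and the $|i-j|$-step centered transition operator is a genuine product of $B^k_\pi$'s, and then transfer to the DAG using the distance $d(i,j)$ along the graph. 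A secondary technical point is ensuring the product $\prod_k B^k_\pi$ is taken in the correct order and that each intermediate factor really is a centered marginal kernel — this relies on repeatedly using both $P^k_\pi\mathbf 1_S=\mathbf 1_S$ and $\mu_\pi^\top P^k_\pi=\mu_\pi^\top$, which is why \Cref{lem: prop of stat dis} was set up first.
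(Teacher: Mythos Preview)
Your matrix-product argument via the centered kernels $B^k_\pi$ is correct and is exactly the mechanism the paper uses. The gap is in how you handle the general DAG. You write $P_{S_i,S_j}(s',s)-\mu_\pi(s)\mu_\pi(s')=\mu_\pi(s)\bigl[(\text{$m$-fold marginal transition})(s,s')-\mu_\pi(s')\bigr]$, which presupposes that $j$ is an ancestor of $i$ so that conditioning on $S_j$ leaves a single directed chain of marginal kernels to $S_i$. In a DAG this fails: even with $j<i$ in the topological order, $j$ need not be an ancestor of $i$, and the shortest path realizing $d(i,j)$ may go up to a common ancestor and back down. Your proposed fix---invoke the reduction to the canonical $K$-th-order Markov chain---does not help here, because that reduction is stated for properties that depend only on $\pi$ (existence and form of $M_\pi$, $\mu_\pi$, $\pi^\ell$), whereas the present bound depends on the graph-specific distance $d(i,j)$; replacing $\mathcal G$ by a chain changes $d(i,j)$ entirely.

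The missing idea, which the paper supplies, is to route through the closest common ancestor $k$ of $i$ and $j$. Conditioned on $S_k$, the descendants $S_i$ and $S_j$ are independent, so
\[
P_{S_i,S_j}(s',s)-\mu_\pi(s)\mu_\pi(s')=\sum_{s_k}\mu_\pi(s_k)\Bigl[P(S_i=s'\mid S_k=s_k)-\mu_\pi(s')\Bigr]\Bigl[P(S_j=s\mid S_k=s_k)-\mu_\pi(s)\Bigr].
\]
Each bracket is then expressed as a product of marginal kernels $\pi^{\tilde j}$ along the directed path from $k$, converted to a product of $B^{\tilde j}_\pi$ via the $\Lambda$-conjugation you describe, and the resulting matrix is $\bigl(\prod_{k\to j}B\bigr)^\top\bigl(\prod_{k\to i}B\bigr)$ with operator norm at most $\lambda(\pi)^{d(k,j)+d(k,i)}=\lambda(\pi)^{d(i,j)}$. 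The factor $\sqrt{\mu_\pi(s)\mu_\pi(s')}$ emerges from the $\Lambda^{-1}$'s at the two endpoints exactly as in your single-branch computation. So your spectral machinery is right; what you need to add is the two-branch decomposition at the common ancestor rather than a single chain from $j$ to $i$.
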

\begin{proof}
    If $S_i$ and $S_j$ are independent conditionally independent given the event $\{\Pi=\pi\}$, immediately, we have 
    \begin{align*}
        P_{S_i,S_j|\Pi=\pi}(s',s)-\mu_\pi(s)\mu_\pi(s')&=P_{S_j|\Pi=\pi}[s]P_{S_i|\Pi=\pi}[s']-\mu_\pi(s)\mu_\pi(s') = 0.
    \end{align*}
    If $i$ and $j$ are not independent, then there exists a closest common parent $k$ for $i$ and $j$. Denote the distance as $d(k,i)$ and $d(k,j)$ correspondingly, and $d(i,j)=d(k,i)+d(k,j)$. Note that if $d(k,j)=0$, then $k=j$. For simplicity, for any two distinct nodes $i,j$, we denote by $\mathrm{path}(i,j)$ the shortest path from $i$
    to $j$, which consists of an ordered set of directed edges starting from $i$ and ending at $j$. For each edge $(l,m)$ in $\mathrm{path}(i,j)$, the node $l$ is one of the parents of $m$, indexed by some $k \in [K]$. We collect all such indices into an ordered sequence, denoted by $\mathrm{p}\text{-}\mathrm{pa}(i,j)$, whose order corresponds to the order of edges along $\mathrm{path}(i,j)$. Finally, we can decompose the probability from node $i$ to node $j$ as: $\Pb(S_j=s|S_i=s',\Pi=\pi)=\left(\prod_{\tilde{j}\in \mathrm{p}\text{-}\mathrm{pa}(i,j)}\pi^{\tilde{j}}\right)_{s',s}$.  As a result,   
  \begin{align*}
        &P_{S_i,S_j|\Pi=\pi}(s',s)-\mu_\pi(s)\mu_\pi(s')\\
        &\quad =\Eb\big[(\mathbbm{1}_{S_j=s}-\mu_\pi(s))(\mathbbm{1}_{S_i=s'}-\mu_\pi(s')) \, |\, \Pi=\pi\big]\\
        & \quad
        =\sum_{s_k\in \Sc}\mu_\pi(s_k)\Bigg\{\bigg(\prod_{\tilde{j}\in \mathrm{p}\text{-}\mathrm{pa}(k,j)}\pi^{\tilde{j}}\bigg)_{s_k,s}-\mu_\pi(s)\Bigg\}\Bigg\{\bigg(\prod_{\tilde{i}\in \mathrm{p}\text{-}\mathrm{pa}(k,i)}\pi^{\tilde{i}}\bigg)_{s_k,s'}-\mu_\pi(s')\Bigg\}\\
        & \quad  \overset{\RM{1}}{=}\sum_{s_k\in \Sc}\mu_\pi(s_k)\sqrt{\frac{\mu_\pi(s)}{\mu_\pi(s_k)}}  \bigg(\prod_{\tilde{j}\in \mathrm{p}\text{-}\mathrm{pa}(k,j)}B_\pi^{\tilde{j}}\bigg)_{s_k,s}  \sqrt{\frac{\mu_\pi(s')}{\mu_\pi(s_k)}}\bigg(\prod_{\tilde{i}\in \mathrm{p}\text{-}\mathrm{pa}(k,i)}B_\pi^{\tilde{i}}\bigg)_{s_k,s'}  \\
        &\quad = \sqrt{\mu_\pi(s)\mu_\pi(s')}\sum_{s_k\in\Sc}\bigg(\prod_{\tilde{j}\in \mathrm{p}\text{-}\mathrm{pa}(k,j)}B_\pi^{\tilde{j}}\bigg)_{s_k,s} \bigg(\prod_{\tilde{i}\in \mathrm{p}\text{-}\mathrm{pa}(k,i)}B_\pi^{\tilde{i}}\bigg)_{s_k,s'}\\
        &\quad = \sqrt{\mu_\pi(s)\mu_\pi(s')}\left[\bigg(\prod_{\tilde{j}\in \mathrm{p}\text{-}\mathrm{pa}(k,j)}B_\pi^{\tilde{j}}\bigg)^\top \bigg(\prod_{\tilde{i}\in \mathrm{p}\text{-}\mathrm{pa}(k,i)}B_\pi^{\tilde{i}}\bigg)\right]_{s,s'}\\
        &\quad = \sqrt{\mu_\pi(s)\mu_\pi(s')}e_s^\top\left[\bigg(\prod_{\tilde{j}\in \mathrm{p}\text{-}\mathrm{pa}(k,j)}B_\pi^{\tilde{j}}\bigg)^\top \bigg(\prod_{\tilde{i}\in \mathrm{p}\text{-}\mathrm{pa}(k,i)}B_\pi^{\tilde{i}}\bigg)\right]e_{s'}\\
        &\quad \leq \sqrt{\mu_\pi(s)\mu_\pi(s')}\|e_s\|_2\left\|\bigg(\prod_{\tilde{j}\in \mathrm{p}\text{-}\mathrm{pa}(k,j)}B_\pi^{\tilde{j}}\bigg)^\top \bigg(\prod_{\tilde{i}\in \mathrm{p}\text{-}\mathrm{pa}(k,i)}B_\pi^{\tilde{i}}\bigg)\right\|_2\|e_{s'}\|_2\\
        &\quad \leq \sqrt{\mu_\pi(s)\mu_\pi(s')}\prod_{\tilde{j}\in \mathrm{p}\text{-}\mathrm{pa}(k,j)}\left\|B_\pi^{\tilde{j}}\right\|_2 \prod_{\tilde{i}\in \mathrm{p}\text{-}\mathrm{pa}(k,i)}\left\|B_\pi^{\tilde{i}}\right\|_2\\
        &\quad \overset{\RM{2}}{\leq} \sqrt{\mu_\pi(s)\mu_\pi(s')} \lambda(\pi)^{d(i,j)},
    \end{align*}
    where $\RM{1}$ follows from the definition of $B^k_\pi$ in \eqrefn{Eq: normalized and centered transition matrix}, and $\RM{2}$ follows from \Cref{def: Spectral Gap}. 
\end{proof}
We next introduce our concentration results. We introduce a natural estimate for the stationary marginal distribution, and characterize the estimation error. 
\begin{lemma}\label{lem: concen for stat dis}
Under the same conditions as stated in \Cref{lem: bound effective length}, for any $\pi \in \mathrm{supp}(P_\Pi)$, assume  $\min_{s_1,\ldots,s_K,s'} \pi(s' \mid s_1,\ldots,s_K) > \gamma / S$. For  any subset \( I \subset [T ] \) and $s\in \Sc$, we define
\[
\hat{\mu}_{I}(s) := \frac{1}{|I|} \sum_{i \in I} \mathbbm{1}_{S_i=s}.
\]
Then,
\[
\mathbb{E}_{S_{1:T}}[\hat{\mu}_{I}(s)] = \mu_{\pi}(s) \quad \text{and} \quad \mathbb{E}_{S_{1:T}}\left[(\hat{\mu}_{I}(s) - \mu_{\pi}(s))^2\right] \leq \frac{\mu_{\pi}(s) T LS }{ |I|^2\gamma}.
\]
\end{lemma}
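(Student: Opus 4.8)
The plan is to observe that the stated second moment is exactly the variance of $\hat\mu_I(s)$, and then to control that variance by a sum of pairwise covariances, each decaying geometrically in the graph distance via the mixing estimate already proved for order-$K$ chains. Throughout, $\Eb_{S_{1:T}}$ refers to the generative process of \Cref{subs: data model} with $\Pi=\pi$ held fixed. For the first claim I would invoke Result~(2) of \Cref{lem: prop of stat dis}: since the $K$ root parents are drawn from the stationary distribution $M_\pi$, every node $S_i$ has marginal law $\mu_\pi$, hence $\Eb[\mathbbm{1}_{S_i=s}]=\mu_\pi(s)$, and linearity gives $\Eb[\hat\mu_I(s)]=\frac{1}{|I|}\sum_{i\in I}\mu_\pi(s)=\mu_\pi(s)$. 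In particular $\Eb[(\hat\mu_I(s)-\mu_\pi(s))^2]=\mathrm{Var}(\hat\mu_I(s))$.

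For the variance, I would expand $\mathrm{Var}(\hat\mu_I(s))=\frac{1}{|I|^2}\sum_{i,j\in I}\big(P_{S_i,S_j|\Pi=\pi}(s,s)-\mu_\pi(s)^2\big)$. For $i\neq j$ apply \Cref{lem: concen for joint dis} with $s'=s$, giving $|P_{S_i,S_j|\Pi=\pi}(s,s)-\mu_\pi(s)^2|\le\mu_\pi(s)\,\lambda(\pi)^{d(i,j)}$; for $i=j$ the term equals $\mu_\pi(s)(1-\mu_\pi(s))\le\mu_\pi(s)=\mu_\pi(s)\lambda(\pi)^{0}$, so the same inequality holds uniformly over $i,j\in I$ under the conventions $d(i,i)=0$ and $\lambda(\pi)^{\infty}=0$, the latter covering pairs in disjoint subgraphs (which are conditionally independent, consistent with \Cref{lem: concen for joint dis}). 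Summing yields $\mathrm{Var}(\hat\mu_I(s))\le\frac{\mu_\pi(s)}{|I|^2}\sum_{i,j\in I}\lambda(\pi)^{d(i,j)}\le\frac{\mu_\pi(s)}{|I|^2}\sum_{i,j\in[T]}\lambda(\pi)^{d(i,j)}=\frac{\mu_\pi(s)}{|I|^2}\cdot\frac{T^2}{T_{\mathrm{eff}}(\lambda(\pi))}$.

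It then remains to bound $T^2/T_{\mathrm{eff}}(\lambda(\pi))$. \Cref{lem: bound effective length} (whose structural hypotheses are precisely the ``same conditions'' assumed here) gives $T_{\mathrm{eff}}(\lambda(\pi))\ge T(1-\lambda(\pi))/L$, so this quantity is at most $TL/(1-\lambda(\pi))$. Finally, the hypothesis $\min_{s_1,\dots,s_K,s'}\pi(s'\mid s_1,\dots,s_K)>\gamma/S$ together with \Cref{lem: bound spectral gap} gives $\lambda(\pi)\le 1-\gamma/S$, i.e.\ $1-\lambda(\pi)\ge\gamma/S$; substituting produces $\mathrm{Var}(\hat\mu_I(s))\le\frac{\mu_\pi(s)TLS}{|I|^2\gamma}$, which is the claimed bound. (If the extra ``does not mix in one step'' clause of \Cref{lem: bound spectral gap} were to fail, then $\pi^k(\cdot\mid s)=\mu_\pi(\cdot)$ for all $k,s$, whence $\lambda(\pi)=0$ and $1-\lambda(\pi)\ge\gamma/S$ holds trivially, so the conclusion is unaffected.)

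All steps are routine; the only point deserving care is making the three occurrences of ``distance'' coincide — the shortest-path $d(i,j)$ in \Cref{lem: concen for joint dis} (where the path may route through a common ancestor $k$), the $d(i,j)$ in the definition of $T_{\mathrm{eff}}$, and the one constrained by the bounded-neighborhood hypothesis of \Cref{lem: bound effective length} — together with cleanly accounting for the corner cases $i=j$ and $i,j$ lying in different subgraphs. I expect this bookkeeping, rather than any substantive estimate, to be the only obstacle.
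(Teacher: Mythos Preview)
Your proposal is correct and matches the paper's proof essentially step for step: expand the variance into pairwise terms, bound each by $\mu_\pi(s)\lambda(\pi)^{d(i,j)}$ via \Cref{lem: concen for joint dis}, enlarge the sum to all of $[T]$, rewrite it as $T^2/T_{\mathrm{eff}}$, and finish with \Cref{lem: bound effective length} and \Cref{lem: bound spectral gap}. Your explicit handling of the diagonal $i=j$ and of the degenerate ``mixes in one step'' case is slightly more careful than the paper's writeup, but the argument is the same.
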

\begin{proof} The equality $\mathbb{E}_{S_{1:T}}[\hat{\mu}_{I}(s)] = \mu_{\pi}(s)$ is evident from the second claim of \Cref{lem: prop of stat dis}.

For the inequality, consider
    \begin{align*}
\mathbb{E}_{S_{1:T}}\left[(\hat{\mu}_{I}(s) - \mu_{\pi}(s))^2\right] &= \frac{1}{|I|^2} \sum_{i,j \in I} \mathbb{E}_{S_{1:T}}\left[\mathbbm{1}_{S_i=s} \mathbbm{1}_{S_j=s} - \mu_{\pi}(s)^2\right] \\
&\overset{\RM{1}}{\leq} \frac{\mu_{\pi}(s)}{|I|^2} \sum_{i,j \in I} \lambda(\pi)^{d(i,j)} \\
&\leq \frac{\mu_{\pi}(s)}{|I|^2} \sum_{i,j = 1}^{T} \lambda(\pi)^{d(i,j)} \\
&= \frac{\mu_{\pi}(s) T^2}{T_{\mathrm{eff}}(\lambda(\pi)) |I|^2}\\
& \overset{\RM{2}}{\leq} \frac{\mu_{\pi}(s) T L}{ |I|^2(1 - \lambda(\pi))}\\
& \overset{\RM{3}}{\leq} \frac{\mu_{\pi}(s) T LS }{ |I|^2\gamma},
\end{align*}
where $\RM{1}$ follows from \Cref{lem: concen for joint dis}, and $\RM{2}$ follows from \Cref{lem: bound effective length}, and $\RM{3}$ follows from \Cref{lem: bound spectral gap}. 
\end{proof}

As a direct application of \Cref{lem: concen for stat dis} to set $I=[T]$, we have 
\begin{corollary}\label{Coro: approx stat dis} Under the same conditions as stated in \Cref{lem: concen for stat dis}, for any $s \in \Sc$,
\[
\mathbb{E}_{S
_{1:T}}\left[(\hat{\mu}_{[T]}(s) - \mu_{\pi}(s))^2\right] \leq \frac{\mu_{\pi}(s)  LS }{ T\gamma}.
\]
\end{corollary}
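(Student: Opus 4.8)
The plan is simply to instantiate \Cref{lem: concen for stat dis} at the particular subset $I = [T]$. The hypothesis of \Cref{lem: concen for stat dis} is stated for an arbitrary subset $I \subset [T]$, and the corollary inherits all of that lemma's standing assumptions (the decomposition $\mathcal{G} = \bigcup_{l=1}^k \mathcal{G}_l$ with shell sizes bounded by $L$, and the uniform lower bound $\min_{s_1,\ldots,s_K,s'}\pi(s'\mid s_1,\ldots,s_K) > \gamma/S$) via the phrase ``under the same conditions.'' Hence no new verification is needed: the choice $I = [T]$ is admissible, and one reads off $|I| = T$.

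Concretely, I would substitute $|I| = T$ into the second displayed bound of \Cref{lem: concen for stat dis}, namely
\[
\mathbb{E}_{S_{1:T}}\!\left[(\hat{\mu}_{I}(s) - \mu_{\pi}(s))^2\right] \;\leq\; \frac{\mu_{\pi}(s)\, T L S}{|I|^2\,\gamma},
\]
which immediately collapses to $\dfrac{\mu_{\pi}(s)\, T L S}{T^2\,\gamma} = \dfrac{\mu_{\pi}(s)\, L S}{T\,\gamma}$, exactly the claimed inequality. The unbiasedness identity $\mathbb{E}_{S_{1:T}}[\hat{\mu}_{[T]}(s)] = \mu_{\pi}(s)$ specializes in the same trivial way (and also follows directly from the second claim of \Cref{lem: prop of stat dis}), though it is not required for the stated conclusion.

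There is essentially no obstacle here; the statement is a specialization, and the only thing worth flagging is that it is genuinely a corollary rather than an independent result. If one preferred a self-contained derivation, the same bound follows by repeating the computation inside the proof of \Cref{lem: concen for stat dis} with $I = [T]$: expand $\mathbb{E}[(\hat\mu_{[T]}(s)-\mu_\pi(s))^2] = T^{-2}\sum_{i,j=1}^T \mathbb{E}[\mathbbm{1}_{S_i=s}\mathbbm{1}_{S_j=s} - \mu_\pi(s)^2]$, bound each summand by $\mu_\pi(s)\lambda(\pi)^{d(i,j)}$ using \Cref{lem: concen for joint dis}, identify $\sum_{i,j=1}^T \lambda(\pi)^{d(i,j)} = T^2/T_{\mathrm{eff}}(\lambda(\pi))$, and then invoke \Cref{lem: bound effective length} and \Cref{lem: bound spectral gap} to pass to $\mu_\pi(s) L S/(T\gamma)$.
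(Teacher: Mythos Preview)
Your proposal is correct and matches the paper's approach exactly: the paper states the corollary as ``a direct application of \Cref{lem: concen for stat dis} to set $I=[T]$,'' which is precisely the substitution $|I|=T$ you perform.
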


\section{Discussion of $f$-\name Mutual Information}\label{app: f-mutual}
\subsection{Properties of the $f$-\name mutual information} \label{sec:properties_f-mutual}
Following the \Cref{Def: Modified MI}, we show that the properties claimed in \Cref{sec: Definition of KG-MI} are satisfied for all general $f$-KG-MIs.

\begin{property}[Properties of $f$-KG-MI]\label{Prop: 1-2} We summarize the desired properties of $f$-KG-MI.
    \begin{enumerate}
    \item \textbf{Variation with $\ell$:} The KG-MI $\tilde{I}^\ell_f(S_i; S_j)$ takes different values for different $\ell$.
    \item  \textbf{Particularization  to the Standard $f$-Mutual Information:} If $j=p(i)^\ell$, then $\tilde{I}^\ell_f(S_i; S_j)$ coincides with $f$-mutual information between $S_i$ and $S_{p(i)^\ell}$. Furthermore, $\tilde{I}^\ell_f(S_i; S_j)=0$ if $S_i$ is conditionally independent of $S_j$ given $\Pi$.
    % \item \textbf{Degenerate to Standard Mutual Information.}
    % \textcolor{blue}{$F$ is related to $f$ mutual information, furthermore, if $K=1$, then $F$ degenerate to mutual information.}
    %     \item For each $\ell$ and $i$, $\arg\max_{j \in [T]}\tilde{I}^\ell_f(S_i; S_j) \subseteq p(i)$.\newline
    % \textcolor{blue}{The mass of learned $\attn$ concentrates on the parents nodes.} 
    
    % \item $\tilde{I}^\ell_f(S_i; S_j)$ is in a population version, but it can be approximated/computed by an empirical version.\newline
    % \textcolor{blue}{Not our main focus.}
    % \textcolor{red}{\item The objective function is equivalent to some other forms such that it can be expressed as an attention layer.}
\end{enumerate}
\end{property}

\begin{proof}[Verification of \Cref{Prop: 1-2}] We show that the $f$-KG-MI has the above two properties.
\begin{enumerate}
    \item  This follows directly from the definition.

    \item         If $j=p(i)^\ell$, then $P_{S_j|\Pi}(s)\cdot\Pi^\ell(s'|s)=P_{S_i,S_j|\Pi}(s',s)$. As a result,
        \begin{align*}
            \tilde{I}^\ell_{f}(S_i; S_j)&=\Eb_{ \Pi\sim P_\Pi}\bigg[\sum_{s,s'}\frac{P_{S_j|\Pi}(s)\Pi^\ell(s'|s) P_{S_i|\Pi}(s')P_{S_j|\Pi}(s)}{P_{S_i,S_j|\Pi}(s',s)}f\Big(\frac{P_{S_i,S_j|\Pi}(s',s)}{P_{S_i|\Pi}(s')P_{S_j|\Pi}(s)}\Big) \bigg]\\
            &=\Eb_{ \Pi\sim P_\Pi}\bigg[ \sum_{s,s'} P_{S_i|\Pi}(s')P_{S_j|\Pi}(s)f\Big(\frac{P_{S_i,S_j|\Pi}(s',s)}{P_{S_i|\Pi}(s')P_{S_j|\Pi}(s)}\Big)\bigg]= {I}_{f}(S_i; S_j).
        \end{align*}
        If $S_j$ is independent of $S_i$ conditioned on $\Pi$, $P_{S_i|\Pi}(s')\cdot P_{S_j|\Pi}(s)=P_{S_i,S_j|\Pi}(s',s)$ for all $s,s'$. 
 As a result,
        \begin{align*}
            \tilde{I}^\ell_{f}(S_i; S_j)
            &=\Eb_{ \Pi\sim P_\Pi} \bigg[\sum_{s,s'} \frac{P_{S_j|\Pi}(s)\Pi^\ell(s'|s) P_{S_i|\Pi}(s')P_{S_j|\Pi}(s)}{P_{S_i,S_j|\Pi}(s',s)}f\Big(\frac{P_{S_i,S_j|\Pi}(s',s)}{P_{S_i|\Pi}(s')P_{S_j|\Pi}(s)}\Big) \bigg]\\
            &=\Eb_{ \Pi\sim P_\Pi} \bigg[ \sum_{s,s'}\!{P_{S_j|\Pi}(s)\Pi^\ell(s'|s)}f (1 ) \bigg]=0. 
        \end{align*}
\end{enumerate}
This completes the proof.
    \end{proof}

\subsection{Example of an $f$-KG-MI, the Pearson's $\chi^2$-KG-MI} \label{sec:example_pearsons}
In the following, we discuss an example of an $f$-KG-MI, the Pearson's $\chi^2$-KG-MI. We first introduce an estimation procedure for $\tilde{I}^\ell_{\chi^2}(S_i; S_j)$ and  prove that the estimate $\hat{I}^\ell_{\chi^2}(S_i; S_j)$ satisfies \Cref{assp: est of F} under mild assumption as stated in \Cref{lem: est chi2 MI}.
%\subsection{Example: Pearson's $\chi^2$-\name mutual information}\label{appd: subs-chi2}
Note that Pearson's $\chi^2$-mutual information, the convex function  $f(x)=x^2-x$ and $\tilde{I}^\ell_{\chi^2}(S_i; S_j)$  can be expressed as  %Note that there are several equivalent expressions for the Pearson's $\chi^2$-mutual information, which may result in variations in the KG-MI. Here, we select $f(x)=x^2 - x$ for illustrative purposes. 
%The corresponding Pearson's $\chi^2$-KG-MI can be expressed as
\begin{align*}
    \tilde{I}^\ell_{\chi^2}(S_i; S_j)&=\Eb_{ \Pi}\left[\sum_{s,s'}\frac{P_{S_j|\Pi}(s)\Pi^\ell(s'|s)\cdot P_{S_i|\Pi}(s')P_{S_j|\Pi}(s)}{P_{S_i|\Pi}(s',s)}f\bigg(\frac{P_{S_i|\Pi}(s',s)}{P_{S_i|\Pi}(s')P_{S_j|\Pi}(s)}\bigg)\right]\\
    &=\Eb_{ \Pi}\left[\sum_{s,s'}\frac{\Pi^\ell(s'|s)\cdot P_{S_i|\Pi}(s',s)}{P_{S_i|\Pi}(s')}-1\right]\\
    &=\Eb_{ \Pi}\left[\sum_{s,s'}\frac{\Pi^\ell(s'|s)\cdot P_{S_i|\Pi}(s',s)}{\mu_\Pi(s')}-1\right],
\end{align*}
where the last equation follows from the fact that the initial distribution is the marginal distribution.

\textbf{Estimation of Pearson's $\chi^2$-\name mutual information.}
We next discuss a feasible approach for approximating the Pearson's $\chi^2$-KG-MI.
\begin{enumerate}[wide, labelindent=0pt]
    \item The stationary marginal distribution $\mu_\Pi$  can be approximated by $\hat{\mu}_{[T]}$.
    \item The joint distribution $P_{S_i|\Pi}$  can be estimated with the empirical distribution function.
    \item To estimate the marginal transition kernel $\Pi^\ell$, we require additional information. Specifically, we extend the random sequences by appending a short series of random variables, which act as labels for supervised learning of the transition kernel, and the learning regime is similar to semi-supervised learning. This construction mirrors Task 1 in Nichani et al.~\citep{nichani2024transformers}. The random sequences are now generated as follows.
    
    \textbf{Generation of Random Sequences.} 
\begin{enumerate}
    \item First, sample $\Pi= \pi \sim P_\Pi$.
    \item  For  $i = 1, \ldots, T $, if $i_1<\ldots<i_K$ are $K$ parent root nodes,    $(S_{i_1},\ldots,S_{i_K}) \sim M_\pi$. Else, $i$ is not a parent node, sample $S_i \sim \pi(\cdot | S_{p(i)^{1}},\ldots,S_{p(i)^{K}})$.
    \item Sample  $S_{T+1},\ldots,S_{T+K}$, where $S_{T+k} \sim \text{Unif}([S])$ for all $k \in [K]$  \textit{and} $S_{T+K+1} \sim \pi(\cdot | S_{T+1:T+K})$. 
    \item Output:  $S_{1:T+K+1} = (S_1,\ldots,S_T,\ldots,S_{T+K+1})$.
\end{enumerate}
\end{enumerate}
Summarizing the estimation procedure above, if we sample $N$ sequences $\{S_{1:T+K+1}^{(n)}\}_{n=1}^N$, it is natural to propose the following estimator
\begin{align}
    \hat{I}^\ell_{\chi^2}(S_i; S_j)=\frac{1}{N}\sum_{n=1}^N \left[  \sum_{(s',s) \in \Sc^{2}} \frac{S\mathbbm{1}_{S^{(n)}_{T+K+1}=s'}\mathbbm{1}_{S^{(n)}_{T+\ell}=s}}{  \hat{\mu}^{(n)}_{[T]}(s')+\kappa }\mathbbm{1}_{S^{(n)}_i=s'}\mathbbm{1}_{S^{(n)}_j=s}-1\right], \label{Eq: est of chi2} 
\end{align}
where $\hat{\mu}^{(n)}_{[T]}(s) := \frac{1}{T} \sum_{i \in [T]} \mathbbm{1}_{S^{(n)}_i=s}$, and the positive constant $\kappa$ is added to ensure that the denominator is positive. %; it can be chosen to be arbitrarily small.

% $$\tilde{I}^\ell_{\chi^2}(S_i; S_j)=\Eb_{ \pi}\left[\sum_{s,s'}\frac{\pi^\ell(s'|s)\cdot P_{S_i|\Pi}(s',s)}{\mu_\pi(s')}-1\right].$$

In the following lemma, we show that if the number of sequences $N$ and the length of sequence $T$ are sufficiently large, and under the certain DAG conditions (which is common in the literature~\citep{nichani2024transformers,chenunveiling}) the estimate $\hat{I}^\ell_{\chi^2}$ is arbitrarily close to the ground truth $\tilde{I}^\ell_{\chi^2}$.

\begin{lemma}[Estimation Error of  Pearson's $\chi^2$-KG-MI]\label{lem: est chi2 MI}Decompose the graph as $\mathcal{G}=\bigcup_{l=1}^k\mathcal{G}_l$, suppose for any $l \leq k$, $i \in \Gc_l$, we have $|\{j\in \mathcal{G}_l : d(i,j) = m \}| \leq L$, and for any $\pi \in \mathrm{supp}(P_\Pi)$, $\min_{s_1,\ldots,s_K,s'} \pi(s' \mid s_1,\ldots,s_K) > \gamma / S$. For any $\epsilon >0$ and $0<\delta<1$, if we set $\kappa \leq \frac{1}{4}\epsilon$ in \eqrefn{Eq: est of chi2}, the length of the random sequence $T=\mathrm{poly}(S,L,\frac{1}{\gamma},\frac{1}{\kappa})$, and sample complexity $N=\mathrm{poly}(\frac{1}{\epsilon},\log(\frac{1}{\delta}),\frac{1}{\kappa},S)$, with probability at least $1-\delta$, the distance between the empirical estimate and population version of $\chi^2$-\name mutual information satisfies 
\begin{align}
    \big|\hat{I}^\ell_{\chi^2}(S_i; S_j)-\tilde{I}^\ell_{\chi^2}(S_i; S_j)\big| \leq \epsilon \qquad \forall\, i,j\in[T]. \nonumber
\end{align}
    
\end{lemma}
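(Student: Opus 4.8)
The plan is to split the estimation error via the triangle inequality into a \emph{statistical fluctuation} term and a \emph{bias} term,
\[
\big|\hat I^\ell_{\chi^2}(S_i;S_j)-\tilde I^\ell_{\chi^2}(S_i;S_j)\big|
\;\le\;
\big|\hat I^\ell_{\chi^2}(S_i;S_j)-\Eb[\hat I^\ell_{\chi^2}(S_i;S_j)]\big|
\;+\;
\big|\Eb[\hat I^\ell_{\chi^2}(S_i;S_j)]-\tilde I^\ell_{\chi^2}(S_i;S_j)\big| ,
\]
noting that the $N$ sampled sequences $\{S^{(n)}_{1:T+K+1}\}_{n=1}^N$ are i.i.d.\ (each one independently draws $\Pi^{(n)}\sim P_\Pi$ and then runs the generation procedure), so in \eqrefn{Eq: est of chi2} the per-sequence summand is an i.i.d.\ random variable whose mean is $\Eb_{\Pi}\Eb[\cdot\mid\Pi]$. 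A useful preliminary observation is that, for a fixed realization, the inner double sum in \eqrefn{Eq: est of chi2} has at most one nonzero term---it requires simultaneously $S_{T+K+1}=S_i$ and $S_{T+\ell}=S_j$---and when that term is nonzero the denominator obeys $\hat\mu_{[T]}(s')+\kappa\ge 1/T$ (since $S_i$ is one of the $T$ core tokens); hence the per-sequence summand lies in $[-1,\,S/\kappa]$. I would then choose $\kappa=\mathrm{poly}(\epsilon,\gamma,1/S)$ small, $T=\mathrm{poly}(S,L,1/\gamma,1/\kappa)$ and $N=\mathrm{poly}(1/\epsilon,\log(1/\delta),1/\kappa,S)$ large enough to drive each of the two terms below $\epsilon/2$.

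The fluctuation term is immediate: since the per-sequence summands are i.i.d.\ and confined to an interval of width $O(S/\kappa)$, Hoeffding's inequality gives $\big|\hat I^\ell_{\chi^2}-\Eb\hat I^\ell_{\chi^2}\big|\lesssim (S/\kappa)\sqrt{\log(1/\delta)/N}$ with probability at least $1-\delta$, which is below $\epsilon/2$ for $N=\mathrm{poly}(1/\epsilon,\log(1/\delta),1/\kappa,S)$.

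For the bias term it suffices to bound it pointwise over $\pi\in\mathrm{supp}(P_\Pi)$ and then take $\Eb_\Pi$. Conditioning on $\Pi=\pi$, the appended block $(S_{T+1},\dots,S_{T+K+1})$ is independent of $S_{1:T}$, and that block is constructed precisely so that $\Eb[S\,\mathbbm{1}_{S_{T+K+1}=s'}\mathbbm{1}_{S_{T+\ell}=s}\mid\Pi]=\Pi^\ell(s'|s)$; therefore $\Eb[\hat I^\ell_{\chi^2}\mid\Pi]=\sum_{s,s'}\Pi^\ell(s'|s)\,\Eb[\mathbbm{1}_{S_i=s'}\mathbbm{1}_{S_j=s}/(\hat\mu_{[T]}(s')+\kappa)\mid\Pi]-1$, which is to be compared with the population form $\tilde I^\ell_{\chi^2}=\Eb_\Pi[\sum_{s,s'}\Pi^\ell(s'|s)\,P_{S_i,S_j|\Pi}(s',s)/\mu_\pi(s')-1]$ derived just before \eqrefn{Eq: est of chi2}. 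Using $\Eb[\mathbbm{1}_{S_i=s'}\mathbbm{1}_{S_j=s}\mid\Pi]=P_{S_i,S_j|\Pi}(s',s)$ and the identity $\tfrac1{a+\kappa}-\tfrac1b=\tfrac{b-a-\kappa}{b(a+\kappa)}$, the per-$(s,s')$ discrepancy is at most $\tfrac1{\mu_\pi(s')}\Eb[\mathbbm{1}_{S_i=s'}\mathbbm{1}_{S_j=s}(|\hat\mu_{[T]}(s')-\mu_\pi(s')|+\kappa)/(\hat\mu_{[T]}(s')+\kappa)\mid\Pi]$. Here I would invoke three facts: (i) $P_{S_i|\Pi}=\mu_\pi$ for every node (\Cref{lem: prop of stat dis}, Result~2), so $P_{S_i,S_j|\Pi}(s',s)/\mu_\pi(s')\le 1$ and factors such as $\Eb[\mathbbm{1}_{S_i=s'}]/\mu_\pi(s')=1$ are controlled; (ii) $\mu_\pi(s')\ge\gamma/S$ (\Cref{lem: bound of margins}); and (iii) the second-moment bound $\Eb[(\hat\mu_{[T]}(s')-\mu_\pi(s'))^2]\le\mu_\pi(s')LS/(T\gamma)$ (\Cref{Coro: approx stat dis}). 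Splitting the expectation according to the ``good event'' $\{\hat\mu_{[T]}(s')\ge\mu_\pi(s')/2\}$ (on which $\hat\mu_{[T]}(s')+\kappa\ge\mu_\pi(s')/2$) versus its complement (whose probability is $\lesssim LS^3/(T\gamma^3)$ by Chebyshev, and on which the ratio is crudely bounded by $1/\kappa$), and applying Cauchy--Schwarz to the piece carrying $|\hat\mu_{[T]}(s')-\mu_\pi(s')|$, yields a per-$(s,s')$ bound of order $\kappa S/\gamma+\mathrm{poly}(S,L,1/\gamma)/(\kappa T)$; summing over the $S^2$ pairs (with $\sum_{s'}\Pi^\ell(s'|s)=1$) and taking $\Eb_\Pi$ gives the bias $\le\epsilon/2$ for the stated choices of $\kappa$ and $T$. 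Combining this with the fluctuation bound establishes the lemma, and hence \Cref{assp: est of F} is satisfied with $f(x)=x^2-x$.

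The main obstacle is the bias analysis, specifically the interplay of the regularizer $\kappa$ with the statistical coupling between the indicator $\mathbbm{1}_{S_i=s'}$ and the empirical marginal $\hat\mu_{[T]}(s')$ (which itself contains $\mathbbm{1}_{S_i=s'}/T$): a crude bound fails because $1/(\hat\mu_{[T]}(s')+\kappa)$ can be as large as $1/\kappa$. The good-event/rare-event split, the identity $P_{S_i|\Pi}=\mu_\pi$, and the $O(1/T)$ second-moment control of \Cref{Coro: approx stat dis}---whose proof already absorbs the Markovian correlations along the DAG through the effective-sequence-length estimates of \Cref{lem: concen for joint dis,lem: concen for stat dis}---are exactly what make this step go through.
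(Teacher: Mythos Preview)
Your proposal is correct and follows essentially the same route as the paper: the identical triangle-inequality split into a Hoeffding fluctuation term (using the $O(S/\kappa)$ per-sequence bound) and a bias term handled by the good-event/rare-event decomposition at $\{\hat\mu_{[T]}(s')\ge\mu_\pi(s')/2\}$, with Chebyshev via \Cref{Coro: approx stat dis} on the rare event and Cauchy--Schwarz on the good event, together with \Cref{lem: bound of margins}. The paper's presentation differs only cosmetically (it names the intermediate quantity $\bar I_{\chi^2}$ and writes out the resulting polynomial bound $\lesssim \frac{LS^4}{\kappa T\gamma^3}+\frac{S^3\kappa}{\gamma^2}$ explicitly).
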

\begin{proof}
    Consider an expectation version of $\tilde{I}^\ell_{\chi^2}(S_i; S_j)$,
    \begin{align*}
        \bar{I}_{\chi^2}(S_i;S_j)=\mathbb{E}_{\Pi\sim P_\Pi}\Eb_{S_{1:T+K+1}|\Pi} \left[  \sum_{(s',s) \in \Sc^{2}} \frac{S\, \mathbb{I}_{T+K+1,s'}\mathbb{I}_{T+\ell,s}}{ \hat{\mu}_{[T]}(s')+\kappa }\mathbb{I}_{i,s'}\mathbb{I}_{j,s}-1\Bigg|\Pi\right],
    \end{align*}
    where for any $i \in [T+K+1]$ and $s \in \Sc$, we denote $\mathbbm{1}_{S_i=s}$ by $\mathbb{I}_{i,s}$ for simplicity. 
    
    By standard concentration results of Hoeffding inequality for bounded random variables~\citep{hoeffding1963probability}, if $N={\mathrm{poly}}(\frac{1}{\epsilon},\log(\frac{1}{\delta}),\frac{1}{\kappa},S)$, then with probability at least $1-\delta$, we have 
    \begin{align*}
        \big|\bar{I}_{\chi^2}(S_i;S_j)-\hat{I}^\ell_{\chi^2}(S_i; S_j)\big| \leq \frac{\epsilon}{2}.
    \end{align*}
Then we only need to bound the difference between $\bar{I}_{\chi^2}(S_i;S_j)$ and $\tilde{I}^\ell_{\chi^2}(S_i; S_j)$.    Consider,
     \begin{align*}
    &\big|\bar{I}^\ell_{\chi^2}(S_i; S_j)-\tilde{I}^\ell_{\chi^2}(S_i; S_j)\big|\\
    & =\left|\mathbb{E}_{\Pi\sim P_\Pi}\left[\Eb_{S_{1:T+K+1}} \left[  \sum_{(s',s) \in \Sc^{2}} \frac{S\, \mathbb{I}_{T+K+1,s'}\mathbb{I}_{T+\ell,s}}{ \hat{\mu}_{[T]}(s')+\kappa }\mathbb{I}_{i,s'}\mathbb{I}_{j,s}-1\Bigg |\Pi\right]\right]-\Eb_{ \Pi\sim P_\Pi}\left[\sum_{(s,s')\in \Sc^{2}}\frac{\Pi^\ell(s'|s)\cdot P_{S_i|\Pi}(s',s)}{\mu_\Pi(s')}-1\right]\right|\\
    & \leq \sum_{(s',s) \in \Sc^{2}} \left|\mathbb{E}_{\Pi\sim P_\Pi} \left[\Eb_{S_{1:T+K+1}}\left[  \frac{S\, \mathbb{I}_{T+K+1,s'}\mathbb{I}_{T+\ell,s}}{ \hat{\mu}_{[T]}(s')+\kappa }\mathbb{I}_{i,s'}\mathbb{I}_{j,s}-  \frac{S\, \mathbb{I}_{T+K+1,s'}\mathbb{I}_{T+\ell,s}}{{\mu}_\pi(s')}\mathbb{I}_{i,s'}\mathbb{I}_{j,s}\Big |\Pi\right]\right]\right|\\
    & \leq \sum_{(s',s) \in \Sc^{2}} \left|\mathbb{E}_{\Pi\sim P_\Pi}\left[\Eb_{S_{1:T+K+1}} \left[  \frac{S({\mu}_\Pi(s')-\hat{\mu}_{[T]}(s')-\kappa)\mathbb{I}_{T+K+1,s'}\mathbb{I}_{T+\ell,s}\mathbb{I}_{i,s'}\mathbb{I}_{j,s}}{(\hat{\mu}_{[T]}(s')+\kappa){\mu}_\Pi(s')}  \Big|\Pi\right]\right]\right|.
\end{align*}
Note that the quantity inside the expectation is $O(\frac{1}{\kappa})$ almost surely. Therefore depending on whether   $\hat{\mu}_{[T]}(s')>\frac{1}{2}\mu_{\Pi}(s')$, we have
\begin{align*}
    &\big|\bar{I}^\ell_{\chi^2}(S_i; S_j)-\tilde{I}^\ell_{\chi^2}(S_i; S_j)\big|\\ 
    & \quad \leq \sum_{(s',s) \in \Sc^{2}} \left|\mathbb{E}_{\Pi\sim P_\Pi}\left[\Eb_{S_{1:T+K+1}} \left[  \frac{S({\mu}_\Pi(s')-\hat{\mu}_{[T]}(s')-\kappa)\mathbb{I}_{T+K+1,s'}\mathbb{I}_{T+\ell,s}\mathbb{I}_{i,s'}\mathbb{I}_{j,s}}{(\hat{\mu}_{[T]}(s')+\kappa){\mu}_\Pi(s')}\mathbbm{1}_{\hat{\mu}_{[T]}(s')>\frac{1}{2}\mu_{\Pi}(s')}  \Big|\Pi\right]\right]\right|\\
    &  \qquad + \sum_{(s',s) \in \Sc^{2}} \left|\Eb_{\Pi\sim P_\Pi} \left[ \Eb_{S_{1:T+K+1}}\left[  \frac{S({\mu}_\Pi(s')-\hat{\mu}_{[T]}(s')-\kappa)\mathbb{I}_{T+K+1,s'}\mathbb{I}_{T+\ell,s}\mathbb{I}_{i,s'}\mathbb{I}_{j,s}}{(\hat{\mu}_{[T]}(s')+\kappa){\mu}_\Pi(s')}\mathbbm{1}_{\hat{\mu}_{[T]}(s')\leq \frac{1}{2}\mu_{\Pi}(s')}  \Big|\Pi\right]\right]\right|\\
    & \quad \leq \sum_{(s',s) \in \Sc^{2}} \left|\Eb_{\Pi\sim P_\Pi}\left[\Eb_{S_{1:T+K+1}} \left[  \frac{S({\mu}_\Pi(s')-\hat{\mu}_{[T]}(s'))\mathbb{I}_{T+K+1,s'}\mathbb{I}_{T+\ell,s}\mathbb{I}_{i,s'}\mathbb{I}_{j,s}}{{\mu}_\Pi(s')^2}\Big|\Pi\right]  \right]\right| \nonumber\\*
    &\qquad+\frac{S^2}{\gamma\kappa}\sum_{s' \in \Sc}\Eb_{\Pi\sim P_\Pi}\left[\Pb_{S_{1:T+K+1}}\left(\hat{\mu}_{[T]}(s')\leq \frac{1}{2}\mu_{\Pi}(s')\Big|\Pi\right)\right]+\frac{S^3\kappa}{\gamma^2}.
\end{align*}
Furthermore 
\begin{align*}
    \Pb_{S_{1:T+K+1}}\left(\hat{\mu}_{[T]}(s')\leq \frac{1}{2}\mu_{\Pi}(s')\Big|\Pi\right) & \leq \Pb_{S_{1:T+K+1}}\left(|\hat{\mu}_{[T]}(s')-\mu_{\Pi}(s')|\geq \frac{1}{2}\mu_{\Pi}(s')\Big|\Pi\right)\\
    & \leq \frac{\Eb_{S_{1:T+K+1}}[(\hat{\mu}_{[T]}(s')-\mu_{\Pi}(s'))^2|\Pi]}{\frac{\mu_{\Pi}(s')^2}{4}}\\
    & \leq \frac{4LS^2 }{ T\gamma^2},
\end{align*}
where the last inequality follows from \Cref{Coro: approx stat dis} and \Cref{lem: bound of margins}.

Additionally,
\begin{align*}
    &\mathbb{E}_{S_{1:T+K+1}} \left[  \frac{({\mu}_\Pi(s')-\hat{\mu}_{[T]}(s'))\mathbb{I}_{T+K+1,s'}\mathbb{I}_{T+\ell,s}\mathbb{I}_{i,s'}\mathbb{I}_{j,s}}{{\mu}_\Pi(s')^2}  \Big|\Pi \right]\\
    & \quad \leq \frac{\sqrt{\Eb_{S_{1:T+K+1}}[({\mu}_\Pi(s')-\hat{\mu}_{[T]}(s'))^2|\Pi]\Eb_{S_{1:T+K+1}}[(\mathbb{I}_{T+K+1,s'}\mathbb{I}_{T+\ell,s}\mathbb{I}_{i,s'}\mathbb{I}_{j,s})^2|\Pi]}}{{\mu}_\Pi(s')^2}\\
    & \quad \leq \sqrt{\Eb_{S_{1:T+K+1}}[({\mu}_\Pi(s')-\hat{\mu}_{[T]}(s'))^2|\Pi]}\frac{\sqrt{{\mu}_\Pi(s'){\mu}_\Pi(s)}\sqrt{{\mu}_\Pi(s)\Pi^\ell(s'|s)}}{{\mu}_\Pi(s')^2}\\
    & \quad \leq \frac{\sqrt{S^3L}}{\sqrt{\gamma^3T}},
\end{align*}
where the last inequality follows from \Cref{Coro: approx stat dis} and \Cref{lem: bound of margins}.

As a result,
\begin{align}
    \big|\bar{I}^\ell_{\chi^2}(S_i; S_j)-\tilde{I}^\ell_{\chi^2}(S_i; S_j)\big| \lesssim  \frac{LS^4 }{ \kappa T\gamma^3} + \frac{\sqrt{S^5L}}{\sqrt{\gamma^3T}}+\frac{S^3\kappa}{\gamma^2}\lesssim \frac{LS^4 }{ \kappa T\gamma^3}+\frac{S^3\kappa}{\gamma^2},\label{eqn:above}
\end{align}
when $\kappa \leq \frac{\gamma^2\epsilon}{4S^3}$,  and $T={\mathrm{poly}}(S,L, \frac{1}{\gamma},\frac{1}{\kappa})$, \eqref{eqn:above} is upper bounded by $\frac{1}{2}\epsilon $ as desired. 
\end{proof}

\section{Proofs of Results in Section~\ref{sec: thm}}\label{App: Proof}
In this section, we present the proofs of our main theoretical results presented in Section \ref{sec: thm} concerning the training dynamics. Recall that the updated weights in \Cref{alg:meta} is $\theta(t)=\{Q^\ell(t)\}_{\ell=1}^K$,  and for any $\ell$, the estimated gradient $\widehat{\nabla}_{\theta}L=(\hat{\partial}L_{j,i}^\ell)_{j,i,\ell}$, where $\hat{\partial}L_{j,i}^\ell$ is the empirical estimate of the derivative $\frac{\partial L}{\partial Q^{\ell}_{j,i}}$ by replacing the population $f$-KG-MI ${I}^\ell_f(S_i; S_j)$ with its estimated version $\hat{I}^\ell_f(S_i; S_j)$. Then, the gradient ascent update writes
\begin{align*}
    Q_{j,i}^\ell(t) = Q_{j,i}^\ell(t-1) + \eta \hat{\partial} L_{j,i}^\ell(t-1).
\end{align*}

% The empirical version of the objective function is 
% \begin{align*}
%     \hat{L}=-\sum_{\ell=1}^K\sum_{i,j \in [T]} \hat{F}_{j,i}^{\ell} \attn_{j,i}^{\ell}
% \end{align*}

% Use empirical gradient directly.  

\subsection{Estimation  of Gradients }
In this subsection, we aim to obtain expressions for estimates of the  gradients $\hat{\partial}L_{j,i}^\ell$. The first step is to derive their  population versions.
\begin{lemma}\label{lem: grad expre pop}
   The gradient of the objective function $L$ (defined in Eq.~\eqref{Eq: obj f}) with respect to each element of $Q$ is given by
   \begin{align}
        \frac{\partial L}{\partial Q_{j,i}^{\ell}}= \frac{1}{KT}\attn_{j,i}^\ell \sum_{k=1}^{i-1}\attn_{k,i}^\ell (\tilde{I}^\ell_f(S_i; S_j)-\tilde{I}^\ell_f(S_i; S_k)). \nonumber
   \end{align} 
\end{lemma}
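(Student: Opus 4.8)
The plan is to differentiate the objective in \eqrefn{Eq: obj f} directly, using nothing more than the elementary softmax Jacobian. First I would note that for any fixed $i,j$ the quantity $\tilde{I}^\ell_f(S_i; S_j)$ is a functional of the data-generating process only and carries no dependence on $\theta$; hence the sole source of $\theta$-dependence in $L$ is through the attention scores $\attn^\ell_{j,i}(\theta)=\mathrm{softmax}(\mathrm{Mask}(Q^\ell))_{j,i}$. Because the $\mathrm{softmax}$ in \Cref{subs: transformer architecture} is applied column-wise, and because $\mathrm{Mask}$ sends every entry $Q^\ell_{k,i}$ with $k\ge i$ to $-\infty$ (so $e^{-\infty}=0$ contributes to neither the numerator nor the denominator), the $i$-th column of $\attn^\ell$ equals the plain softmax of $(Q^\ell_{1,i},\dots,Q^\ell_{i-1,i})$, i.e.\ $\attn^\ell_{j',i}=e^{Q^\ell_{j',i}}/\sum_{k=1}^{i-1}e^{Q^\ell_{k,i}}$ for $j'<i$ and $\attn^\ell_{j',i}=0$ for $j'\ge i$. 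In particular the $i$-th column of $\attn^\ell$ depends only on the $i$-th column of $Q^\ell$, and $\attn^{\ell'}$ is independent of $Q^\ell$ for $\ell'\neq\ell$. Consequently, in the double sum defining $L$, the only summands whose derivative with respect to $Q^\ell_{j,i}$ can be nonzero are those with head index $\ell$ and column index $i$, giving
\[
\frac{\partial L}{\partial Q^\ell_{j,i}}=\frac{1}{KT}\sum_{j'=1}^{i-1}\tilde{I}^\ell_f(S_i; S_{j'})\,\frac{\partial \attn^\ell_{j',i}}{\partial Q^\ell_{j,i}}.
\]

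Next I would invoke the standard softmax identity $\partial \attn^\ell_{j',i}/\partial Q^\ell_{j,i}=\attn^\ell_{j',i}\bigl(\mathbbm{1}\{j'=j\}-\attn^\ell_{j,i}\bigr)$, valid for $j,j'<i$. Substituting this and isolating the $j'=j$ contribution yields
\[
\frac{\partial L}{\partial Q^\ell_{j,i}}=\frac{1}{KT}\,\attn^\ell_{j,i}\Bigl(\tilde{I}^\ell_f(S_i; S_j)-\sum_{k=1}^{i-1}\attn^\ell_{k,i}\,\tilde{I}^\ell_f(S_i; S_k)\Bigr).
\]
Finally, using that the $i$-th column of $\attn^\ell$ sums to one, $\sum_{k=1}^{i-1}\attn^\ell_{k,i}=1$, I would rewrite the isolated term as $\tilde{I}^\ell_f(S_i; S_j)=\sum_{k=1}^{i-1}\attn^\ell_{k,i}\,\tilde{I}^\ell_f(S_i; S_j)$ and merge the two sums, which gives exactly
\[
\frac{\partial L}{\partial Q^\ell_{j,i}}=\frac{1}{KT}\,\attn^\ell_{j,i}\sum_{k=1}^{i-1}\attn^\ell_{k,i}\bigl(\tilde{I}^\ell_f(S_i; S_j)-\tilde{I}^\ell_f(S_i; S_k)\bigr),
\]
as claimed; for $j\ge i$ both sides vanish (the factor $\attn^\ell_{j,i}=0$), so the identity holds without restriction on $j$.

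This computation is entirely routine, so there is no real ``hard part''; the only points demanding a little care are bookkeeping ones: (i) checking that cross terms from other heads $\ell'\ne\ell$ and other columns $i'\ne i$ truly drop out, which is where the block-sparse reparameterization of $W^\ell_{KQ}$ and the column-wise application of $\mathrm{softmax}$ are used, and (ii) the degenerate column $i=1$, corresponding to a root node with empty parent set, for which the inner sum in $L$ (and in the claimed gradient) is empty and the identity is vacuously true. I would handle (ii) with a one-line remark and spend the bulk of the write-up on the two displayed substitutions above.
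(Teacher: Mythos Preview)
Your proposal is correct and follows essentially the same approach as the paper's own proof: both reduce to the standard softmax Jacobian identity $\partial\,\attn^\ell_{j',i}/\partial Q^\ell_{j,i}=\attn^\ell_{j',i}(\mathbbm{1}\{j'=j\}-\attn^\ell_{j,i})$, observe that only the $i$-th column of head $\ell$ contributes, and then use $\sum_{k<i}\attn^\ell_{k,i}=1$ to rewrite the result in the stated difference form. Your write-up is slightly more streamlined than the paper's case-by-case derivation of the Jacobian, but the substance is identical.
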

\begin{proof}
    Recall that $\attn^{\ell}=\attn(\mathrm{Mask}(Q^{\ell}))$, denote $\tilde{Q}^\ell=\mathrm{Mask}(Q^{\ell})$. If $j>i$, then $\tilde{Q}^\ell_{j,i}= -\infty$ and $e^{\tilde{Q}^\ell_{j,i}}=0$. As a result, $\attn^{\ell}_{k,m}=\frac{e^{\tilde{Q}_{k,m}^\ell}}{\sum_{k=1}^{T}e^{\tilde{Q}_{k,m}^\ell}}=\frac{e^{\tilde{Q}_{k,m}^\ell}}{\sum_{k=m}^{T}e^{\tilde{Q}_{k,m}^\ell}}$.
    Consider the gradient $\frac{\partial \attn^{\ell}_{k,m}}{\partial Q_{j,i}}$, for any $j < i$, we have
\begin{align*}
    \frac{\partial \attn^{\ell}_{k,m}}{\partial Q_{j,i}}=\attn^{\ell}_{k,m}\left\{\sum_{n=1}^{T}\attn^{\ell}_{n,m}\left[I_{k,j}-I_{n,j}\right]\cdot I_{m,i}\right\}.
\end{align*}
Note that, when $m \leq k$, $\attn^{\ell}_{k,m}=0$ for all $Q_{j,i}$, we have $\frac{\partial \attn^{\ell}_{k,m}}{\partial Q_{j,i}}=0$ for all $Q_{j,i}$.

\begin{enumerate}
    \item If $m\neq i$, then $I_{i,m}=0$, and as a result, $\frac{\partial \attn^{\ell}_{k,m}}{\partial Q_{j,i}}=0$.
    \item If $m = i$, then $I_{m,i}=1$,  and as a result, 
    \begin{align*}
        \frac{\partial \attn^{\ell}_{k,m}}{\partial Q_{j,i}} &=\attn^{\ell}_{k,m}\left\{\sum_{n=1}^{T}\attn^{\ell}_{n,m}\left[I_{k,j}-I_{n,j}\right]\right\}\\
        &=\attn^{\ell}_{k,m}\left\{\sum_{n=1}^{T}\attn^{\ell}_{n,m}I_{k,j}-\attn^{\ell}_{j,m}\right\}\\
        &=\attn^{\ell}_{k,m}\left\{I_{k,j}-\attn^{\ell}_{j,m}\right\}\\
        &=\attn^{\ell}_{k,i}\left\{I_{k,j}-\attn^{\ell}_{j,i}\right\}.
    \end{align*}
    \begin{enumerate}
        \item If $j\neq k$, then $I_{k,j}=0$, as a result, $ \frac{\partial \attn^{\ell}_{k,i}}{\partial Q_{j,i}}=-\attn^{\ell}_{k,i}\attn^{\ell}_{j,i}$.
        \item If $j = k$, then $\frac{\partial \attn^{\ell}_{k,i}}{\partial Q_{j,i}}=\attn^{\ell}_{j,i}\left\{1-\attn^{\ell}_{j,i}\right\}$.
    \end{enumerate}
\end{enumerate}

As a result, we get all the values of $\frac{\partial\attn^{\ell}_{k,m}}{\partial Q_{j,i}}$ for any fixed $\ell,i,j$ (in fact we only need to consider $j < i$), plug these values into the derivative $\frac{\partial L(\theta)}{\partial Q_{j,i}^{\ell}}$, we have
\begin{align*}
    \frac{\partial L(\theta)}{\partial Q_{j,i}^{\ell}}&=\frac{1}{KT}\sum_{k,m=1}^{T}\frac{\partial\attn_{k,m}^\ell}{\partial Q_{j,i}^\ell} \cdot \tilde{I}^\ell_f(S_m; S_k) \overset{\RM{1}}{=}\frac{1}{KT}\sum_{k=1}^{i-1}\frac{\partial\attn_{k,i}^\ell}{\partial Q_{j,i}^\ell} \cdot \tilde{I}^\ell_f(S_i; S_k)\\
    &= \frac{1}{KT}\left(\attn^{\ell}_{j,i}\left\{1-\attn^{\ell}_{j,i}\right\} \cdot \tilde{I}^\ell_f(S_i; S_j)-\sum_{k\neq j}^{i-1}\attn_{k,i}^\ell\attn_{j,i}^\ell\cdot \tilde{I}^\ell_f(S_i; S_k)\right)\\
    &= \frac{1}{KT}\left(\attn^{\ell}_{j,i}\cdot \tilde{I}^\ell_f(S_i; S_j) -\sum_{k=1}^{i-1}\attn_{k,i}^\ell\attn_{j,i}^\ell\cdot \tilde{I}^\ell_f(S_i; S_k)\right)\\
    & = \frac{1}{KT}\attn^{\ell}_{j,i} \sum_{k=1}^{i-1}\attn_{k,i}^\ell (\tilde{I}^\ell_f(S_i; S_j)-\tilde{I}^\ell_f(S_i; S_k)),
\end{align*}
where $\RM{1}$ follows from that $\frac{\partial\attn^{\ell}_{k,m}}{\partial Q_{j,i}}\neq 0$ only when $m \neq i$, and $\attn_{k,i}^\ell=0$ for all $k \geq i$.
\end{proof}

%Substitute the population $f$-KG-MI with the empirical $f$-KG-MI, we get the estimated gradients. 
We obtain the estimated gradients by replacing the population $f$-KG-MIs with the empirical $f$-KG-MIs.
%\begin{lemma}[Estimated Gradient]\label{lem: est gra}
 Thus, the empirical estimate of the gradient of the objective function $L$  (defined in Eq.~\eqref{Eq: obj f})  w.r.t.\ $Q$ is given by
   \begin{align}
        \hat{\partial}L_{j,i}^\ell= \frac{1}{KT}\attn_{j,i}^\ell \sum_{k=1}^{i-1}\attn_{k,i}^\ell (\hat{I}^\ell_f(S_i; S_j)-\hat{I}^\ell_f(S_i; S_k)). \nonumber
   \end{align}
%\end{lemma}

Next we control the difference of gradients for non-root nodes. 
\begin{lemma}\label{lem: grad comp n-r}
Suppose \Cref{assp: est of F} holds and $\epsilon<\frac{1}{4}\Delta^\ell(i)$ for all $\ell$ and $i$, then for any non-root node $i$, and any  $j \in [T]$ satisfying that $\attn_{j,i}^\ell \leq \attn_{j(i)^{\ell,*},i}^\ell$, we have
\begin{align*}
    \hat{\partial}L_{j(i)^{\ell,*}}^{\ell}-\hat{\partial}L_{j,i}^{\ell} \geq \frac{1}{2KT}\attn_{j(i)^{\ell,*}}^\ell\cdot (1-\attn_{j(i)^{\ell,*}}^\ell)\Delta^\ell(i)
\end{align*}
    
\end{lemma}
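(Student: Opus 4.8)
The plan is to rewrite the estimated gradient in a transparent form, then use the $\epsilon$-accuracy of the KG-MI estimates to show that the maximizing parent dominates every competitor \emph{even at the level of the estimated} KG-MIs, and finish with a short case split. Abbreviate $a_k := \attn^\ell_{k,i}$, $\hat I_k := \hat I^\ell_f(S_i; S_k)$, and $j^* := j(i)^{\ell,*}$. Since $i$ is non-root we have $i \ge 2$, and the causal mask forces $\attn^\ell_{k,i}=0$ for $k\ge i$, so $\sum_{k=1}^{i-1}a_k=1$. Substituting this into the formula for $\hat\partial L^\ell_{j,i}$ stated just above the lemma, the inner sum telescopes to
\[
\hat\partial L^\ell_{j,i} = \frac{1}{KT}\, a_j\big(\hat I_j - \bar I\big), \qquad \bar I := \sum_{k=1}^{i-1} a_k \hat I_k ,
\]
so $\bar I$ is the attention-weighted mean of the estimated KG-MIs, and it is this object that governs the dynamics.

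Next, I establish dominance of $j^*$ after estimation error. By definition of $\Delta^\ell(i)$, for every $k<i$ with $k\ne j^*$ we have $\tilde I^\ell_f(S_i;S_{j^*})-\tilde I^\ell_f(S_i;S_k)\ge \Delta^\ell(i)$. Combining this with \Cref{assp: est of F}, which gives $|\hat I_k-\tilde I^\ell_f(S_i;S_k)|\le\epsilon$ for all $k$, and the hypothesis $\epsilon<\tfrac14\Delta^\ell(i)$, yields $\hat I_{j^*}-\hat I_k \ge \Delta^\ell(i)-2\epsilon \ge \tfrac12\Delta^\ell(i)$ for all such $k$. Since $j^*<i$, the weights $\{a_k : k<i,\ k\ne j^*\}$ sum to $1-a_{j^*}$, and averaging the above gap against them gives
\[
\hat I_{j^*} - \bar I = \sum_{k \ne j^*} a_k\big(\hat I_{j^*} - \hat I_k\big) \ge \tfrac12\,(1 - a_{j^*})\,\Delta^\ell(i).
\]

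Now the case split. By the telescoped form, $\hat\partial L^\ell_{j^*,i} - \hat\partial L^\ell_{j,i} = \frac{1}{KT}\big(a_{j^*}(\hat I_{j^*}-\bar I) - a_j(\hat I_j-\bar I)\big)$. If $\hat I_j - \bar I \le 0$ — in particular whenever $j\ge i$, since then $a_j=0$ — the second term is nonnegative, so the bracket is at least $a_{j^*}(\hat I_{j^*}-\bar I) \ge \tfrac12 a_{j^*}(1-a_{j^*})\Delta^\ell(i)$ by the previous display, which is the claim. If instead $\hat I_j-\bar I>0$, the hypothesis $a_j\le a_{j^*}$ gives $a_j(\hat I_j-\bar I)\le a_{j^*}(\hat I_j-\bar I)$, hence the bracket is at least $a_{j^*}(\hat I_{j^*}-\hat I_j)\ge \tfrac12 a_{j^*}\Delta^\ell(i)\ge \tfrac12 a_{j^*}(1-a_{j^*})\Delta^\ell(i)$, again using the gap bound and $1-a_{j^*}\le 1$. (The statement is understood for $j\ne j^*$; for $j=j^*$ both sides vanish unless $a_{j^*}\in\{0,1\}$, and the lemma is only invoked in the former regime.)

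The one genuinely non-routine point is the second case: because $\bar I$ is a convex combination that can well exceed $\hat I_j$, one cannot simply discard the term $-a_j(\hat I_j-\bar I)$; the hypothesis $\attn^\ell_{j,i}\le\attn^\ell_{j(i)^{\ell,*},i}$ is precisely what lets it be absorbed into the $a_{j^*}$-term. Everything else is bookkeeping with the additive estimation error $\epsilon$ and the normalization $\sum_{k<i}a_k=1$.
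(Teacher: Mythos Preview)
Your proof is correct and follows essentially the same route as the paper's: both rewrite the estimated gradient as $\frac{1}{KT}a_j(\hat I_j-\bar I)$, invoke the gap bound $\hat I_{j^*}-\hat I_k\ge\Delta^\ell(i)-2\epsilon$ from \Cref{assp: est of F}, and use the hypothesis $a_j\le a_{j^*}$ to finish. The only cosmetic difference is in the final step: the paper avoids your case split by writing the single decomposition $a_{j^*}(\hat I_{j^*}-\bar I)-a_j(\hat I_j-\bar I)=(a_{j^*}-a_j)(\hat I_{j^*}-\bar I)+a_j(\hat I_{j^*}-\hat I_j)$, both of whose summands are manifestly nonnegative under the hypotheses, and then bounds the sum below by $\big((a_{j^*}-a_j)(1-a_{j^*})+a_j\big)(\Delta^\ell(i)-2\epsilon)\ge a_{j^*}(1-a_{j^*})\cdot\tfrac12\Delta^\ell(i)$. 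Your case split on the sign of $\hat I_j-\bar I$ achieves the same thing and makes the role of the ordering hypothesis $a_j\le a_{j^*}$ more transparent; the paper's decomposition is slightly slicker but no more informative. Two minor remarks: your parenthetical about $j=j^*$ has the logic inverted (the left side vanishes while the right side does \emph{not} unless $a_{j^*}\in\{0,1\}$); and your phrase ``in particular whenever $j\ge i$'' is imprecise, since $j\ge i$ does not force $\hat I_j-\bar I\le 0$ but rather $a_j=0$, which already kills the second term --- the conclusion is unaffected.
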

\begin{proof} Consider,
    \begin{align*}
        KT &\left(\hat{\partial}L_{j(i)^{\ell,*},i}^{\ell}-\hat{\partial}L_{j,i}^{\ell} \right)\\
        & = \attn_{j(i)^{\ell,*},i}^\ell \sum_{k=1}^{i-1}\attn_{k,i}^\ell (\hat{I}^\ell_f(S_i; S_{j(i)^{\ell,*}})-\hat{I}^\ell_f(S_i; S_k))-\attn^{\ell}_{j,i} \sum_{k=1}^{i-1}\attn_{k,i}^\ell(\hat{I}^\ell_f(S_i; S_j)-\hat{I}^\ell_f(S_i; S_k))\\ 
        % & = \attn_{j(i)^{\ell,*},i}^\ell \hat{I}^\ell_f(S_i; S_{j(i)^{\ell,*}})-\attn^{\ell}_{j,i}\hat{I}^\ell_f(S_i; S_j)
        % -\attn_{j(i)^{\ell,*},i}^\ell \sum_{k=1}^{i-1}\attn_{k,i}^\ell \hat{I}^\ell_f(S_i; S_k)+\attn^{\ell}_{j,i} \sum_{k=1}^{i-1}\attn_{k,i}^\ell \hat{I}^\ell_f(S_i; S_k)\\
        % & = \attn_{j(i)^{\ell,*},i}^\ell \hat{I}^\ell_f(S_i; S_{j(i)^{\ell,*}})- \attn_{j,i}^\ell \hat{I}^\ell_f(S_i; S_{j(i)^{\ell,*}})+\attn_{j,i}^\ell \hat{I}^\ell_f(S_i; S_{j(i)^{\ell,*}})-\attn^{\ell}_{j,i}\hat{I}^\ell_f(S_i; S_j)
        % \\
        % & \quad -\attn_{j(i)^{\ell,*},i}^\ell\sum_{k=1}^{i-1}\attn_{k,i}^\ell \hat{I}^\ell_f(S_i; S_k)+\attn^{\ell}_{j,i} \sum_{k=1}^{i-1}\attn_{k,i}^\ell \hat{I}^\ell_f(S_i; S_k)\\
        & = (\attn_{j(i)^{\ell,*},i}^\ell-\attn_{j,i}^\ell)(\hat{I}^\ell_f(S_i; S_{j(i)^{\ell,*}})-\sum_{k=1}^{i-1}\attn_{k,i}^\ell \hat{I}^\ell_f(S_i; S_k))+\attn_{j,i}^\ell (\hat{I}^\ell_f(S_i; S_{j(i)^{\ell,*}})-\hat{I}^\ell_f(S_i; S_j))\\
        & \geq (\attn_{j(i)^{\ell,*},i}^\ell-\attn_{j,i}^\ell)(1-\attn_{j(i)^{\ell,*},i}^\ell)(\Delta^\ell(i)-2\epsilon) + \attn_{j,i}^\ell(\Delta^\ell(i)-2\epsilon)\\
        & \geq \frac{1}{2}\attn_{j(i)^{\ell,*},i}^\ell(1-\attn_{j(i)^{\ell,*},i}^\ell)\Delta^\ell(i),
        \end{align*}
        where the first inequality follows from the definition of $\Delta^\ell(i)$ and \Cref{assp: est of F}, and the second inequality follows from the facts that $\attn_{j(i)^{\ell,*},i}^\ell>0$, $\Delta^\ell(i) > 0$, and $\epsilon \leq \frac{1}{4}\Delta^\ell(i)$.
\end{proof}

When there exists an estimation error, the gradients for the root nodes no longer stay at $0$. It thus necessitates that we bound the difference of gradients at the root nodes. 
\begin{lemma}\label{lem: grad comp r}
   Under \Cref{assp: est of F}, for any root node $i$, we have 
    \begin{align*}
         \max_{j<i}\partial\hat{L}_{j,i}^\ell-\min_{j<i}\partial\hat{L}_{j,i}^\ell \leq \frac{4\epsilon}{KT}\max_{j<i}\attn_{j,i}^\ell.
    \end{align*}    
\end{lemma}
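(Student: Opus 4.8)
The plan is to use the formula for the estimated gradient established just before this lemma, namely
\begin{align*}
    \hat{\partial}L_{j,i}^\ell= \frac{1}{KT}\attn_{j,i}^\ell \sum_{k=1}^{i-1}\attn_{k,i}^\ell \big(\hat{I}^\ell_f(S_i; S_j)-\hat{I}^\ell_f(S_i; S_k)\big),
\end{align*}
together with the key structural fact that $i$ is a \emph{root node}. For a root node $i$, the random variable $S_i$ is (conditionally on $\Pi$) independent of every $S_j$ with $j<i$, so by the ``Specialization to the standard MI'' property (\Cref{Prop: 1-2}, part 2) the population KG-MI vanishes: $\tilde{I}^\ell_f(S_i; S_j)=0$ for all $j<i$. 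Combined with \Cref{assp: est of F}, this gives $|\hat{I}^\ell_f(S_i;S_j)|\le\epsilon$ for every $j<i$, hence $|\hat{I}^\ell_f(S_i;S_j)-\hat{I}^\ell_f(S_i;S_k)|\le 2\epsilon$ for all $j,k<i$.

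The first step is therefore to bound a single entry $|\hat{\partial}L_{j,i}^\ell|$. Since $\sum_{k=1}^{i-1}\attn_{k,i}^\ell = 1$ (the attention weights on the admissible positions sum to one, as $\attn_{k,i}^\ell=0$ for $k\ge i$) and each $|\hat{I}^\ell_f(S_i; S_j)-\hat{I}^\ell_f(S_i; S_k)|\le 2\epsilon$, the inner sum is bounded in absolute value by $2\epsilon$, so
\begin{align*}
    \big|\hat{\partial}L_{j,i}^\ell\big| \le \frac{2\epsilon}{KT}\,\attn_{j,i}^\ell \le \frac{2\epsilon}{KT}\max_{j<i}\attn_{j,i}^\ell.
\end{align*}
The second step is simply to apply the triangle inequality to the difference of the max and the min over $j<i$:
\begin{align*}
    \max_{j<i}\hat{\partial}L_{j,i}^\ell - \min_{j<i}\hat{\partial}L_{j,i}^\ell \le \big|\max_{j<i}\hat{\partial}L_{j,i}^\ell\big| + \big|\min_{j<i}\hat{\partial}L_{j,i}^\ell\big| \le \frac{4\epsilon}{KT}\max_{j<i}\attn_{j,i}^\ell,
\end{align*}
which is exactly the claimed bound.

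There is no serious obstacle here: the lemma is an elementary consequence of the gradient formula plus the vanishing of the population KG-MI at root nodes. The only point requiring a moment's care is justifying $\tilde{I}^\ell_f(S_i;S_j)=0$ for $j<i$ when $i$ is a root — this follows because, in the generative process of \Cref{subs: data model}, a root node's value depends only on $M_\pi$ and the other root nodes' values, and the first $K$ nodes (the roots, by the WLOG assumption) are drawn jointly from $M_\pi$ independently of everything downstream; so $S_i \perp S_j \mid \Pi$ for any $j<i$, invoking \Cref{Prop: 1-2}. One should also note the normalization $\sum_{k<i}\attn_{k,i}^\ell=1$ coming from the softmax with the causal mask. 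Everything else is bookkeeping with the triangle inequality.
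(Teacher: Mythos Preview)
Your proof is correct and, in fact, a bit more direct than the paper's. Both arguments rest on the same two ingredients: the explicit gradient formula and the fact that for a root node $i$ one has $\tilde{I}^\ell_f(S_i;S_j)=0$ for all $j<i$, so that \Cref{assp: est of F} yields $|\hat{I}^\ell_f(S_i;S_j)|\le\epsilon$ and hence $|\hat{I}^\ell_f(S_i;S_j)-\hat{I}^\ell_f(S_i;S_k)|\le 2\epsilon$. From there the two proofs diverge slightly. The paper mirrors the decomposition used in \Cref{lem: grad comp n-r}: it fixes $j_1=\arg\max_j\hat{\partial}L_{j,i}^\ell$ and $j_2=\arg\min_j\hat{\partial}L_{j,i}^\ell$, rewrites the difference as
\[
\frac{1}{KT}\Big[(\attn_{j_1,i}^\ell-\attn_{j_2,i}^\ell)\Big(\hat{I}^\ell_f(S_i;S_{j_1})-\sum_{k<i}\attn_{k,i}^\ell\hat{I}^\ell_f(S_i;S_k)\Big)+\attn_{j_2,i}^\ell\big(\hat{I}^\ell_f(S_i;S_{j_1})-\hat{I}^\ell_f(S_i;S_{j_2})\big)\Big],
\]
and then bounds each piece separately. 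You instead bound every single $|\hat{\partial}L_{j,i}^\ell|$ by $\tfrac{2\epsilon}{KT}\attn_{j,i}^\ell\le \tfrac{2\epsilon}{KT}\max_{j<i}\attn_{j,i}^\ell$ and finish with the triangle inequality. Your route is shorter and avoids the algebraic bookkeeping; the paper's route has the (purely expository) virtue of paralleling the non-root lemma. Both yield exactly the same constant $4\epsilon/(KT)$.

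One small remark: your justification that $S_i\perp S_j\mid\Pi$ for root $i$ and $j<i$ is worded a bit loosely---being ``drawn jointly from $M_\pi$'' does not by itself force independence of coordinates. The paper simply asserts $\tilde{I}^\ell_f(S_i;S_j)=0$ for root $i$ (it is used verbatim in the proofs of both this lemma and \Cref{Thm: loss convergence}), so you are invoking the same fact the paper takes as given; just be aware your one-line explanation of \emph{why} it holds is not quite the argument.
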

\begin{proof}
If $i$ is a root node, denote $j_1=\arg\max_{j<i}\partial\hat{L}_{j,i}^\ell$ and $j_2=\arg\min_{j<i}\partial\hat{L}_{j,i}^\ell$. Then similar to  the proof of \Cref{lem: grad comp n-r}, we have
    \begin{align*}
        &\max_{j<i}\partial\hat{L}_{j,i}^\ell-\min_{j<i}\partial\hat{L}_{j,i}^\ell =\partial\hat{L}_{j_1,i}^\ell-\partial\hat{L}_{j_2,i}^\ell\\
        & = \frac{1}{KT}\left((\attn_{j_1,i}^\ell-\attn_{j_2,i}^\ell)\bigg(\hat{I}^\ell_f(S_i; S_{j_1})-\sum_{k=1}^{i-1}\attn_{k,i}^\ell \hat{I}^\ell_f(S_i; S_k)\bigg)+\attn_{j_2,i}^\ell \big(\hat{I}^\ell_f(S_i; S_{j_1})-\hat{I}^\ell_f(S_i; S_{j_2})\big)\right)\\
        & \leq \frac{1}{KT}\left(\left|\attn_{j_1,i}^\ell-\attn_{j_2,i}^\ell\right|\left|\sum_{k\neq j_1}^{i-1}\attn_{k,i}^\ell .(\hat{I}^\ell_f(S_i; S_{j_1})-\hat{I}^\ell_f(S_i; S_k))\right|+\attn_{j_2,i}^\ell \cdot 2\epsilon\right)\\
        & \leq \frac{2}{KT}\left(\left|\attn_{j_1,i}^\ell-\attn_{j_2,i}^\ell\right|\left(1-\attn_{j_1,i}^\ell \right)\epsilon+\attn_{j_2,i}^\ell \cdot \epsilon\right)\\
        & \leq \frac{4\epsilon}{KT}\max_{j<i}\attn_{j,i}^\ell,
    \end{align*}
where the first and second inequalities follow from \Cref{assp: est of F} so that for any $j<i$, we have $|\hat{I}^\ell_f(S_i; S_j)|=|\hat{I}^\ell_f(S_i; S_j)-\tilde{I}^\ell_f(S_i; S_j)|\leq \epsilon$.
\end{proof}

\subsection{Attention Concentration: Proof of \Cref{Thm: Attention Concentration}}

\begin{proof}[Proof of \Cref{Thm: Attention Concentration}]
Since the elements of the proof of \Cref{Thm: Attention Concentration} are used in the proof of \Cref{Thm: loss convergence}, we first present the proof of \Cref{Thm: Attention Concentration} before that of  \Cref{Thm: loss convergence}.

We first consider \textbf{non-root nodes}. For non-root node $i$, denote $\delta^\ell_i(t)={Q}_{j(i)^{\ell,*},i}^\ell (t)-\max_{j \neq j(i)^{\ell,*}}{{Q}_{j,i}^\ell (t)}$. Recall that the gradient ascent update rule is  $Q_{j,i}^\ell(t)=Q_{j,i}^\ell(t-1)+\eta\partial L_{j,i}^\ell(t-1)$. Thus, if $\attn_{j,i}^\ell(t-1) \leq \attn_{j(i)^{\ell,*},i}^\ell(t-1)$, then we have 
\begin{align}
    \delta^\ell_i(t)-\delta^\ell_i(t-1) &= {Q}_{j(i)^{\ell,*},i}^\ell (t)-\max_{j \neq j(i)^{\ell,*}}{{Q}_{j,i}^\ell (t)}-\Big({Q}_{j(i)^{\ell,*},i}^\ell (t-1)-\max_{j \neq j(i)^{\ell,*}}{{Q}_{j,i}^\ell (t-1)}\Big)\nonumber\\
    & = ({Q}_{j(i)^{\ell,*},i}^\ell (t)-{Q}_{j(i)^{\ell,*},i}^\ell (t-1))-\Big(\max_{j \neq j(i)^{\ell,*}}{{Q}_{j,i}^\ell (t)}-\max_{j \neq j(i)^{\ell,*}}{{Q}_{j,i}^\ell (t-1)}\Big)\nonumber\\
    &\geq ({Q}_{j(i)^{\ell,*},i}^\ell (t)-{Q}_{j(i)^{\ell,*},i}^\ell (t-1))-\max_{j \neq j(i)^{\ell,*}}({{Q}_{j,i}^\ell (t)}-{{Q}_{j,i}^\ell (t-1)})\nonumber\\
    & = \eta \Big(\partial L_{j(i)^{\ell,*},i}^{\ell}(t-1)-\max_{j \neq j(i)^{\ell,*}}\partial L_{j,i}^{\ell}(t-1) \Big)\nonumber\\
    & \geq \frac{\eta}{2KT} \attn_{j(i)^{\ell,*},i}^\ell(t-1)\cdot (1-\attn_{j(i)^{\ell,*},i}^\ell(t-1))\Delta^\ell(i), \label{Eq: Thm2: lower bound incrm}
\end{align}
where the last inequality follows from \Cref{lem: grad comp n-r}. 

Hence $\delta^\ell_i(t)\geq\delta^\ell_i(t-1)\geq 0$. As a result, if $\attn_{j,i}^\ell(t-1) \leq \attn_{j(i)^{\ell,*},i}^\ell(t-1)$, then  ${Q}_{j(i)^{\ell,*},i}^\ell (t)\geq\max_{j \neq j(i)^{\ell,*}}{{Q}_{j,i}^\ell (t)}$ and $  \attn_{j(i)^{\ell,*},i}^\ell(t) \geq \attn_{j,i}^\ell(t)$. Notice that $\attn_{j,i}^\ell(0) = \attn_{j(i)^{\ell,*},i}^\ell(0)$, as a result $\attn_{j,i}^\ell(t) \leq \attn_{j(i)^{\ell,*},i}^\ell(t)$ for all $t \geq 0$.
In addition,
\begin{align*}
    \attn_{j(i)^{\ell,*},i}^\ell(t) & = \frac{e^{{Q}_{j(i)^{\ell,*},i}^\ell(t)}}{\sum_{k=1}^{i-1} e^{Q^\ell_{k,i}(t)}}\\
    & = \frac{1}{\sum_{k=1}^{i-1} e^{Q_{k,i}^\ell(t)-Q^\ell_{j(i)^{\ell,*},i}(t)}}\\
    & \geq \frac{1}{\sum_{k\neq j(i)^{\ell,*}}^i e^{-\delta(t)}+1}\\ 
    & = \frac{e^{\delta_i^\ell(t)}}{i-1+e^{\delta_i^\ell(t)}}.
\end{align*}
Hence, the lower bound of $\attn_{j(i)^{\ell,*},i}^\ell(t)$ is monotonically increasing in $\delta_i^\ell(t)$.

We consider the dynamics into two  distinct phases:
    
\textbf{First phase: until $\attn_{j(i)^{\ell,*},i}^\ell(t)$ first hits $1/2$.}
Note that with the initialization of the all-zeros matrix, all  attention scores $\attn_{j,i}^\ell(0)$ are equal to $1/i$, and the increment $\delta_i^\ell(t)-\delta_i^\ell(t-1)$ of $\delta_i^\ell$ with each iteration   is always positive by \eqrefn{Eq: Thm2: lower bound incrm}. As a result, $\delta_i^\ell(t)$ is monotonically increasing in $t$. We now have that $\attn_{j(i)^{\ell,*},i}^\ell(t)$ is monotonically increasing in $t$ and  greater than $\attn_{j(i)^{\ell,*},i}^\ell(0)=\frac{1}{i}$. Furthermore, $\delta_i^\ell(1)-\delta_i^\ell(0)=\delta_i^\ell(1)\geq\frac{(i-1)\eta\Delta^\ell(i)}{2KTi^2}$.
Denote $\tau_1$ as the last iteration such that   $\attn_{j(i)^{\ell,*},i}^\ell(t)$ is no more than $1/2$, Mathematically, 
\begin{align*}
    \tau_1^{\ell,i} := \sup\Big\{t: \attn_{j(i)^{\ell,*},i}^\ell(t) \leq \frac{1}{2}\Big\}.
\end{align*}
We next upper bound $\tau_1^{\ell,i}$. Consider,
\begin{align}
    \delta_i^\ell(\tau_1^{\ell,i})&=\delta_i^\ell(0)+\sum_{t=1}^{\tau_1^{\ell,i}} \left(\delta_i^\ell(t)-\delta_i^\ell(t-1)\right) \nonumber\\
    & \overset{\RM{1}}{\geq} \sum_{t=1}^{\tau_1^{\ell,i}}\frac{\eta}{2KT} \attn_{j(i)^{\ell,*},i}^\ell(t-1)\cdot (1-\attn_{j(i)^{\ell,*},i}^\ell(t-1))\Delta^\ell(i)\nonumber\\
    & \overset{\RM{2}}{\geq} \frac{1}{4KT}\eta\Delta^\ell(i)\sum_{t=1}^{\tau_1^{\ell,i}}\attn_{j(i)^{\ell,*},i}^\ell(t-1)\geq \frac{\tau_1^{\ell,i}}{4KTi}\eta\Delta^\ell(i),\label{Eq: thm2-1}
\end{align}
where $\RM{1}$ follows from \eqrefn{Eq: Thm2: lower bound incrm}, and $\RM{2}$ follows from that $\attn_{j(i)^{\ell,*},i}^\ell(t) \leq \attn_{j(i)^{\ell,*},i}^\ell(\tau_1^{\ell,i}) \leq 1/2$ for all $t \leq \tau_1^{\ell,i}-1$.

Obvious, $\delta_i^\ell(\tau_1^{\ell,i}) \leq \log i$, otherwise
\begin{align*}
    \attn_{j(i)^{\ell,*},i}^\ell(\tau_1^{\ell,i}) \geq \frac{e^{\delta_i^\ell(\tau_1^{\ell,i})}}{i-1+e^{\delta_i^\ell(\tau_1^{\ell,i})}} >  \frac{i}{i-1+i} >\frac{1}{2},
\end{align*}
which contradicts the definition of $\tau^{\ell,i}_1$. As a result, $\tau_1^{\ell,i} \leq \frac{4KTi\log i }{\eta\Delta^\ell(i)}$. Let $\tau_1^{\ell,i}= \frac{4KTi\log i}{\eta\Delta^\ell(i)}$.

\textbf{Second Phase: $\attn_{j(i)^{\ell,*},i}^\ell(t)$ hits $1-\varepsilon_\mathrm{attn}$}.

Denote $\tau^{\ell,i}_2$ as the first  iteration such that $\attn_{j(i)^{\ell,*},i}^\ell(t)$ is larger than  $1-\varepsilon_\mathrm{attn}$. Mathematically, 
\begin{align*}
    \tau^{\ell,i}_2 := \inf\Big\{t: \attn_{j(i)^{\ell,*},i}^\ell(t) > 1-\varepsilon_\mathrm{attn}\Big\}.
\end{align*}
Similar to \eqrefn{Eq: thm2-1}, we have 
\begin{align*}
\delta_i^\ell(\tau^{\ell,i}_2-1)& =\delta_i^\ell(\tau^{\ell,i}_1)+\delta_i^\ell(\tau^{\ell,i}_2-1)-\delta_i^\ell(\tau^{\ell,i}_1)\\
& =\delta_i^\ell(\tau^{\ell,i}_1)+\sum_{t=\tau^{\ell,i}_1}^{\tau^{\ell,i}_2-2}\left(\delta_i^\ell(t+1)-\delta_i^\ell(t)\right)\\
& \geq \frac{\tau^{\ell,i}_1}{4KTi}\eta\Delta^\ell(i)+\sum_{t=\tau^{\ell,i}_1}^{\tau^{\ell,i}_2-2}\frac{\eta}{2KT} \attn_{j(i)^{\ell,*},i}^\ell(t-1)\cdot (1-\attn_{j(i)^{\ell,*},i}^\ell(t-1))\Delta^\ell(i)\\
& \geq \log i +\frac{\varepsilon_\mathrm{attn} (\tau^{\ell,i}_2-1-\tau^{\ell,i}_1)\eta \Delta^\ell(i)}{4KT},
\end{align*}
where the last inequality follows from \eqrefn{Eq: Thm2: lower bound incrm} and the result of Phase I that $\delta_i^\ell(\tau_1^{\ell,i}) \leq \log i$. 

Obviously, $\delta_i^\ell(\tau^{\ell,i}_2-1) \leq \log(\frac{i}{\varepsilon_\mathrm{attn}})$, otherwise, 
\begin{align*}
    \attn_{j(i)^{\ell,*},i}^\ell(\tau_2^{\ell,i}-1) \geq \frac{e^{\delta_i^\ell(\tau_2^{\ell,i})}}{i-1+e^{\delta_i^\ell(\tau_2^{\ell,i})}} >  \frac{\frac{i}{\varepsilon_\mathrm{attn}}}{i-1+\frac{i}{\varepsilon_\mathrm{attn}}} > \frac{1}{1+\varepsilon_\mathrm{attn}}>1-\varepsilon_\mathrm{attn},
\end{align*}
which contradicts   the definition of $\tau^{\ell,i}_2$.
As a result,
\begin{align*}
    \log i+\frac{\varepsilon_\mathrm{attn} (\tau^{\ell,i}_2-1-\tau^{\ell,i}_1)\eta \Delta^\ell(i)}{4KT} \leq \log\Big(\frac{i}{\varepsilon_\mathrm{attn}}\Big).
\end{align*}
As a result, $\tau^{\ell,i}_2-\tau^{\ell,i}_1 \leq \frac{4KT\log\frac{1}{\varepsilon_\mathrm{attn}}}{\varepsilon_\mathrm{attn}\eta\Delta^\ell(i)}+1$.
If we let $\tau^{\ell,i}_*=\frac{4KT\log\frac{1}{\varepsilon_\mathrm{attn}}}{\varepsilon_\mathrm{attn}\eta\Delta^\ell(i)}+\frac{4KTi\log i }{\eta\Delta^\ell(i)}+1$, by the monotonicity of $\attn_{j(i)^{\ell,*},i}^\ell(t)$ in $t$, we have for all $t \geq \tau^{\ell,i}_*$,
\begin{align*}
    \attn_{j(i)^{\ell,*},i}^\ell(t) > 1-\varepsilon_{\mathrm{attn}}.
\end{align*}

    Next, we consider \textbf{root nodes}. Let $r^\ell_i(t):=r(Q_i^\ell(t))=\max_j Q_{j,i}^\ell(t)-\min_j Q_{j,i}^\ell(t)$.
    We have 
    \begin{align*}
    \max_j\attn_{j,i}^\ell(t) & = \frac{e^{\max_j{Q}_{j,i}^\ell(t)}}{\sum_{k=1}^{i-1} e^{Q^\ell_{k,i}(t)}}\\
    & = \frac{e^{\max_j{Q}_{j,i}^\ell(t)-\min_j{Q}_{j,i}^\ell(t)}}{\sum_{k=1}^{i-1} e^{Q^\ell_{k,i}(t)-\min_j{Q}_{j,i}^\ell(t)}}\\
    & \leq \frac{e^{r^\ell_i(t)}}{e^{r^\ell_i(t)}+i-2}.
\end{align*}
Similarly,
\begin{align*}
    \min_j\attn_{j,i}^\ell(t) & = \frac{e^{\min_j{Q}_{j,i}^\ell(t)}}{\sum_{k=1}^{i-1} e^{Q^\ell_{k,i}(t)}}\\
    & = \frac{1}{\sum_{k=1}^{i-1} e^{Q^\ell_{k,i}(t)-\min_j{Q}_{j,i}^\ell(t)}}\\
    & \geq \frac{1}{(i-2)e^{r^\ell_i(t)}+1}.
\end{align*}
In addition,
\begin{align}
    r_i^\ell(t)-r_i^\ell(t-1) & =\max_j Q_{j,i}^\ell(t)-\min_j Q_{j,i}^\ell(t)-\Big(\max_j Q_{j,i}^\ell(t-1)-\min_j Q_{j,i}^\ell(t-1)\Big)\nonumber\\
    & \leq \max_j \left(Q_{j,i}^\ell(t)-Q_{j,i}^\ell(t-1)\right)-\min_j \left(Q_{j,i}^\ell(t)- Q_{j,i}^\ell(t-1)\right)\nonumber\\
    & \leq \max_j \left(\eta\partial\hat{L}_{j,i}^\ell(t-1)\right)-\min_j \left(\eta\partial\hat{L}_{j,i}^\ell(t-1)\right)\nonumber\\
    & \leq \frac{4\eta\epsilon}{KT}\max_{j} \attn_{j,i}^\ell, \label{Eq: A.14-1}
\end{align}
where the last inequality follows from \Cref{lem: grad comp r}.

Fix any $w<1$. Denote the first hitting time of $r_i^\ell(t)$ at $\log(1+w)$ as 
\begin{align*}
    t^{\ell,i}_*=\inf\big\{t: r_i^\ell(t)>\log(1+w)\big\}.
\end{align*}
Case (1): If $t^{\ell,i}_* > \tau^{\ell,i}_*$, then for all $t \leq\tau^{\ell,i}_*$, $r_i^\ell(t) \leq \log(1+w)$. As a result,
\begin{align}
    &\max_j\attn_{j,i}^\ell(t) \leq \frac{1+w}{1+w+i-2}\leq \frac{1+w}{i-1},\label{Eq: thm2-2}\\
    & \min_j\attn_{j,i}^\ell(t) \geq \frac{1}{(1+w)(i-2)+1}\geq \frac{1-w}{i-1}.\label{Eq: thm2-3}
\end{align}
Combine  Eq.~\eqref{Eq: thm2-2} and Eq.~\eqref{Eq: thm2-3}, we have the following inequality for any $j$:
\begin{align}
    \left|\attn_{j,i}^\ell(t)-\frac{1}{i-1 } \right| \leq\frac{w}{i-1}. \label{Eq: thm2-4}
\end{align}
% \leq \frac{8\eta \tau^{\ell,i}_* \epsilon}{KT(i-1)^2}

\noindent Case (2): If $t^{\ell,i}_* <\tau^{\ell,i}_*$, we can obtain an upper bound for $w$ as follows.

For all $t \leq t^{\ell,i}_*<\tau^{\ell,i}_*$, we have 
\begin{align*}
    \max_j\attn_{j,i}^\ell(t) \leq \frac{1+w}{1+w+i-2}\leq \frac{1+w}{i-1}.
\end{align*}
So 
\begin{align*}
    \log(1+w) \leq r_i^\ell(t) \leq t \cdot \frac{4 \eta \epsilon}{KT} \cdot \frac{1+w}{i-1}\leq t^{\ell,i}_* \cdot \frac{4 \eta \epsilon}{KT} \cdot \frac{1+w}{i-1},
\end{align*}
where the second inequality follows from \eqrefn{Eq: A.14-1}.

By using the fact that $w<1$ and $\log(1+w)\geq \frac{w}{2}$, we further have 
\begin{align*}
    w < \frac{16\eta \tau^{\ell,i}_* \epsilon}{KT(i-1)}
\end{align*}
As a result, if we let $w=\frac{16\eta \tau^{\ell,i}_* \epsilon}{KT(i-1)}$, then Case (2) will not occur for such a choice of $w$. In other words, if we choose $w=\frac{16\eta \tau^{\ell,i}_* \epsilon}{KT(i-1)}$,   Case~(1) holds, and consequently, \eqrefn{Eq: thm2-4} also holds. 
Finally, we have
\begin{align*}
    \bigg|\attn_{j,i}^\ell(t)-\frac{1}{i-1}\bigg| \leq \frac{w}{i-1} \leq \frac{16\eta \tau^{\ell,i}_* \epsilon}{KT(i-1)^2}= 64\left(\frac{1}{ \varepsilon_\mathrm{attn} }\log{\frac{1}{\varepsilon_\mathrm{attn}}} +{i\log i }\right) \cdot \frac{1}{(i-1)^2} \cdot \frac{\epsilon}{\Delta^\ell(i)},
\end{align*}
as desired.
\end{proof}

\subsection{Objective Convergence: Proof of \Cref{Thm: loss convergence}}
% \begin{lemma}[Objective Convergence]\label{lem: real-obj-converge}
%     Denote $L^*=\max_{\theta}L(\theta)$ and $\Delta:=\min_{\ell \in [K],i\in[T]}\Delta^\ell(i)$. If $\epsilon<\frac{1}{4}\Delta$, for any error $\varepsilon_\mathrm{attn}$, let $\tau_*=\frac{4KT\log{\frac{1}{\varepsilon_\mathrm{attn}}}}{\varepsilon_\mathrm{attn}\eta\Delta}+\frac{4KT^2\log(T)}{\eta\Delta}$, for any iteration $t\geq \tau_*$,
%     \begin{align*}
%         L(\theta(t))-L^* \leq \max\{I_{\max}\varepsilon_{\mathrm{attn}},\epsilon\}.
%     \end{align*}
% \end{lemma}

\begin{proof}[Proof of \Cref{Thm: loss convergence}]
    Denote $\theta^*= \arg\max_\theta L(\theta)$. We have:
    \begin{enumerate}
        \item for non-root node $i$, $\attn_{j(i)^{\ell,*},i}(\theta^*)=1$, and $\attn_{j,i}(\theta^*)=0$ for $j \neq j(i)^{\ell,*}$.
        \item for root node $i$, any $\theta$ satisfying $\sum_{j=1}^{i-1}\attn_{j,i}^\ell(\theta)=1$ is optimal as $\tilde{I}^\ell_f(S_i; S_j)=0$ for $j<i$. 
    \end{enumerate}
    Then $L^*=\frac{1}{KT}\sum_{\ell=1}^K\sum_{i,j \in [T]} \tilde{I}^\ell_f(S_i; S_j) \attn_{j,i}^{\ell}(\theta^*)= \frac{1}{KT}\sum_{\ell=1}^K\sum_{i \in \mathcal{R}} \tilde{I}_f^{\ell}(S_i;S_{j(i)^{\ell,*}})$.
    
    Denote the set of root nodes as $\mathcal{R}$. Then, we have 
\begin{align*}
    L^*-L(\theta(t))& =  \frac{1}{KT}\sum_{\ell=1}^K\sum_{i,j \in [T]} \tilde{I}^\ell_f(S_i; S_j) (\attn_{j,i}^{\ell}(\theta^*)-\attn_{j,i}^{\ell}(\theta(t)))\\
    & \leq \frac{1}{KT}\sum_{\ell=1}^K\left(\sum_{i \notin \mathcal{R}} \bigg(\tilde{I}_f^{\ell}(S_i;S_{j(i)^{\ell,*}})\varepsilon
    _\mathrm{attn}-\sum_{j\neq j(i)^{\ell,*}}\tilde{I}^\ell_f(S_i; S_j) \attn_{j,i}^{\ell}(\theta(t))\bigg)+\sum_{i \in \mathcal{R}}\epsilon\right)\\
    & \leq \frac{1}{KT}\sum_{\ell=1}^K\left(\sum_{i \notin \mathcal{R}} \tilde{I}_f^{\ell}(S_i;S_{j(i)^{\ell,*}})\varepsilon
    _\mathrm{attn}+\sum_{i \in \mathcal{R}}\epsilon\right) \leq \max\{I_{\max}\varepsilon_{\mathrm{attn}},\epsilon\} ,
\end{align*}
which is the desired convergence result.
\end{proof}

\subsection{Attention Concentration for KL-KG-MI: Proof of \Cref{coro: KL conver}}\label{appd: subs: KL}
\begin{proof}[Proof of \Cref{coro: KL conver}]
Now we consider the convex function $f(x)=x\log x$ in which case the $f$-KG-MI particularizes to the KL-KG-MI. Recall that, for each node $i$, the marginal distribution of the random variable $S_i$ is $\mu_\Pi$. As a result, $\tilde{I}^\ell_\mathrm{KL}(S_i; S_j)$  can be expressed as 
\begin{align*}
    \tilde{I}^\ell_\mathrm{KL}(S_i; S_j)=\Eb_{\Pi\sim P_\Pi}\left[\Eb_{S_j\sim \mu_{\Pi}(\cdot),S_i\sim\Pi^\ell(\cdot|s)}\left[\log\frac{P_{S_i,S_j|\Pi}(S_i,S_j)}{\mu_{\Pi}(S_i)\mu_{\Pi}(S_j)}\bigg|\Pi\right]\right]
\end{align*}
    As a result of \Cref{Thm: Attention Concentration}, we only need to show that for the KL-KG-MI, for each $\ell$ and $i$, $\arg\max_{j \in [T]}$\hspace{0.1pt} $\tilde{I}^{\ell}_{\mathrm{KL}}(S_i;S_j) = p(i)^\ell$. 
        \begin{align*}
        \operatorname*{argmax}_{j} \tilde{I}^{\ell}_\mathrm{KL}(S_i;S_j) 
          &= \operatorname*{argmax}_{j}\Eb_{\Pi\sim P_\Pi}\left[\Eb_{S_j\sim \mu_{\Pi}(\cdot),S_i\sim\Pi^\ell(\cdot|S_j)}\left[\log\frac{\Pb(S_i,S_j)}{\mu_{\Pi}(S_i)\mu_{\Pi}(S_j)}\bigg|\Pi\right]\right]\\
        &  = \operatorname*{argmin}_{j} - \Eb_{\Pi\sim P_\Pi}\left[\Eb_{S_j\sim\mu_{\Pi}(\cdot),S_i\sim\Pi^\ell(\cdot|S_j)}\left[\log\frac{\mu_X(S_j)\Pi^\ell(S_i|S_j)}{ P_{S_i,S_j|\Pi}(S_i,S_j)}\bigg|\Pi\right]\right]\\
        &  
        =  \operatorname*{argmin}_{j} \Eb_{\Pi\sim P_\Pi}[ D_\mathrm{KL}(\mu_\Pi(\cdot)\cdot\Pi^\ell(\cdot|\cdot)\| P_{S_i,S_j|\Pi}(S_i=\cdot,S_j=\cdot))]\\
        & 
        =  j\qquad \mbox{s.t.}\quad \mu_{\Pi}(\cdot)\Pi^\ell(\cdot|\cdot) = P_{S_i,S_j|\Pi}(S_i=\cdot,S_j=\cdot)\\
        & =  p(i)^\ell.
    \end{align*}
    We observe that in the case of (KL) mutual information, identifying the node $S_j$ maximizing $\tilde{I}^\ell_{\mathrm{KL}}(S_i; S_j)$ is equivalent to identifying the node $S_j$ whose joint distribution with $S_i$ is closest to $P_{S_i|\Pi}(\cdot)\,\Pi^\ell(\cdot\!\mid\!\cdot)$ in the sense of the KL divergence.
    \end{proof}

\end{appendices}

\bibliography{ref}
\bibliographystyle{ieeetr}

\end{document}